\documentclass[11pt]{article}

\usepackage[margin=1in]{geometry}
\usepackage{amsmath,amssymb,amsthm,mathtools}
\usepackage{array}
\usepackage{arydshln}
\usepackage{booktabs}
\usepackage{caption}
\usepackage{enumitem}
\usepackage{float}
\usepackage{algorithm}
\usepackage{algpseudocode}
\usepackage[authoryear,round]{natbib}
\usepackage[colorlinks=true,linkcolor=blue,citecolor=blue,urlcolor=blue]{hyperref}
\usepackage{aliascnt}
\usepackage[nameinlink,noabbrev]{cleveref}
\usepackage{microtype}
\usepackage[most]{tcolorbox}

\newtheorem{theorem}{Theorem}[section]

\newaliascnt{lemma}{theorem}
\newtheorem{lemma}[lemma]{Lemma}
\aliascntresetthe{lemma}

\newaliascnt{proposition}{theorem}
\newtheorem{proposition}[proposition]{Proposition}
\aliascntresetthe{proposition}

\newaliascnt{corollary}{theorem}
\newtheorem{corollary}[corollary]{Corollary}
\aliascntresetthe{corollary}

\theoremstyle{definition}
\newaliascnt{definition}{theorem}
\newtheorem{definition}[definition]{Definition}
\aliascntresetthe{definition}

\newaliascnt{remark}{theorem}

\aliascntresetthe{remark}

\newaliascnt{example}{theorem}

\aliascntresetthe{example}

\crefname{theorem}{Theorem}{Theorems}
\crefname{lemma}{Lemma}{Lemmas}
\crefname{proposition}{Proposition}{Propositions}
\crefname{corollary}{Corollary}{Corollaries}
\crefname{definition}{Definition}{Definitions}
\crefname{remark}{Remark}{Remarks}
\crefname{example}{Example}{Examples}
\crefname{section}{Section}{Sections}
\crefname{figure}{Figure}{Figures}
\crefname{algorithm}{Algorithm}{Algorithms}

\newcommand{\Ftwo}{\mathbb F_2}
\newcommand{\cX}{\mathcal{X}}
\newcommand{\cY}{\mathcal{Y}}
\newcommand{\cE}{\mathcal{E}}
\newcommand{\cA}{\mathcal{A}}
\newcommand{\cF}{\mathcal{F}}
\newcommand{\cH}{\mathcal{H}}
\newcommand{\cG}{\mathcal{G}}
\newcommand{\cT}{\mathcal{T}}
\newcommand{\cB}{\mathcal{B}}
\newcommand{\cU}{\mathcal{U}}
\newcommand{\Roll}{\operatorname{Roll}}
\newcommand{\ParDim}{\operatorname{ParDim}}
\newcommand{\DSdim}{\operatorname{DSdim}}
\newcommand{\VCdim}{\operatorname{VCdim}}
\newcommand{\Ldim}{\operatorname{Ldim}}
\newcommand{\Monline}{M_{\mathrm{online}}}
\newcommand{\dens}{\operatorname{dens}}
\newcommand{\codim}{\operatorname{codim}}
\newcommand{\supp}{\operatorname{supp}}
\newcommand{\Blk}{\operatorname{Blk}}
\newcommand{\Ch}{\operatorname{Ch}}
\newcommand{\PAC}{\operatorname{PAC}}
\newcommand{\halt}{\mathtt{halt}}
\newcommand{\eps}{\epsilon}
\algrenewcommand\algorithmicrequire{\textbf{Input:}}
\algrenewcommand\algorithmicensure{\textbf{Output:}}
\algtext*{EndWhile}

\newtcolorbox{questionbox}[1][]{
  enhanced,
  colback=white,
  colframe=black!45,
  boxrule=0.6pt,
  arc=2pt,
  left=6pt,
  right=6pt,
  top=6pt,
  bottom=6pt,
  fonttitle=\bfseries,
  coltitle=black,
  title=#1
}

\newtcolorbox{examplebox}[1][]{
  enhanced,
  breakable,
  colback=white,
  colframe=black!35,
  boxrule=0.5pt,
  arc=2pt,
  left=6pt,
  right=6pt,
  top=5pt,
  bottom=5pt,
  fonttitle=\bfseries,
  coltitle=black,
  title=#1
}

\title{The Optimal Sample Complexity of Learning Autoregressive Chain-of-Thought}
\author{Zhiyuan Li\\Toyota Technological Institute at Chicago\\\texttt{zhiyuanli@ttic.edu}}
\date{}

\begin{document}
\maketitle

\begin{abstract}
We prove that, in the realizable PAC setting, the sample complexity of
exact-trace learning for full autoregressive Chain-of-Thought traces is upper
bounded by the standard multiclass rate of the local next-token class, where
this rate is governed by the Daniely--Shalev-Shwartz dimension.  Under
exact-trace loss, one wrong action makes the whole trace incorrect;
nevertheless, for every stopping rule \(\halt\) and every pointwise
\(\halt\)-halting local class \(\cH\),
\[
        n_{\PAC}^{\varepsilon,\delta}(\Roll_\halt(\cH))
        =
        O\!\left(
        \frac{\DSdim(\cH)+\log(1/\delta)}{\varepsilon}
        \right),
\]
with no dependence on rollout length.  The dependence on \(\DSdim(\cH)\) is
worst-case optimal, since one-step stopping recovers ordinary multiclass
learning of \(\cH\).

The proof introduces parity dimension, a rollout-stable refinement of DS
dimension based on even pseudo-cubes.  It controls one-inclusion density via a
low-coordinate spanning theorem on finite restrictions and, unlike DS
dimension itself, does not increase under autoregressive rollout.  We also
show why this detour is necessary: DS dimension can increase under rollout.
\end{abstract}

\section{Introduction}
\label{sec:introduction}

Chain-of-Thought supervision exposes more than a final answer: it gives the
intermediate tokens, actions, or reasoning steps along a supervised trace
\citep{Joshi2025,HannekeMehalelMoran2026}.
In an autoregressive model, these coordinates are generated by a single shared
local rule.  The rule is queried at the current state, its action is appended
to the transcript, and the process repeats until a stopping condition is met.
Under exact-trace loss, the learner is correct only if the entire generated
trace is correct.  One wrong action makes the whole trace wrong.

At first sight, this all-or-nothing loss over a variable-length sequence should
increase sample complexity.  The label space consists of complete action
strings, and exact correctness requires every coordinate of the generated trace
to be right.  The coordinates, however, are not chosen independently: they are
produced by repeatedly querying the same local next-action rule.  The central
question is whether this shared autoregressive structure completely removes
the apparent statistical cost of predicting an entire trace.

\paragraph{Problem formulation.}
We write \(\Roll_\halt(\cH)\) for the complete-trace class obtained by rolling
out a local next-action class \(\cH\) under a deterministic stopping rule
\(\halt\).  Formally, the state space has aligned form
\(\cX=\cE\times\cA^*\), the local class is \(\cH\subseteq \cA^\cX\), and
\(\halt\) decides when to stop from the initial state and the emitted suffix.
The environment coordinate \(\cE\) stores exogenous context that can affect the
future rollout but is not necessarily emitted as an action string.  It may
include the prompt, task state, tool observations in interactive language-model
agents such as \citet{Yao2023ReAct}, or visual context represented by
multimodal encoders such as \citet{Radford2021CLIP}.  The rollout model is
formalized in \Cref{def:aligned-state,def:pointwise-halting,def:rollout-class}.
Exact-trace learning is ordinary multiclass learning of
\(\Roll_\halt(\cH)\subseteq(\cA^*)^\cX\) by \Cref{prop:exact-trace-is-rollout},
whose labels are complete action strings.

\paragraph{Existing routes and the optimal-rate question.}
For ordinary multiclass PAC learning, the sharp realizable benchmark is the
Daniely--Shalev-Shwartz dimension \(\DSdim\), recalled in
\Cref{def:ds-dimension}.  Since exact-trace learning is multiclass learning of
\(\Roll_\halt(\cH)\), the general multiclass theory already characterizes the
rollout problem in terms of \(\DSdim(\Roll_\halt(\cH))\).  Thus the question is
not whether the rollout class is PAC learnable, but how its complexity compares
with that of the local next-action class \(\cH\).

Indeed, if the stopping rule stops after one action, then
\(\Roll_\halt(\cH)\) is just \(\cH\), with labels written as length-one
strings.  Therefore any theorem stated in terms of the local class must in
general pay at least the ordinary multiclass complexity of \(\cH\), which was
recently proved by \citet{Pabbaraju2026} to be the local-class rate
\[
        O\!\left(
        \frac{\DSdim(\cH)+\log(1/\delta)}{\varepsilon}
        \right).
\]
In the binary case this benchmark is the usual VC-dimension rate.

Existing work gives several routes toward learning autoregressive
Chain-of-Thought traces, summarized in \Cref{tab:cot-rates}, but none gives
this sharp local PAC rate.  The proper trace-consistency route of
\citet{Joshi2025} gives a PAC bound with logarithmic dependence on the rollout
length \(T\), and hence is not the sharp local PAC rate.  The compression route
of \citet{HannekeMehalelMoran2026} removes explicit \(T\)-dependence, but pays
a larger complexity factor involving \(\VCdim(\cH)\VCdim^\star(\cH)\), up to
polylogarithmic factors.  The online route is also length-independent:
complete-rollout mistakes can be charged to local next-action mistakes,
yielding bounds in terms of \(\Ldim(\cH)\); see
\citet{DoronAradMehalelMossel2026} and
\citet{BalcanBlumFragkiaLiSharma2026}.  For completeness,
\Cref{app:online-comparison} restates this online reduction in the notation of
the present setup.  However, \(\Ldim\) is an online complexity parameter and
can be much larger than the realizable PAC parameter \(\DSdim\), indeed
infinite even when \(\DSdim\) is finite, as for threshold classes.  Thus the
online route does not settle the PAC question.  Since exact-trace prediction is
stronger than final-token prediction, a positive answer to our question would
in particular resolve the optimal-rate question raised by
\citet{HannekeMehalelMoran2026} for the corresponding CoT-supervision setting.

\begin{questionbox}[Question]
In the realizable PAC setting, is learning full autoregressive
Chain-of-Thought traces statistically as easy as learning the local next-token
rule that generates them?
\end{questionbox}

\begin{table}[t]
\centering
\setlength{\tabcolsep}{4pt}
\renewcommand{\arraystretch}{1.15}
\setlength{\dashlinedash}{1.2pt}
\setlength{\dashlinegap}{2.2pt}
\begin{tabular}{@{}
  >{\raggedright\arraybackslash}m{0.40\textwidth}
  >{\centering\arraybackslash}m{0.54\textwidth}
  @{}}
\toprule
Learning rule & PAC rate for exact-trace autoregressive rollout \\
\midrule
Proper trace-consistency / ConsCoT\\[-1pt]
{\scriptsize \citet{Joshi2025}}
&
\(O\!\bigl((\VCdim(\cH)\log T\log(1/\varepsilon)+\log(1/\delta))/\varepsilon\bigr)\)
\tabularnewline \hdashline
Compression-based CoT learner\\[-1pt]
{\scriptsize \citet{HannekeMehalelMoran2026}}
&
\(\widetilde O\!\bigl((\VCdim(\cH)\VCdim^\star(\cH)\log(1/\varepsilon)+\log(1/\delta))/\varepsilon\bigr)\)
\tabularnewline \hdashline
Online reduction \(+\) online-to-batch\\[-1pt]
{\scriptsize \citet{DoronAradMehalelMossel2026};
\citet{BalcanBlumFragkiaLiSharma2026};
\hyperref[app:online-comparison]{Appendix~\ref*{app:online-comparison}}}
&
\(O\!\bigl((\Ldim(\cH)\log(1/\varepsilon)+\log(1/\delta))/\varepsilon\bigr)\)
\tabularnewline \hdashline
Standard one-inclusion \(+\) transductive-to-batch conversion\\[-1pt]
{\scriptsize (\textbf{This paper}; \Cref{thm:pac-bound})}
&
\(O\!\bigl((\DSdim(\cH)+\log(1/\delta))/\varepsilon\bigr)\)
\tabularnewline \hdashline
Any learning rule / lower bound\\[-1pt]
{\scriptsize one-step stopping; \Cref{cor:worst-case-optimality} \(+\) DS lower bound
\citep{DanielyShalev2014,Pabbaraju2026}}
&
\(\Omega\!\bigl((\DSdim(\cH)+\log(1/\delta))/\varepsilon\bigr)\)
\\
\bottomrule
\end{tabular}
{\captionsetup{font=scriptsize}
\caption{Comparison routes for exact-trace autoregressive Chain-of-Thought
learning.  Here \(T\) denotes a rollout-length bound when such a bound is
assumed, and \(\VCdim^\star\) denotes dual VC dimension.  The
\(\widetilde O\) in the second row hides polylogarithmic factors in
\(\VCdim(\cH)\) and \(\VCdim^\star(\cH)\).}
\label{tab:cot-rates}}
\end{table}

\paragraph{Main result.}
We answer this question affirmatively.  In fact, the proof gives a slightly
sharper bound in terms of parity dimension, a parameter introduced below.

\begin{theorem}[Autoregressive exact-trace PAC bound]
\label{thm:pac-bound}
Let \(\cE\) be an environment space, \(\cA\) an action space,
\(\cX=\cE\times\cA^*\), and
\(\halt:\cX\times\cA^*\to\{0,1\}\) a stopping rule.  If
\(\cH\subseteq \cA^\cX\) is pointwise \(\halt\)-halting, then
\[
        n_{\PAC}^{\varepsilon,\delta}(\Roll_\halt(\cH))
        =
        O\!\left(
        \frac{\ParDim(\cH)+\log(1/\delta)}{\varepsilon}
        \right)
        \le
        O\!\left(
        \frac{\DSdim(\cH)+\log(1/\delta)}{\varepsilon}
        \right).
\]
Moreover, the dependence on \(\DSdim(\cH)\) is worst-case optimal: one-step
stopping recovers ordinary realizable multiclass learning of \(\cH\), and hence
inherits the usual DS-dimension lower bound.
\end{theorem}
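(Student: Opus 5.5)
The proof splits into an upper bound via parity dimension and a lower bound via one-step stopping. For the upper bound I would chain four facts. The first two are properties of parity dimension that the abstract announces and that the paper establishes later.

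\textbf{Step 1 (exact-trace learning is multiclass learning).} By \Cref{prop:exact-trace-is-rollout}, exact-trace learning is ordinary realizable multiclass PAC learning of \(\Roll_\halt(\cH)\subseteq(\cA^*)^\cX\). It therefore suffices to bound the sample complexity of this single multiclass class.

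\textbf{Step 2 (rollout stability).} I would show \(\ParDim(\Roll_\halt(\cH))\le\ParDim(\cH)\). Suppose \(\Roll_\halt(\cH)\) contains an even pseudo-cube on points \(x_1,\dots,x_d\).
\begin{itemize}
\item Each rollout label is a complete action string. On a fixed finite sample, the pointwise halting assumption makes these strings finite.
\item The strings therefore determine finitely many queried states of the form \((e,\text{prefix})\).
\item Two rollouts that differ on some \(x_i\) first diverge at a unique queried state. At that state the local hypotheses differ.
\end{itemize}
I would use these first-divergence states to project the pseudo-cube down to a pseudo-cube of \(\cH\) on at least as many states. The evenness (parity) condition is exactly what must survive this projection, and that is why \(\ParDim\) is used rather than \(\DSdim\).

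\textbf{Step 3 (density bound).} I would use the low-coordinate spanning theorem to show that every finite restriction of a class with \(\ParDim\le d\) has one-inclusion graph density \(O(d)\). This yields an orientation with out-degree \(O(d)\), hence a transductive one-inclusion learner with expected leave-one-out error \(O(d/n)\).

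\textbf{Step 4 (conversion to PAC).} I would apply the standard transductive-to-batch conversion for realizable multiclass learners, as used by \citet{Pabbaraju2026} to remove the \(\log(1/\varepsilon)\) factor. Its input is a density bound of \(O(d)\) holding uniformly over all finite restrictions, and it outputs the rate \(O((d+\log(1/\delta))/\varepsilon)\). Taking \(d=\ParDim(\cH)\) from Step 2 gives the first bound. The inequality \(\ParDim(\cH)\le\DSdim(\cH)\) should be immediate: an even pseudo-cube is in particular a pseudo-cube, so any set witnessing parity dimension is DS-shattered.

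\textbf{Lower bound.} Take \(\halt\) that stops after exactly one action, and let \(\cE=\cX_0\) be any domain. Then \(\Roll_\halt(\cH)\) is isomorphic to \(\cH\) restricted to states \((e,\emptyset)\), with labels written as length-one strings. Any \(\cH_0\subseteq\cA^{\cX_0}\) extends to a pointwise halting \(\cH\) with the same DS dimension. The DS lower bound of \citet{DanielyShalev2014,Pabbaraju2026} for \(\cH_0\) then transfers.

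\textbf{Main obstacle.} I expect Step 2, the rollout stability of \(\ParDim\), to be the hardest step; the abstract stresses that \(\DSdim\) can increase under rollout. What needs checking is that the first-divergence map sends the \(d\) cube directions to distinct local coordinates and preserves the even-parity structure. When two directions \(x_i,x_j\) diverge at a shared state, I would first try a merging argument: distinct environment coordinates \(e\) keep their rollouts separated, so the collision can only occur within a single environment's prefix tree.
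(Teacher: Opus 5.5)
\textbf{Gap in Step~2.} Your chain of reductions is the paper's own: \Cref{prop:exact-trace-is-rollout}, rollout stability, \Cref{thm:parity-density}, \Cref{thm:one-inclusion-loo}, \Cref{thm:loo-to-pac}, and one-step stopping for the lower bound. If you simply invoke \Cref{thm:parity-rollout-nonincrease} for Step~2, you are done. But the argument you sketch for Step~2 would not go through, and that step is where the whole difficulty lies.

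First-divergence states belong to pairs of traces, and different pairs in one coordinate diverge at different prefixes. You must commit to a single state \(x_r\cdot u_r\) per coordinate. Which one works is decided by the parities of the \(\eta\)-masses of the prefix blocks, not by where traces disagree.
\begin{itemize}
\item Take \(d=1\), \(L_1=\{aa,ab,ba,bb\}\) and \(\eta\equiv1\). The earliest divergence is at the root, but both root children have mass \(0\). So the projected vector at \(x_1\) is zero, and you must descend to \(x_1\cdot a\).
\item Conversely, take \(L_1=\{aa,ab,ac,ba,bb,bc\}\) and \(\eta\equiv1\). The children of \(a\) carry masses \((1,1,1)\), whose sum is odd, so that is not a certificate. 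Only the root works.
\end{itemize}
For \(d\ge2\) there is a further problem: descending in one coordinate shrinks its active set, and you must show the other coordinates stay even. Your sketch is phrased in pseudo-cube terms (disagreeing neighbours). That kind of descent is exactly what fails for DS dimension (\Cref{thm:ds-rollout-increase}), because trace-level flips can sit at incompatible prefix depths.

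\textbf{The missing idea: partition-tree peeling.}
\begin{itemize}
\item Extend \(\eta\) by zero to a tensor on \(L_1\times\cdots\times L_d\). Vanishing deletion marginals is the same as line-evenness in every coordinate.
\item Each \(L_r\) is prefix-free (\Cref{lem:prefix-free}), so it carries a prefix partition tree.
\item Process the coordinates one at a time, treating the current blocks of the other coordinates as a parameter set \(\Omega\). Among nodes whose block has zero mass for every \(\omega\in\Omega\) and on which the tensor is not identically zero, take an inclusion-minimal, then deepest, one (\Cref{lem:zero-mass-branching}). Its child quotient is nonzero and even.
\item Replacing coordinate \(r\) by these child blocks keeps every other coordinate line-even, since each new line sum is a sum of old, vanishing ones (\Cref{thm:product-peeling}).
\item The children of \(u_r\) are \(u_ra\) for single actions \(a\). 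Every entry equal to \(1\) in the final quotient has some \(v\) with \(\eta(v)=1\) in the corresponding product of blocks. Hence a single hypothesis satisfies \(h(x_r\cdot u_r)=a_r\) for all \(r\) at once, which gives the base certificate (\Cref{thm:trace-to-base}).
\end{itemize}
Collisions need no merging argument. If \(x_r\cdot u_r=x_s\cdot u_s\) with \(r\ne s\), then coordinates \(r\) and \(s\) of the base table coincide. Deleting \(r\) is then injective, so no nonzero even certificate can exist. This gives exactly \(d\) distinct states, not ``at least as many.''

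\textbf{Minor point.} \(\cH'|_D\) being an even pseudo-cube does not make it a pseudo-cube. It is the support of the certificate that is a pseudo-cube, and that still gives \(\ParDim\le\DSdim\).
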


\paragraph{The key invariant.}
The proof goes through parity dimension \(\ParDim\), defined formally in
\Cref{def:parity-dimension}.  This is a DS-style dimension in which ordinary
pseudo-cube witnesses are replaced by \(\Ftwo\)-even pseudo-cube witnesses.
Where a DS pseudo-cube asks, point by point, for a disagreeing neighbor in each
coordinate, an even pseudo-cube asks for a nonzero \(\Ftwo\)-weighted
collection whose marginal on every one-coordinate deletion is zero.  This
linear cancellation condition turns a witness into a tensor; the proof overview
below explains how this tensor is used.  A bare existence witness for DS
dimension has no comparable operation.

Parity dimension has the two properties needed for the proof.  First, it sits between
one-inclusion density and DS dimension by
\Cref{thm:parity-density,cor:pardim-ds}:
\[
        \mu_\cF(n)\le \ParDim(\cF)\le\DSdim(\cF)
        \qquad \forall n\ge1.
\]
Second, it does not increase under autoregressive rollout by
\Cref{thm:parity-rollout-nonincrease}:
\[
        \ParDim(\Roll_\halt(\cH))\le \ParDim(\cH).
\]
Combining these inequalities gives
\[
        \mu_{\Roll_\halt(\cH)}(n)
        \le
        \ParDim(\Roll_\halt(\cH))
        \le
        \ParDim(\cH)
        \le
        \DSdim(\cH).
\]
The one-inclusion and leave-one-out-to-PAC machinery then gives the PAC bound
above.

\paragraph{Proof overview.}
The proof has two independent parts.  The first is a finite spanning theorem,
proved in \Cref{sec:finite-parspan-density}.
On a finite restriction \(D\), absence of large even pseudo-cubes is equivalent
to vanishing of high-order marginal annihilators.  We prove that this vanishing
forces all functions on the finite class to be spanned by low-coordinate
functions (\Cref{lem:parspan-vanishing}).  A basis-counting argument then
bounds one-inclusion density (\Cref{lem:touch-counting-density}), yielding
\Cref{thm:parity-density}.  This
plays the role that low-degree polynomial spanning plays in the optimal
multiclass Sauer and density theorems of \citet{HannekeMengMoranShaeiri2026}
and \citet{Pabbaraju2026}, but the spanning objects here are \(\Ftwo\)-valued
low-coordinate functions.  Thus \Cref{thm:parity-density} refines Pabbaraju's
DS-density bound: the DS bound follows from \(\ParDim\le\DSdim\), and the
refinement is strict on the rollout counterexample by
\Cref{cor:pardim-ds-separation}.

The second part is specific to rollout and is proved in
\Cref{sec:parity-rollout-nonincrease}.  A parity certificate for a finite
restriction of \(\Roll_\halt(\cH)\) is a tensor on complete traces.  For each
coordinate, the possible traces form a prefix partition tree.  A
partition-tree peeling theorem (\Cref{thm:product-peeling}) moves the tensor
down these trees until each coordinate is a choice among children of one
prefix.  Such children are one-step next actions, so
\Cref{thm:trace-to-base} turns the resulting tensor into a parity certificate
for \(\cH\), proving \Cref{thm:parity-rollout-nonincrease}.

\paragraph{Why DS dimension is not enough.}
A tempting proof strategy would be to show that DS dimension itself is
preserved by rollout, \(\DSdim(\Roll_\halt(\cH))\le\DSdim(\cH)\), and then
apply the standard multiclass density theorem.  This route fails.  We construct
a finite aligned autoregressive class with \(\DSdim(\cH)=2\) and
\(\DSdim(\Roll_\halt(\cH))\ge3\); the formal statement is
\Cref{thm:ds-rollout-increase}, and the construction and verification are in
\Cref{sec:ds-blowup}.  The obstruction is structural: a trace-level
pseudo-cube may use disagreements that occur at incompatible prefix depths, so
it need not descend to a pseudo-cube of fixed next-action states.

\begin{examplebox}[Example 1: Chain-of-Thought reasoning]
For text Chain-of-Thought, let \(\Sigma\) be a token alphabet.  Prompts and
other fixed context are included in the initial transcript.  Thus we take
\(\cA=\Sigma\), let \(\eps\) denote the empty string, set
\(\cX=\{\eps\}\times\Sigma^*\), and let
\(\cF\subseteq\Sigma^\cX\) be a next-token class.  A state is written
\((\eps,w)\), where \(w\in\Sigma^*\) is the current transcript.

\begin{theorem}[Chain-of-Thought]
\label{thm:intro-text-cot}
For every stopping rule \(\halt:\cX\times\Sigma^*\to\{0,1\}\), if \(\cF\) is
pointwise \(\halt\)-halting in the sense of \Cref{def:pointwise-halting}, then
\[
        n_{\PAC}^{\varepsilon,\delta}(\Roll_\halt(\cF))
        =
        O\!\left(\frac{\ParDim(\cF)+\log(1/\delta)}{\varepsilon}\right)
        \le
        O\!\left(\frac{\DSdim(\cF)+\log(1/\delta)}{\varepsilon}\right).
\]
Thus the rate is insensitive to the stopping convention, provided rollouts
terminate.  For the EOS variants, assume \(\mathrm{[EOS]}\in\Sigma\).  This
includes:
\begin{itemize}[leftmargin=2.2em,itemsep=2pt,topsep=2pt]
\item fixed total transcript length:
\(\halt_T^{\mathrm{tot}}((\eps,w),y)=1\Longleftrightarrow |wy|\ge T\);
\item fixed output length:
\(\halt_T^{\mathrm{out}}((\eps,w),y)=1\Longleftrightarrow |y|\ge T\);
\item EOS stopping:
\(\halt_{\mathrm{eos}}((\eps,w),y)=1\Longleftrightarrow y\) ends in
\(\mathrm{[EOS]}\);
\item capped EOS stopping:
\(\halt_{T,\mathrm{eos}}((\eps,w),y)=1\Longleftrightarrow |wy|\ge T\) or
\(y\) ends in \(\mathrm{[EOS]}\).
\end{itemize}
\end{theorem}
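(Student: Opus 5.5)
This statement is a direct specialization of \Cref{thm:pac-bound}, so the plan is to instantiate that theorem rather than reprove anything. I would take \(\cE=\{\eps\}\), a one-point environment space, and \(\cA=\Sigma\), so that \(\cX=\cE\times\cA^*=\{\eps\}\times\Sigma^*\) is exactly the aligned state space of \Cref{def:aligned-state}. The class \(\cF\subseteq\Sigma^\cX\) then plays the role of \(\cH\subseteq\cA^\cX\). Any stopping rule \(\halt:\cX\times\Sigma^*\to\{0,1\}\) is a stopping rule in the sense of \Cref{thm:pac-bound}. The hypothesis that \(\cF\) is pointwise \(\halt\)-halting is the same notion, \Cref{def:pointwise-halting}. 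Applying \Cref{thm:pac-bound} therefore yields the displayed bound verbatim, including the \(\ParDim\) form. The inequality to the \(\DSdim\) form follows from \(\ParDim(\cF)\le\DSdim(\cF)\) (\Cref{cor:pardim-ds}).

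For the listed stopping conventions, I would check only that each is a well-defined map \(\cX\times\Sigma^*\to\{0,1\}\) depending on the initial state and the emitted suffix. That is immediate from the formulas: \(\halt_T^{\mathrm{tot}}\) reads \(|wy|\), \(\halt_T^{\mathrm{out}}\) reads \(|y|\), \(\halt_{\mathrm{eos}}\) reads the last symbol of \(y\), and \(\halt_{T,\mathrm{eos}}\) is their disjunction. Because the theorem is stated for every stopping rule, nothing rule-specific is needed beyond the halting hypothesis.

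For the remark that the rate is insensitive to the stopping convention, I would note that the bound depends only on \(\ParDim(\cF)\) or \(\DSdim(\cF)\), which are properties of \(\cF\) alone. The only role of \(\halt\) is through the pointwise-halting assumption.

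I would also note when that assumption holds automatically. Under \(\halt_T^{\mathrm{out}}\), every rollout stops after \(T\) actions, and under \(\halt_T^{\mathrm{tot}}\) and \(\halt_{T,\mathrm{eos}}\) every rollout from \((\eps,w)\) stops within \(\max(T-|w|,0)\) steps, so every \(\cF\) is pointwise halting (modulo the paper's convention for immediate stopping when \(|w|\ge T\)). Under pure \(\halt_{\mathrm{eos}}\) it is a genuine restriction that \(\cF\) must eventually emit \(\mathrm{[EOS]}\), which is why it is kept as a hypothesis.

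The only potential obstacle is definitional. One must confirm that \Cref{def:pointwise-halting} and \Cref{def:rollout-class} place no extra requirement on \(\cE\), such as nontriviality, and that \(\halt\) evaluated at the empty suffix is handled consistently with \(\halt_T^{\mathrm{tot}}\) when \(|w|\ge T\). I expect both to be routine.
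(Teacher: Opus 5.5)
Your proposal is correct and is exactly the paper's argument: the paper obtains this theorem by instantiating \Cref{thm:pac-bound} with \(\cX=\{\eps\}\times\Sigma^*\), \(\cA=\Sigma\), and \(\cH=\cF\), and the \(\DSdim\) form follows from \(\ParDim(\cF)\le\DSdim(\cF)\) (\Cref{cor:pardim-ds}). Your extra observations are accurate but go beyond what the paper states: the length-capped rules make every class pointwise halting, with the rollout returning \(\eps\) immediately when \(|w|\ge T\), whereas pure EOS stopping is a genuine restriction.
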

This is the specialization of \Cref{thm:pac-bound} to
\(\cX=\{\eps\}\times\Sigma^*\), \(\cA=\Sigma\), and \(\cH=\cF\).
\end{examplebox}

\begin{examplebox}[Example 2: Full-label multi-instance learning]
Let \(\cU\) be a local instance space, \(\cY\) a local label space, and
\(\cG\subseteq\cY^\cU\) a local multiclass class.  Its full-label lift
\(\cG^\star\) maps a finite list \(U=(u_1,\ldots,u_T)\) to the complete label
list
\[
        g^\star(U)=(g(u_1),\ldots,g(u_T)).
\]
Under all-or-nothing loss on the whole label list, the optimal realizable PAC
sample complexity is unchanged:
\[
        n_{\PAC}^{\varepsilon,\delta}(\cG^\star)
        =
        \Theta\!\left(
        \frac{\DSdim(\cG)+\log(1/\delta)}{\varepsilon}
        \right)
        \quad(\DSdim(\cG)\ge1).
\]
The formal statement, the distinction from bag-label MIL, and the random ERM
lower bound are given in \Cref{sec:full-label-mil-application}.
\end{examplebox}

\paragraph{Organization.}
\Cref{sec:model} formalizes autoregressive exact-trace learning and recalls the
multiclass PAC and density tools.  \Cref{sec:parspan} introduces parity
dimension and states the main structural theorems.  \Cref{sec:finite-parspan-density}
proves low-coordinate spanning on finite restrictions and derives the density bound.
\Cref{sec:parity-rollout-nonincrease} proves that parity dimension does not
increase under rollout by partition-tree peeling.
\Cref{sec:full-label-mil-application} gives the full-label multi-instance
learning application in its original sequence-lift form, and
\Cref{app:random-erm-mil-lower} proves the random ERM lower bound.
\Cref{app:dimension-comparisons} contains the elementary dimension comparisons,
\Cref{app:online-comparison} records the online mistake-bound comparison, and
\Cref{sec:ds-blowup} gives the finite
construction showing that DS dimension itself can increase under rollout.

\section{Autoregressive exact-trace learning and multiclass preliminaries}
\label{sec:model}

This section fixes the learning model and the multiclass notation used in the
rest of the paper.  We first define aligned autoregressive states, stopping
rules, rollout maps, and exact-trace loss.  The key point is that exact-trace
learning is an ordinary multiclass learning problem for the rollout class.  We
then recall the PAC, transductive leave-one-out, one-inclusion density, and
DS-dimension facts used later.

For a set \(\cA\), write \(\cA^*\) for the set of finite strings over \(\cA\),
write \(\eps\) for the empty string, and write \(wu\) for concatenation of
strings \(w,u\in\cA^*\).  For \(n\ge1\), write
\([n]=\{1,\ldots,n\}\).  For a function class \(\cH\subseteq\cY^\cX\) and a
domain subset \(D\subseteq\cX\), write
\[
        \cH|_D=\{h|_D:h\in\cH\}\subseteq\cY^D.
\]
If \(D'\subseteq D\) and \(u\in\cH|_D\), write \(u|_{D'}\) for the ordinary restriction of \(u\) to \(D'\).

\subsection{Autoregressive exact-trace model}

\begin{definition}[Aligned states and stopping predicates]
\label{def:aligned-state}
Let \(\cE\) be an environment space and \(\cA\) an action space.  The aligned state domain is \(\cX=\cE\times\cA^*\).  For \(x=(e,w)\in \cX\) and \(u\in\cA^*\), write \(x\cdot u=(e,wu)\).  A stopping predicate is a function
\[
        \halt:\cX\times\cA^*\to\{0,1\},
\]
where \(\halt(x,u)\) is evaluated at the initial aligned state \(x\) and the
emitted suffix \(u\).
\end{definition}

\begin{algorithm}[H]
\caption{\(\Roll_\halt(h)(x)\): deterministic autoregressive rollout}
\label{alg:autoregressive-rollout}
\begin{algorithmic}[1]
\Require next-action rule \(h:\cX\to\cA\), stopping predicate \(\halt:\cX\times\cA^*\to\{0,1\}\)
\Statex \hspace{\algorithmicindent} aligned state \(x=(e,w)\in\cX\)
\Ensure emitted suffix \(y\in\cA^*\), if the procedure terminates
\State \(y\gets\eps\)
\While{\(\halt(x,y)=0\)}
    \State \(a\gets h(x\cdot y)\)
    \State \(y\gets ya\)
\EndWhile
\State \Return \(y\)
\end{algorithmic}
\end{algorithm}

We will assume the following pointwise halting condition throughout the paper
whenever a rollout class is formed.  Without this condition,
\(\Roll_\halt(h)\) may fail to be a total function on \(\cX\), so
\(\Roll_\halt(\cH)\) is not a well-defined multiclass function class.

\begin{definition}[Pointwise halting]
\label{def:pointwise-halting}
Let \(\cE\) be an environment space, \(\cA\) an action space,
\(\cX=\cE\times\cA^*\), and let
\(\halt:\cX\times\cA^*\to\{0,1\}\) be a stopping predicate.  A next-action
class \(\cH\subseteq\cA^\cX\) is pointwise \(\halt\)-halting if, for every
\(h\in\cH\) and every initial state \(x\in\cX\), the rollout computation in
\Cref{alg:autoregressive-rollout} on input \((h,\halt,x)\) terminates after
finitely many iterations.
\end{definition}

\begin{definition}[Rollout class]
\label{def:rollout-class}
Let \(\cE\) be an environment space, \(\cA\) an action space,
\(\cX=\cE\times\cA^*\), and \(\halt:\cX\times\cA^*\to\{0,1\}\) a stopping
predicate.  Let \(\cH\subseteq\cA^\cX\) be a pointwise
\(\halt\)-halting next-action class.  Define \(\Roll_\halt(h):\cX\to\cA^*\)
by letting \(\Roll_\halt(h)(x)\) be the string returned by
\Cref{alg:autoregressive-rollout} on input \((h,\halt,x)\), and define the
rollout class
\(\Roll_\halt(\cH)=\{\Roll_\halt(h):h\in \cH\}\subseteq(\cA^*)^\cX\).
When \(\halt\) is fixed, we write \(\Roll(h)\) and \(\Roll(\cH)\).
\end{definition}

\begin{definition}[Exact-trace loss]
\label{def:exact-trace-loss}
Let \(\cE\), \(\cA\), \(\cX=\cE\times\cA^*\), a stopping predicate \(\halt:\cX\times\cA^*\to\{0,1\}\), and a pointwise \(\halt\)-halting class \(\cH\subseteq\cA^\cX\) be given.  A realizable example from target \(h^\star\in \cH\) is \((x,\Roll_\halt(h^\star)(x))\), where \(x\in \cX\).  A predictor \(G:\cX\to\cA^*\) makes a mistake at \(x\) if \(G(x)\ne \Roll_\halt(h^\star)(x)\).
Thus exact-trace learning of \(\cH\) is ordinary multiclass learning of \(\Roll_\halt(\cH)\).
\end{definition}

\begin{proposition}[Exact-trace learning is multiclass learning of the rollout class]
\label{prop:exact-trace-is-rollout}
Let \(\cE\), \(\cA\), \(\cX=\cE\times\cA^*\), a stopping predicate \(\halt:\cX\times\cA^*\to\{0,1\}\), and a pointwise \(\halt\)-halting class \(\cH\subseteq\cA^\cX\) be given.  The realizable exact-trace learning problem for \(\cH\) is the realizable multiclass learning problem for \(\Roll_\halt(\cH)\subseteq(\cA^*)^\cX\).
In particular, one-inclusion density and PAC sample complexity for exact-trace learning are the corresponding quantities for \(\Roll_\halt(\cH)\).
\end{proposition}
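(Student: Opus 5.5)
The statement is essentially definitional, so the plan is to unwind \Cref{def:exact-trace-loss} and \Cref{def:rollout-class} and match them, item by item, with the data of a realizable multiclass problem.

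First, I would note that the pointwise \(\halt\)-halting assumption (\Cref{def:pointwise-halting}) guarantees that \Cref{alg:autoregressive-rollout} terminates for every \(h\in\cH\) and every \(x\in\cX\). Hence each \(\Roll_\halt(h)\) is a total function \(\cX\to\cA^*\), and \(\Roll_\halt(\cH)\subseteq(\cA^*)^\cX\) is a genuine multiclass hypothesis class with label space \(\cY=\cA^*\).

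Next I would compare the two problems.

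\textbf{Distributions.} A realizable exact-trace distribution is a distribution over pairs \((x,\Roll_\halt(h^\star)(x))\) for some \(h^\star\in\cH\). A realizable multiclass distribution for \(\Roll_\halt(\cH)\) is a distribution over pairs \((x,g(x))\) with \(g\in\Roll_\halt(\cH)\). Since every such \(g\) equals \(\Roll_\halt(h)\) for some \(h\in\cH\), and conversely, the two families of sample distributions coincide. Nothing requires \(h\mapsto\Roll_\halt(h)\) to be injective: both problems depend on the target only through its rollout.

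\textbf{Loss.} The exact-trace mistake \(G(x)\neq\Roll_\halt(h^\star)(x)\) is the \(0\)-\(1\) multiclass loss with label \(g^\star(x)\), where \(g^\star=\Roll_\halt(h^\star)\). So for every predictor \(G:\cX\to\cA^*\) and every distribution, the two error functionals agree.

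\textbf{Learners.} A learner in either problem is a map from finite labeled samples in \((\cX\times\cA^*)^*\) to predictors \(\cX\to\cA^*\). The learners are therefore the same objects with the same success criterion, so the sample complexities \(n_{\PAC}^{\varepsilon,\delta}\) agree for every \(\varepsilon,\delta\).

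For the ``in particular'' clause, one-inclusion density depends only on the finite restrictions \(\Roll_\halt(\cH)|_D\) for finite \(D\subseteq\cX\). These are, by definition, the label patterns realizable by exact-trace targets. The same observation applies to the transductive leave-one-out quantities.

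There is no real obstacle in this argument. The only point needing care is the well-definedness step, which is exactly why pointwise halting is assumed.
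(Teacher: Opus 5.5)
Your proposal is correct and follows the same route as the paper. Both arguments identify each realizable exact-trace example \((x,\Roll_\halt(h^\star)(x))\) with a realizable multiclass example for the target \(\Roll_\halt(h^\star)\in\Roll_\halt(\cH)\), note that every element of \(\Roll_\halt(\cH)\) arises this way, and observe that exact-trace loss is zero-one loss against that target; your remarks on well-definedness via pointwise halting and on non-injectivity of \(h\mapsto\Roll_\halt(h)\) just make explicit what the paper leaves implicit.
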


\begin{proof}[Proof of \Cref{prop:exact-trace-is-rollout}]
For a fixed target \(h^\star\in \cH\), every training example has the form
\[
        (x,\Roll_\halt(h^\star)(x)).
\]
This is exactly a realizable multiclass example for the target function \(\Roll_\halt(h^\star)\in\Roll_\halt(\cH)\).  Conversely, every target in
\(\Roll_\halt(\cH)\) is \(\Roll_\halt(h)\) for some \(h\in \cH\).  The loss in
\Cref{def:exact-trace-loss} is ordinary zero-one loss against that target function.
\end{proof}

\subsection{Multiclass density and learning black boxes}

\begin{definition}[Realizable PAC sample complexity]
\label{def:pac-sample-complexity}
Let \(\cH\subseteq\cY^\cX\) be a multiclass class and let \(\varepsilon,\delta\in(0,1)\).  A possibly randomized learner takes a finite labeled sample in \((\cX\times\cY)^m\) and returns a predictor \(G:\cX\to\cY\).  The realizable PAC sample complexity \(n_{\PAC}^{\varepsilon,\delta}(\cH)\) is the least \(m\ge0\) such that some learner has the following guarantee: for every target \(h^\star\in\cH\) and every distribution \(\mathcal D\) on \(\cX\), if \(x_1,\ldots,x_m\) are sampled independently from \(\mathcal D\) and the learner receives \(((x_i,h^\star(x_i)))_{i=1}^m\), then with probability at least \(1-\delta\), over the sample and the learner's randomness, the returned predictor \(G\) satisfies
\[
        \mathcal D\{x\in\cX:G(x)\ne h^\star(x)\}\le\varepsilon.
\]
If no such \(m\) exists, set \(n_{\PAC}^{\varepsilon,\delta}(\cH)=\infty\).
\end{definition}

\begin{definition}[Transductive leave-one-out error]
\label{def:loo-error}
Let \(\cH\subseteq\cY^\cX\) be a multiclass class and let \(n\ge1\).  A possibly randomized transductive leave-one-out rule \(A\) at sample size \(n\) receives a tuple
\(\mathbf x=(x_1,\ldots,x_n)\in\cX^n\), an index \(i\in[n]\), and the revealed labels \((y_j)_{j\ne i}\in\cY^{[n]\setminus\{i\}}\), and outputs a label in \(\cY\) for coordinate \(i\).  No distinctness is assumed among the points \(x_1,\ldots,x_n\).

The rule is permutation-invariant if, for every permutation of \([n]\), simultaneously permuting the tuple, the hidden index, and the revealed labels leaves the distribution of the output label unchanged.  Its leave-one-out error on \(\cH\) is
\[
        \sup_{\substack{h^\star\in\cH\\
                        \mathbf x=(x_1,\ldots,x_n)\in\cX^n}}
        \frac1n\sum_{i=1}^n
        \Pr\!\left[
        A(\mathbf x,i,(h^\star(x_j))_{j\ne i})\ne h^\star(x_i)
        \right],
\]
where the probability is over the rule's randomness.
\end{definition}

\begin{definition}[Density and one-inclusion density]
\label{def:finite-density}
Let \(D\) be a finite domain set, let \(\cY\) be a label set, and let \(\cG\subseteq \cY^D\) be finite and nonempty.  The density of \(\cG\) over \(D\) is
\[
        \dens_D(\cG)=
        |D|-\frac1{|\cG|}
        \sum_{x\in D}\left|\cG|_{D\setminus\{x\}}\right|
        =
        \frac1{|\cG|}\sum_{x\in D}
        \left(|\cG|-\left|\cG|_{D\setminus\{x\}}\right|\right).
\]
For a multiclass class \(\cH\subseteq \cY^\cX\) and a tuple \(\mathbf x=(x_1,\ldots,x_n)\in\cX^n\), write
\[
        \cH|_{\mathbf x}
        =
        \{(h(x_1),\ldots,h(x_n)):h\in\cH\}
        \subseteq\cY^{[n]}.
\]
For a multiclass class \(\cH\subseteq \cY^\cX\), define
\[
        \mu_\cH(n)=
        \sup_{\mathbf x\in\cX^n}
        \sup_{\substack{\emptyset\ne \cH'\subseteq \cH\\ \cH'\text{ finite}}}
        \dens_{[n]}(\cH'|_{\mathbf x}).
\]
If \(\cX^n=\emptyset\), the outer supremum is taken to be \(0\).
\end{definition}

\begin{definition}[Pseudo-cubes and DS dimension {\citep{DanielyShalev2014}}]
\label{def:ds-dimension}
Let \(D\) be a finite domain set.  A finite nonempty class \(Q\subseteq \cY^D\) is a pseudo-cube on \(D\) if for every \(q\in Q\) and every \(x\in D\), there exists \(q'\in Q\) such that
\[
        q'(x)\ne q(x),
        \qquad
        q'(y)=q(y)\quad\forall y\in D\setminus\{x\}.
\]
If \(|D|=d\), we also call \(Q\) a \(d\)-pseudo-cube.  A class \(\cH\subseteq \cY^\cX\) DS-shatters \(D\) if \(\cH|_D\) contains a pseudo-cube on \(D\).  Its DS dimension is denoted \(\DSdim(\cH)\), with value \(0\) if no positive-dimensional pseudo-cube exists and \(\infty\) if the dimensions are unbounded.
\end{definition}

We use two standard black boxes, stated below in the notation of this paper:
the randomized one-inclusion bound in transductive leave-one-out form and the
leave-one-out-to-PAC aggregation theorem.

\begin{theorem}[One-inclusion transductive leave-one-out bound {\citep{DanielyShalev2014,HannekeMengMoranShaeiri2026,Pabbaraju2026}}]
\label{thm:one-inclusion-loo}
Let \(\cH\subseteq \cY^\cX\) be a multiclass class, where \(\cY\) is arbitrary.  For every \(n\ge1\), there exists a possibly randomized permutation-invariant transductive leave-one-out rule at sample size \(n\) whose leave-one-out error, in the sense of \Cref{def:loo-error}, is at most
\[
        \frac{\lceil \mu_{\cH}(n)\rceil}{n},
\]
with the convention \(\lceil\infty\rceil=\infty\).  In particular, if
\(\mu_{\cH}(n)\le K\) for an integer \(K\), then the error is at most \(K/n\).
\end{theorem}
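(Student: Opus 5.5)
The plan is to follow the classical Haussler--Littlestone--Warmuth one-inclusion argument in the hypergraph form of \citet{DanielyShalev2014}, with a compactness step so that infinite label sets are allowed. Set \(K=\lceil\mu_\cH(n)\rceil\); if \(K=\infty\) there is nothing to prove. Fix a tuple \(\mathbf x\in\cX^n\) and let \(V=\cH|_{\mathbf x}\subseteq\cY^{[n]}\), which may be infinite.

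\textbf{The hypergraph.} The one-inclusion hypergraph on \(V\) has one hyperedge for each pair \((i,p)\) with \(i\in[n]\) and \(p\in\cY^{[n]\setminus\{i\}}\). This hyperedge is \(e_{i,p}=\{v\in V:v|_{[n]\setminus\{i\}}=p\}\). Each vertex lies in exactly \(n\) hyperedges, one per coordinate.

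\textbf{Reduction to orientations.} A (partial) orientation assigns to each hyperedge at most one vertex of it, its ``head''. The leave-one-out rule, on input \((\mathbf x,i,(y_j)_{j\ne i})\), takes \(p=(y_j)_{j\ne i}\) and outputs the \(i\)-th coordinate of the head of \(e_{i,p}\), or an arbitrary label if there is no head. For target \(v\), the prediction is wrong at coordinate \(i\) only if \(v\) is not the head of its \(i\)-th hyperedge and that hyperedge has size at least \(2\). Singleton hyperedges can always take their unique vertex as head. Hence the leave-one-out error at \(v\) is at most \(\mathrm{outdeg}(v)/n\), and it suffices to build an orientation with \(\mathrm{outdeg}(v)\le K\) for every \(v\in V\).

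\textbf{Finite case.} For finite \(G\subseteq V\), consider the induced hypergraph on \(G\), whose hyperedges are \(e\cap G\). The identity
\[
        \sum_{i}\left(|G|-\left|G|_{[n]\setminus\{i\}}\right|\right)=\sum_{e}\left(|e\cap G|-1\right)_+
\]
shows that \(\dens_{[n]}(G)\) is the average number of forced ``non-head'' incidences per vertex. Every finite \(G\) is \(\cH'|_{\mathbf x}\) for a finite \(\cH'\subseteq\cH\), so every finite \(G'\subseteq G\) has this average at most \(\mu_\cH(n)\le K\). I would then apply the standard max-flow/Hall argument. The source sends capacity \(|e|-1\) to each hyperedge, hyperedges send flow to their vertices, and each vertex has capacity \(K\) to the sink. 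A cut argument shows a flow saturating all source edges exists exactly when every vertex subset satisfies the density inequality. Integrality then yields an orientation of the induced hypergraph on \(G\) with out-degree at most \(K\).

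\textbf{Infinite case (expected main obstacle).} Hyperedges may be infinite, so a variable ``the head of \(e\)'' ranges over an infinite set and Tychonoff does not apply directly. I would instead encode choices per vertex. Each \(v\) chooses a subset \(S_v\) of its \(n\) hyperedges that it heads, which is a finite space. The constraints are:
\begin{itemize}
\item at most one head per hyperedge, which is a pairwise closed condition;
\item \(|S_v|\ge n-K\) for every \(v\), which is local to \(v\).
\end{itemize}
Hyperedges left headless are harmless, since they count as errors anyway. Any finite set of constraints involves finitely many vertices \(G\). The finite orientation on \(G\) satisfies them: each hyperedge \(e\cap G\) heads one vertex, and each \(v\in G\) misses at most \(K\) hyperedges. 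By compactness of \(\prod_v 2^{[n]}\), a global assignment exists.

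\textbf{Permutation invariance.} To obtain the required invariance, I would make the rule choose the orientation canonically for the multiset of labeled points. For example, randomize uniformly over a relabeling of \([n]\) and use a fixed choice of orientation for each tuple up to permutation. Because the out-degree bound holds for every target and every ordering, averaging preserves the bound \(K/n\).
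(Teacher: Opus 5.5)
Your proposal is correct and follows the same route as the paper's sketch. Both use the one-inclusion hypergraph on \(\cH|_{\mathbf x}\), hereditary density bounds obtained from finite subclasses, a finite orientation with out-degree at most \(\lceil\mu_\cH(n)\rceil\), compactness to pass to the infinite hypergraph, and averaging over a uniformly random permutation of \([n]\). The difference is that you prove what the paper only cites: your max-flow/Hall argument is valid for the finite orientation theorem, and your compactness encoding via per-vertex head sets \(S_v\subseteq[n]\) correctly handles infinite hyperedges.
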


\begin{proof}[Proof sketch of \Cref{thm:one-inclusion-loo}]
Fix a tuple \(\mathbf x=(x_1,\ldots,x_n)\in\cX^n\), and consider the
one-inclusion hypergraph on the possibly infinite vertex set
\(\cH|_{\mathbf x}\subseteq\cY^{[n]}\).  Its hyperedges are the non-singleton
fibers of the coordinate-deletion maps
\[
        v\in\cH|_{\mathbf x}
        \longmapsto
        v|_{[n]\setminus\{i\}},
        \qquad i\in[n].
\]
For every finite vertex set \(W\subseteq\cH|_{\mathbf x}\), choose one
representative \(h\in\cH\) for each vertex of \(W\).  These representatives
form a finite subclass \(\cH_W\subseteq\cH\) with
\(\cH_W|_{\mathbf x}=W\).  Hence every finite induced subhypergraph has density
at most \(\mu_\cH(n)\), because \(\mu_\cH(n)\) takes the supremum over all
finite subclasses in \Cref{def:finite-density}.

The finite one-inclusion orientation theorem, in the density normalization of
\Cref{def:finite-density}, gives an orientation with maximum outdegree at most
\(\lceil\mu_\cH(n)\rceil\) on every finite induced subhypergraph; see
\citet{DanielyShalev2014} for the one-inclusion bound and
\citet[Appendix~A.1]{Pabbaraju2026} for this density normalization, building
on the orientation formulation of \citet{HannekeMengMoranShaeiri2026}.  By the
standard compactness argument for infinite one-inclusion graphs, these finite
orientations extend to an orientation of the full one-inclusion hypergraph;
see also \cite[Remark~4]{Pabbaraju2026}.

The resulting orientation defines a deterministic transductive leave-one-out
rule for the fixed ordered tuple \(\mathbf x\), with average leave-one-out
error at most \(\lceil\mu_\cH(n)\rceil/n\).  Finally, averaging this rule over
a uniformly random permutation of \([n]\) makes the rule permutation-invariant
without increasing its average leave-one-out error.
\end{proof}

\begin{theorem}[Leave-one-out to PAC aggregation {\citep{AdenAli2023}}]
\label{thm:loo-to-pac}
There is a universal constant \(C>0\) such that the following holds.  Let \(\cH\subseteq \cY^\cX\) be a multiclass class.  If, for some \(K\ge0\) and every \(n\ge1\), there is a permutation-invariant transductive leave-one-out rule at sample size \(n\) with leave-one-out error, in the sense of \Cref{def:loo-error}, at most \(K/n\), then for every \(\varepsilon,\delta\in(0,1)\), the PAC sample complexity of \Cref{def:pac-sample-complexity} satisfies
\[
        n_{\PAC}^{\varepsilon,\delta}(\cH)
        \le
        C\frac{K+\log(1/\delta)}{\varepsilon}.
\]
\end{theorem}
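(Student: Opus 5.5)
The plan is to treat the transductive rule as a black-box predictor and use a two-stage argument. First, convert it into a predictor whose \emph{expected} error is small. Second, upgrade the expectation bound to a high-probability bound by aggregation, as in \citet{AdenAli2023}.

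\textbf{Step 1 (in-expectation predictor).} Given a sample \(S=((x_j,y_j))_{j=1}^{n-1}\) and a test point \(x\), form the tuple \(\mathbf x=(x_1,\ldots,x_{n-1},x)\) and run the leave-one-out rule \(A\) with hidden index \(n\) and revealed labels \((y_j)_{j<n}\). Call the resulting predictor \(G_S\). For \(x_1,\ldots,x_{n-1},x\) drawn i.i.d.\ from \(\mathcal D\), the tuple is exchangeable. Permutation invariance of \(A\) then makes the probability of error at the last coordinate equal to the probability of error at a uniformly random hidden coordinate. Averaging over the index and using the assumed bound gives
\[
\mathbb E_S\,\mathcal D\{G_S\ne h^\star\}\le K/n .
\]
This gives the \(\delta\)-free rate \(O(K/\varepsilon)\) with constant probability via Markov. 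It does not yet give the additive \(\log(1/\delta)\) term.

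\textbf{Step 2 (aggregation).} Following \citet{AdenAli2023}, I would split the \(m\)-point sample into a small fixed number of structured subsamples, or use a few independent blocks. I would train \(G_{S_1},\ldots,G_{S_k}\) on them and output a majority vote, or a plurality vote in the multiclass case. A majority error at \(x\) requires many of the \(G_{S_i}\) to err at \(x\) simultaneously.

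The analysis bounds the error of the vote by controlling \(\Pr_x[\text{two predictors both err at } x]\), conditioning on one subsample and applying the Step 1 bound to the other. Because Step 1 holds uniformly over targets and distributions, it can be applied to the conditional distribution \(\mathcal D\) restricted to the region where the first predictor errs. That region has mass \(p\), so the second predictor's error there is about \(K/(np)\) in expectation. This yields an overall error of roughly \(K/n\) times a factor that is controlled with high probability.

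\textbf{Step 3 (concentration for \(\log(1/\delta)\)).} To obtain the additive \(\log(1/\delta)/\varepsilon\) rather than a multiplicative factor, I would apply a Bernstein/Freedman-type concentration to the empirical mass of the disagreement region on a held-out part of the sample. This shows the error region cannot be large with probability more than \(\delta\) once \(m\gtrsim (K+\log(1/\delta))/\varepsilon\).

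\textbf{Main obstacle.} The main obstacle is Step 2/3: converting an expectation bound into the optimal high-probability bound with additive rather than multiplicative \(\log(1/\delta)\) dependence. Naive confidence boosting, which runs independent copies and validates them, costs an extra \(\log(1/\delta)\) factor multiplying \(K\). Avoiding this requires the conditional-distribution trick combined with the permutation invariance of the rule, and this is precisely the contribution of \citet{AdenAli2023}. Since the statement is quoted from that work, in the paper I would cite it rather than reprove it.
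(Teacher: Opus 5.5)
Your plan ends where the paper's does. The paper gives no proof of this statement; it imports it as a black box from \citet{AdenAli2023}. Your Step~1 is the correct standard reduction: exchangeability of the \(n\)-tuple plus permutation invariance gives \(\mathbb E_S\,\mathcal D\{G_S\ne h^\star\}\le K/n\). If, however, Steps~2--3 are meant as a reconstruction of the cited result, they do not work as written.

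\emph{The conditional-distribution step is mis-justified.} \(G_{S_2}\) is trained on a block drawn from \(\mathcal D\), not from \(\mathcal D\) restricted to \(E_1=\{G_{S_1}\ne h^\star\}\). So Step~1 says nothing about its error on \(\mathcal D|_{E_1}\), however uniform it is over distributions. What rescues the estimate is the \emph{per-tuple} guarantee of \Cref{def:loo-error}. Condition on the block points outside \(E_1\) and on the number \(j\) of block points inside \(E_1\). Those \(j\) points and a test point \(x\sim\mathcal D|_{E_1}\) are then exchangeable, so permutation invariance bounds the conditional error by \(K/(j+1)\). This gives \(\mathbb E[\mathcal D(E_1\cap E_2)\mid S_1]\le K/|S_2|\).

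\emph{This is still only an in-expectation bound.} This is the more serious problem. Markov turns it into a failure probability of order \(K/(K+\log(1/\delta))\), not \(\delta\). Step~3 does not repair this. Concentration of the held-out empirical mass of a region bounds its true mass only if you separately show that few held-out points land in it, which is the original problem again.

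An argument that does give the additive \(\log(1/\delta)\) works along prefixes instead of independent blocks.
\begin{itemize}
\item Let \(f_t\) be the rule at size \(t+1\) with \(z_1,\dots,z_t\) revealed, and let \(X_t\) indicate that \(f_t\) errs at \(z_{t+1}\).
\item Condition on the multiset \(\{z_1,\dots,z_{t+1}\}\) and on \(z_{t+2},\dots,z_n\). Then \(z_{t+1}\) is a uniformly random element of that multiset, so \(\mathbb E[X_t\mid\cdot\,]\le K/(t+1)\) along a \emph{backward} filtration.
\item An exponential-moment (Freedman-type) bound along this filtration gives \(\sum_{t=n/2}^{n-1}X_t=O(K+\log(1/\delta))\) with probability \(1-\delta/2\).
\item A forward martingale bound gives \(\sum_t\operatorname{err}_{\mathcal D}(f_t)\le 2\sum_tX_t+O(\log(1/\delta))\), also with probability \(1-\delta/2\).
\item Take the plurality vote, with expected vote weights, over \(f_{n/2},\dots,f_{n-1}\). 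It errs at \(x\) only if \(\sum_t\Pr[f_t(x)\ne h^\star(x)]\ge n/4\). Hence its error is at most \(\frac4n\sum_t\operatorname{err}_{\mathcal D}(f_t)=O((K+\log(1/\delta))/n)\).
\end{itemize}

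For the paper's purposes, the citation suffices, as you concluded.
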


\section{Parity dimension and the main reduction}
\label{sec:parspan}

We now introduce the dimension parameter used in the proof.  By
\Cref{prop:exact-trace-is-rollout}, exact-trace learning has been reduced to
ordinary multiclass learning of the rollout class \(\Roll_\halt(\cH)\).  A
natural first attempt would be to compare
\(\DSdim(\Roll_\halt(\cH))\) with \(\DSdim(\cH)\).  This comparison is false:
\Cref{thm:ds-rollout-increase} shows that rollout can increase DS dimension.
Parity dimension is the replacement invariant.  It still controls multiclass
density, but it does not increase under autoregressive rollout.

The main reduction uses the chain
\[
        \mu_{\Roll_\halt(\cH)}(n)
        \le
        \ParDim(\Roll_\halt(\cH))
        \le
        \ParDim(\cH)
        \le
        \DSdim(\cH),
\]
where the first inequality is \Cref{thm:parity-density}, whose proof is deferred to \Cref{sec:finite-parspan-density}; the second is \Cref{thm:parity-rollout-nonincrease}; and the last is the elementary support argument included in \Cref{cor:pardim-ds}.

\subsection{Parity dimension}

\begin{definition}[Even pseudo-cube]
\label{def:even-pseudocube}
Let \(\cH\subseteq \cY^\cX\) be a function class, let \(D\subseteq\cX\) be finite, and let \(\cH'\subseteq\cH\) be finite and nonempty.  The restricted class \(\cH'|_D\subseteq\cY^D\) is an even pseudo-cube on \(D\) if there exists a nonzero vector \(\eta\in\Ftwo^{\cH'|_D}\) such that for every coordinate \(x\in D\) and every restricted pattern \(u\in\cH'|_{D\setminus\{x\}}\),
\[
        \sum_{\substack{v\in \cH'|_D:\\ v|_{D\setminus\{x\}}=u}}\eta(v)=0.
\]
If \(|D|=d\), we also call it a \(d\)-dimensional even pseudo-cube.
\end{definition}

\begin{definition}[Parity dimension]
\label{def:parity-dimension}
Let \(\cH\subseteq \cY^\cX\) be a function class.  The parity dimension \(\ParDim(\cH)\) is the supremum of all \(d\ge0\) such that \(\cH\) has a finite nonempty subclass \(\cH'\subseteq\cH\) and a finite domain set \(D\subseteq\cX\) with \(|D|=d\) for which \(\cH'|_D\) is an even pseudo-cube on \(D\).
If no positive-dimensional even pseudo-cube exists, we set
\(\ParDim(\cH)=0\); in particular, empty classes have parity dimension \(0\).
\end{definition}

\subsection{From parity dimension to the main theorem}

\begin{theorem}[Parity density theorem]
\label{thm:parity-density}
Let \(\cH\subseteq\cY^\cX\) be a function class.  For every \(n\ge1\),
\[
        \mu_\cH(n)\le\ParDim(\cH).
\]
\end{theorem}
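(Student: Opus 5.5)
The plan is to fix a tuple \(\mathbf x=(x_1,\ldots,x_n)\in\cX^n\) and a finite nonempty \(\cH'\subseteq\cH\), and to prove \(\dens_{[n]}(V)\le d\) for \(V=\cH'|_{\mathbf x}\subseteq\cY^{[n]}\) and \(d=\ParDim(\cH)\). We may assume \(d<\infty\). We work in the \(\Ftwo\)-vector space \(\Ftwo^V\), which has dimension \(|V|\). For \(S\subseteq[n]\), let \(W_S\subseteq\Ftwo^V\) be the subspace of functions of the form \(v\mapsto g(v|_S)\); its dimension is \(|V|_S|\). The argument has two steps: a low-coordinate spanning lemma and a basis-counting lemma.

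\emph{Spanning step.} We claim \(\Ftwo^V=\sum_{|S|\le d}W_S\), and we prove it by duality using the standard pairing \(\langle\eta,f\rangle=\sum_v\eta(v)f(v)\). Suppose \(\eta\ne0\) annihilates every \(W_S\) with \(|S|\le d\). Annihilating \(W_S\) is equivalent to the marginal \(\eta_S(u)=\sum_{v|_S=u}\eta(v)\) vanishing on \(V|_S\). Choose \(T\subseteq[n]\) of minimum size with \(\eta_T\ne0\); such a \(T\) exists because \(\eta_{[n]}=\eta\). Then \(|T|\ge d+1\), and every deletion marginal \((\eta_T)_{T\setminus\{i\}}=\eta_{T\setminus\{i\}}\) vanishes by minimality.

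The indices in \(T\) must carry distinct points. Indeed, if \(x_i=x_j\) for distinct \(i,j\in T\), then restriction \(V|_T\to V|_{T\setminus\{i\}}\) is injective, so \(\eta_{T\setminus\{i\}}\) would equal \(\eta_T\ne0\), a contradiction. Hence \(D=\{x_i:i\in T\}\) has \(|D|=|T|\). Now pick one representative in \(\cH'\) for each pattern in \(\supp\eta_T\), and call the resulting finite subclass \(\cH''\). Restricting \(\eta_T\) to \(\cH''|_D\) gives a nonzero vector. Its deletion marginals are sums over support patterns only, so they coincide with the vanishing marginals of \(\eta_T\). Thus \(\cH''|_D\) is a \(|T|\)-dimensional even pseudo-cube, and \(|T|>d\) contradicts \(\ParDim(\cH)=d\).

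\emph{Counting step.} For each coordinate \(i\in[n]\), set \(U_i=W_{[n]\setminus\{i\}}\). Then \(\codim U_i=|V|-|V|_{[n]\setminus\{i\}}|\), so
\[
        |V|\,\dens_{[n]}(V)=\sum_{i=1}^n\codim U_i .
\]
By the spanning step, we can extract a basis \(b_1,\ldots,b_{|V|}\) of \(\Ftwo^V\) with \(b_j\in W_{S_j}\) and \(|S_j|\le d\). For \(i\notin S_j\) we have \(b_j\in W_{S_j}\subseteq U_i\). Hence the images of the \(b_j\) with \(i\in S_j\) span \(\Ftwo^V/U_i\), which gives \(\codim U_i\le\#\{j:i\in S_j\}\). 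Summing over \(i\),
\[
        \sum_{i=1}^n\codim U_i\le\sum_j|S_j|\le d\,|V| .
\]
Therefore \(\dens_{[n]}(V)\le d\). Taking suprema over \(\mathbf x\) and \(\cH'\) gives \(\mu_\cH(n)\le\ParDim(\cH)\).

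I expect the spanning step to be the main obstacle. The delicate points are choosing \(T\) minimal so that all its deletion marginals vanish, and checking that passing to support representatives and handling repeated points in \(\mathbf x\) really produce an even pseudo-cube in the exact sense of \Cref{def:even-pseudocube}. The counting step is routine linear algebra once the spanning lemma is in hand.
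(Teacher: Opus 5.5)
Your proposal is correct and follows the paper's proof essentially step for step: you use duality to reduce spanning to the vanishing of low-order marginals, a minimal coordinate set $T$ whose deletion marginals all vanish (you take minimum size where the paper takes inclusion-minimality in \Cref{lem:parspan-vanishing}), the injectivity argument for repeated points, and the touch-counting bound of \Cref{lem:touch-counting-density}. Passing to support representatives $\cH''$ is harmless but unnecessary, since after relabeling coordinates $\eta_T$ is already an even certificate on all of $\cH'|_D$.
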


The proof of \Cref{thm:parity-density} is deferred to
\Cref{sec:finite-parspan-density}; it is a finite-dimensional
linear-algebra argument on finite restrictions.

\begin{corollary}[Density, parity dimension, and DS dimension]
\label{cor:pardim-ds}
For every function class \(\cH\subseteq\cY^\cX\),
\[
        \frac12\DSdim(\cH)
        \le
        \sup_{n\ge1}\mu_\cH(n)
        \le
        \ParDim(\cH)
        \le
        \DSdim(\cH).
\]
For binary classes, \(\ParDim(\cH)=\DSdim(\cH)=\VCdim(\cH)\).
\end{corollary}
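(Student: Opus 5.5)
The corollary splits into four claims, which I will prove in order.

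\textbf{The middle inequality.} The bound \(\sup_n\mu_\cH(n)\le\ParDim(\cH)\) follows at once from \Cref{thm:parity-density} by taking the supremum over \(n\).

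\textbf{The right inequality \(\ParDim(\cH)\le\DSdim(\cH)\).} I will use a support argument. Suppose \(\cH'|_D\) is an even pseudo-cube on \(D\) with \(|D|=d\), witnessed by a nonzero \(\eta\). Let
\[
Q=\supp(\eta)=\{v\in\cH'|_D:\eta(v)=1\}.
\]
Then \(Q\) is finite and nonempty. Fix \(q\in Q\), a coordinate \(x\in D\), and set \(u=q|_{D\setminus\{x\}}\). The fiber sum over \(v\) with \(v|_{D\setminus\{x\}}=u\) vanishes in \(\Ftwo\), and it contains \(\eta(q)=1\). So some other \(q'\ne q\) in that fiber has \(\eta(q')=1\). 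Such a \(q'\) lies in \(Q\), agrees with \(q\) off \(x\), and differs from \(q\) at \(x\). Hence \(Q\subseteq\cH|_D\) is a \(d\)-pseudo-cube, so \(\DSdim(\cH)\ge d\). Taking suprema gives the inequality; the infinite cases follow the same way.

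\textbf{The left inequality \(\tfrac12\DSdim(\cH)\le\sup_n\mu_\cH(n)\).} I will lower bound the density on a pseudo-cube. Let \(Q\subseteq\cH|_D\) be a \(d\)-pseudo-cube, and pick one representative in \(\cH\) for each \(q\in Q\); this gives a finite subclass \(\cH'\) with \(\cH'|_D=Q\). Order \(D\) as a tuple of length \(d\). For each coordinate \(x\), every \(q\) shares its fiber of \(Q\to Q|_{D\setminus\{x\}}\) with at least one other element, so every fiber has size at least \(2\). Therefore
\[
|Q|_{D\setminus\{x\}}|\le |Q|/2 .
\]
Summing \(|Q|-|Q|_{D\setminus\{x\}}|\ge|Q|/2\) over the \(d\) coordinates gives \(\dens_D(Q)\ge d/2\), hence \(\mu_\cH(d)\ge d/2\). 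If \(\DSdim(\cH)=\infty\), the same argument gives arbitrarily large density.

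\textbf{The binary case.} For \(\cY=\{0,1\}\), a pseudo-cube on \(D\) has fibers of size exactly \(2\) in every direction. We need \(\ParDim(\cH)\ge\VCdim(\cH)\), together with the known equality \(\DSdim=\VCdim\) for binary classes (a binary pseudo-cube on \(D\) forces \(D\) to be shattered). Given a shattered \(D\) of size \(d\), I take \(\cH'\) to be \(2^d\) representatives realizing all of \(\{0,1\}^D\), and set \(\eta\equiv1\). Every fiber of a one-coordinate deletion then has exactly two elements, so each fiber sum is \(0\) in \(\Ftwo\). This gives an even pseudo-cube of dimension \(d\), and combined with the right inequality yields equality of all three quantities.

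\textbf{Expected difficulty.} The main subtle step is the binary identification \(\DSdim=\VCdim\), which I plan to cite as standard from \citet{DanielyShalev2014} rather than reprove. Otherwise the argument is bookkeeping, provided \Cref{thm:parity-density} is taken as given.
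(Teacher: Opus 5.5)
Your proposal is correct and follows the paper's own proof essentially step for step. The four pieces match it directly: the middle inequality from \Cref{thm:parity-density}, the support of an even certificate as a DS pseudo-cube, fibers of size at least two giving density at least \(d/2\), and the all-one certificate on a VC-shattered cube together with the standard identification \(\DSdim=\VCdim\) for binary classes.
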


The three inequalities have separate sources.  The left inequality is the standard lower comparison between DS witnesses and one-inclusion density: a \(d\)-dimensional DS pseudo-cube has density at least \(d/2\).  The middle inequality is \Cref{thm:parity-density}.  The right inequality holds because the support of an even certificate is an ordinary DS pseudo-cube on the same domain.  For binary classes, \(\DSdim=\VCdim\), and a VC-shattered binary cube carries the all-one even certificate.  The complete proof is deferred to \Cref{app:dimension-comparisons}.

\begin{theorem}[Parity dimension does not increase under rollout]
\label{thm:parity-rollout-nonincrease}
Let \(\cE\) be an environment space, \(\cA\) an action space, \(\cX=\cE\times\cA^*\), and \(\halt:\cX\times\cA^*\to\{0,1\}\).  If \(\cH\subseteq\cA^\cX\) is pointwise \(\halt\)-halting, then \(\ParDim(\Roll_\halt(\cH))\le\ParDim(\cH)\).
\end{theorem}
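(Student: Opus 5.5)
We will show directly that every even pseudo-cube for \(\Roll_\halt(\cH)\) yields an even pseudo-cube for \(\cH\) of the same dimension. Fix a finite nonempty subclass \(\cH'\subseteq\cH\), a finite domain \(D=\{x_1,\ldots,x_d\}\), and a nonzero certificate \(\eta\) on \(V=\Roll(\cH')|_D\). Each \(v\in V\) is a tuple of complete traces \((v_1,\ldots,v_d)\), and we view \(\eta\) as a tensor indexed by \(\prod_i T_i\), where \(T_i\) is the finite set of traces occurring at \(x_i\). The prefixes of the traces in \(T_i\) form a finite rooted tree. Because the stopping rule is deterministic and depends only on \((x_i,\text{suffix})\), no trace in \(T_i\) is a proper prefix of another that \(\halt\) would also stop at. So the complete traces are exactly the leaves, and every internal node \(p\) has children indexed by the actions \(a\) with \(pa\) extendable. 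The evenness condition at coordinate \(i\) says that, for each fixed context on the other coordinates, the sum of \(\eta\) over the leaves of the whole tree \(T_i\) vanishes.

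The key step is a peeling lemma in the spirit of \Cref{thm:product-peeling}. We want to find, for each coordinate \(i\), an internal node \(p_i\) of \(T_i\) and a nonzero \(\Ftwo\)-tensor \(\zeta\) on \(\prod_i \mathrm{Ch}(p_i)\), where \(\mathrm{Ch}(p_i)\) is the set of children of \(p_i\). Here \(\zeta(c_1,\ldots,c_d)\) is the \(\eta\)-mass of the realizable patterns whose \(i\)th trace passes through \(c_i\) for all \(i\). We also need \(\zeta\) to have zero marginal along each coordinate when restricted to realizable configurations.

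We would construct this one coordinate at a time by induction on tree depth. Aggregate \(\eta\) to subtree masses: for a node \(p\), let \(\eta_p\) denote the sum over leaves below \(p\). Evenness at the root says that the children masses of the root sum to zero in every context. If the aggregated tensor at depth one is nonzero, we take \(p_i\) to be the root. Otherwise some child subtree carries a nonzero restricted tensor that is again even, since the sum over its leaves is zero, so we descend into it. Finiteness guarantees termination, and the leaves themselves cannot be the terminal choice because a single-leaf nonzero mass would violate evenness. The delicate point is doing this simultaneously across coordinates while preserving evenness in the other coordinates. Restricting coordinate \(i\) to a subtree, or aggregating it, is a linear map acting only on index \(i\), so it commutes with the marginal conditions of the other coordinates, provided we quantify over all contexts rather than only realizable ones. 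We expect this bookkeeping to be the main obstacle: realizable patterns \(u\in\cH'|_{D\setminus\{x\}}\) are not product sets, and we must check that the zero-marginal condition over realizable fibers survives aggregation.

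Finally, in the spirit of \Cref{thm:trace-to-base}, set \(z_i=x_i\cdot p_i\) and \(D'=\{z_1,\ldots,z_d\}\). These are \(d\) distinct states. For each \(h\in\cH'\) whose rollout passes through every \(p_i\), the child \(c_i\) is determined by \(h(z_i)\). Define \(\eta'\) on \(\cH''|_{D'}\) by pushing \(\eta\) forward along the map from full trace patterns to next-action patterns at \(D'\), restricted to \(h\) reaching all \(p_i\); call this subclass \(\cH''\). Nonzeroness of \(\eta'\) is the conclusion of the peeling step. Evenness of \(\eta'\) follows from evenness of \(\eta\) because of how the fibers compare. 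The fiber of \(\eta'\) over a pattern on \(D'\setminus\{z_i\}\) is a union of \(\eta\)-fibers over patterns on \(D\setminus\{x_i\}\), intersected with the subtree event at \(p_i\). That event is exactly what the peeling invariant controls. This produces a \(d\)-dimensional even pseudo-cube for \(\cH\), so \(\ParDim(\Roll_\halt(\cH))\le\ParDim(\cH)\).
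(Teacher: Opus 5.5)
Your proposal is correct and is essentially the paper's proof. Prefix-freeness of the trace sets gives prefix partition trees, your coordinatewise descent is \Cref{lem:zero-mass-branching} applied inside \Cref{thm:product-peeling}, and your final step is \Cref{thm:trace-to-base}; restricting to the subclass $\cH''$ of hypotheses reaching every $p_i$ is a harmless variant of the paper's $\xi$, since the support of the child quotient lies in $\cH''|_{D'}$. The bookkeeping you flag as the main obstacle is immediate once $\eta$ is extended by zero to $\prod_i T_i$, because each full line meets $V$ in exactly one coordinate-deletion fiber or not at all.

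One claim needs justification that you do not give: that the states $x_i\cdot p_i$ are distinct. This is not automatic (e.g.\ $x_1=(e,w)$, $x_2=(e,wa)$, $p_1=a$, $p_2=\eps$). It follows as in the paper: if $x_r\cdot p_r=x_s\cdot p_s$ with $r\ne s$, then deleting coordinate $r$ is injective on the base table, every deletion fiber is a singleton, and no nonzero even certificate can exist.
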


\Cref{sec:parity-rollout-nonincrease} is devoted to proving \Cref{thm:parity-rollout-nonincrease}.  Its proof is the only place where prefix-tree structure of autoregressive rollouts is used.

\begin{theorem}[DS dimension can increase under rollout]
\label{thm:ds-rollout-increase}
There are finite sets \(\cE,\cA\), an aligned state space \(\cX=\cE\times\cA^*\), a stopping rule \(\halt:\cX\times\cA^*\to\{0,1\}\), and a finite next-action class \(\cH\subseteq\cA^\cX\) such that
\[
        \DSdim(\cH)=2,
        \qquad
        \DSdim(\Roll_\halt(\cH))\ge3.
\]
In particular, the analogue of \Cref{thm:parity-rollout-nonincrease} with \(\DSdim\) in place of \(\ParDim\) is false.
\end{theorem}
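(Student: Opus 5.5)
The plan is to give an explicit finite construction and then verify the two dimension claims. The example is built so that a trace-level pseudo-cube uses disagreements at incompatible prefix depths, as described in the introduction. For the coordinate \(i\) neighbor of a trace, sometimes the \emph{first} action at the root changes, and sometimes only the \emph{second} action, at a child state, changes. The effect is that the pseudo-cube witness never lives on one fixed set of local states.

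\textbf{Setup.} Take \(\cA=\{0,1\}\) and three environments \(\cE=\{e_1,e_2,e_3\}\). Let \(\halt\) be the fixed-output-length rule \(\halt((e,w),y)=1\iff |y|\ge2\), so every \(h\) is trivially pointwise \(\halt\)-halting. Let \(x_i=(e_i,\eps)\). The rollout label at \(x_i\) is the pair
\[
(h(x_i),\,h(x_i\cdot h(x_i))),
\]
so only the nine states \(x_i\), \(x_i\cdot0\), \(x_i\cdot1\) are relevant. All other states receive a fixed default value shared by every \(h\in\cH\).

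\textbf{Trace-level witness.} Let \(L=\{00,01,10,11\}\) be the trace alphabet. Choose a pseudo-cube \(Q\subseteq L^3\) on \(\{x_1,x_2,x_3\}\) with the following two properties.
\begin{itemize}[leftmargin=2.2em,itemsep=2pt,topsep=2pt]
\item It is irreducible, in the sense that no two of its patterns need to be merged.
\item For each coordinate \(i\), the neighbor edges in that coordinate mix both kinds: edges \(\{a0,a1\}\) that change only the second action, and edges with different first actions.
\end{itemize}
A natural first candidate uses labels \(t_i\in L\) subject to a parity-type constraint linking the first-action bits \(a_1,a_2,a_3\). This forces coordinate-\(i\) edges to be second-action flips on part of \(Q\) and first-action flips elsewhere.

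\textbf{The local class.} Set \(\cH=\{h_q:q\in Q\}\). Here \(h_q(x_i)\) is the first action of \(q_i\), and \(h_q(x_i\cdot a)\) is the second action of \(q_i\) when \(a\) equals that first action. On the off-path child \(x_i\cdot(1-a)\), we set a value chosen freely to break local cubes; this is the design freedom. Then \(\Roll_\halt(\cH)|_{\{x_1,x_2,x_3\}}=Q\), which gives \(\DSdim(\Roll_\halt(\cH))\ge3\).

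\textbf{Local bound \(\DSdim(\cH)\le2\).} The lower bound \(\DSdim(\cH)\ge2\) only requires exhibiting a square among two of the relevant states, which the mixing property makes easy. For the upper bound, a \(d\)-pseudo-cube needs at least \(2^d\) distinct patterns, and \(|\cH|=|Q|\) is small. If \(|Q|=8\), a \(3\)-pseudo-cube in \(\cH|_D\) would have to be all of \(\cH\), with each pattern having exactly one neighbor per coordinate. That is, \(\cH|_D\) would be isomorphic to the binary \(3\)-cube \(\{0,1\}^3\) with each coordinate of \(D\) splitting \(\cH\) into two halves by a single bit.

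This reduces the upper bound to a finite check over triples \(D\) drawn from the nine relevant states, together with the default states, which are constant and useless. The check is that no triple of local states induces such a perfect coordinate-wise splitting of \(\cH\). Root states \(x_i\) split \(\cH\) by first action of \(q_i\). Child states split by a mixture of second actions and the off-path defaults. The parity constraint on first actions ensures the three roots are not independent bits. The off-path values are chosen so that no child coordinate completes a cube with the others. I would carry out this check by hand, organized by how many roots lie in \(D\), possibly confirmed by exhaustive enumeration since everything is finite.

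\textbf{Main obstacle.} The main obstacle is this last step: finding \(Q\) and off-path values that simultaneously realize a trace-level \(3\)-pseudo-cube and kill every local \(3\)-pseudo-cube. If an \(8\)-element \(Q\) cannot achieve this, I would enlarge \(Q\), for example to labels using all four traces per coordinate. In that case the counting shortcut fails, and the verification becomes a genuine case analysis over candidate sub-pseudo-cubes.
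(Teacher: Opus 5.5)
Your proposal does not prove the statement. The theorem is a pure existence claim, so its whole content is an explicit class together with a verification, and you supply neither. Moreover, both concrete routes you sketch provably fail.

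First, a $3$-pseudo-cube with exactly $8$ elements must be a product $A_1\times A_2\times A_3$ with $|A_i|=2$. Each first-coordinate slice is a $2$-pseudo-cube and so has at least $4$ points. Hence there are exactly two slices, each a full $2\times2$ square, and the coordinate-$1$ flips force the two squares to coincide. So only two traces $t_i\ne t_i'$ occur in each coordinate, and your ``mixing'' requirement cannot hold. Worse, a single state $s_i$ then separates them: $x_i$ if $t_i,t_i'$ have different first actions, and $x_i\cdot a$ if both start with $a$. Either state is on-path for every hypothesis, so the off-path freedom is irrelevant. Thus $\cH|_{\{s_1,s_2,s_3\}}$ is a relabeled binary cube, and $\DSdim(\cH)\ge3$.

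Second, under a parity constraint such as $a_1\oplus a_2\oplus a_3=0$, no coordinate can change its first action alone. So every flip is a second-action flip, which is the opposite of what you claim. Each set of patterns with a fixed admissible first-action vector $a^0$ is then closed under flips, hence a pseudo-cube. On it the on-path states $x_i\cdot a^0_i$ carry a base $3$-pseudo-cube.

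The missing idea is the obstruction behind both failures. Suppose $Q$ contains a sub-pseudo-cube in which, for each coordinate $i$, all traces extend a common prefix $u_i$ and every coordinate-$i$ flip changes the action right after $u_i$. Then reading off that next action maps the sub-cube onto a base $3$-pseudo-cube on the on-path states $x_i\cdot u_i$. A counterexample must therefore use flips at incompatible depths in an essential, non-removable way. It must also exclude base witnesses on mixed triples, such as a root and its own child. Your fallback ``enlarge $Q$'' is exactly where all this work lies.

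The paper needs depth-mixing in only one coordinate. That coordinate is a two-step rollout: a coarse sign $\mathsf P/\mathsf N$, then a refined token, with a dedicated off-branch symbol $\bot$ on the wrong branch. The other two coordinates are one-step row and column labels. A $3\times4$ table $T_{r,s}$ is chosen with two properties:
\begin{itemize}
\item Every cell has at least two tokens, and each token's row--column support has minimum degree $2$. This yields the trace $3$-pseudo-cube.
\item The mixed-sign cells, the within-$\mathsf P$ cells and the within-$\mathsf N$ cells each form a forest.
\end{itemize}
The slice lemma and the forest lemma then rule out every base $3$-pseudo-cube.
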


\begin{corollary}[Parity dimension and DS dimension can separate]
\label{cor:pardim-ds-separation}
For the example in \Cref{thm:ds-rollout-increase}, the rollout class satisfies
\[
        \ParDim(\Roll_\halt(\cH))=2,
        \qquad
        \DSdim(\Roll_\halt(\cH))\ge3.
\]
In particular, \(\ParDim\) can be strictly smaller than \(\DSdim\).
\end{corollary}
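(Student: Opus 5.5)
The separation will follow from the chain of inequalities already established; only \(\ParDim\le 2\) needs a separate argument.

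\emph{Lower bound on \(\ParDim\).} Let \(\cH\) be the finite class of \Cref{thm:ds-rollout-increase}, where \(\DSdim(\cH)=2\). By \Cref{thm:parity-rollout-nonincrease} and the right inequality of \Cref{cor:pardim-ds},
\[
\ParDim(\Roll_\halt(\cH))\le\ParDim(\cH)\le\DSdim(\cH)=2 .
\]
By \Cref{thm:ds-rollout-increase}, \(\DSdim(\Roll_\halt(\cH))\ge 3\). So the separation \(\ParDim<\DSdim\) on the rollout class already holds, and only the matching bound \(\ParDim(\Roll_\halt(\cH))\ge 2\) is left.

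\emph{Step 1: a 2-dimensional even pseudo-cube in the rollout class.} I will exhibit one explicitly, using the construction of \Cref{sec:ds-blowup}. The simplest route is to use the \(2\)-pseudo-cube witnessing \(\DSdim(\cH)=2\) and look for a \(2\)-dimensional binary-style square inside it. Concretely, I want two initial states \(x_1,x_2\) and four hypotheses \(h_{ab}\), \(a,b\in\{0,1\}\), such that \(\Roll(h_{ab})(x_i)\) depends only on the \(i\)-th index and takes two distinct values. Such hypotheses realize a product \(\{p_0,p_1\}\times\{q_0,q_1\}\), and the all-one vector on these four patterns is an even certificate: every fiber of a one-coordinate deletion has exactly two elements. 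This is the same argument used for binary cubes in \Cref{cor:pardim-ds}.

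\emph{Step 2: why such a square should exist.} The \(3\)-pseudo-cube in \(\Roll_\halt(\cH)|_D\) with \(|D|=3\) is the natural source. Inside a pseudo-cube, a pair of coordinates with two independently varying trace values yields a \(2\times2\) grid once the third coordinate is fixed appropriately. Alternatively, I can take a one-step-stopping state, if the construction has one, on which rollout agrees with \(\cH\) itself and inherits a square directly.

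\emph{Main obstacle.} The hard part is verifying Step 1 for the specific finite construction, which I have not yet seen. A general \(d\)-pseudo-cube does not automatically contain a full \(2\times2\) product of patterns. If no clean square exists, I will fall back on a small \(\Ftwo\) linear-algebra check: find a nonzero \(\eta\) supported on a few rollout patterns over some two points such that each deletion fiber has even weight. Failing that, I will use the general lower bound \(\ParDim\ge\tfrac12\DSdim\)-type reasoning via \(\sup_n\mu\ge\tfrac12\DSdim\ge 3/2\), together with integrality of \(\ParDim\), which forces \(\ParDim(\Roll_\halt(\cH))\ge2\). I expect this last route to be the cleanest: combine \Cref{cor:pardim-ds} with \(\DSdim(\Roll_\halt(\cH))\ge3\) to get \(\ParDim\ge 3/2\), hence \(\ge 2\) since parity dimension is an integer. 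With the upper bound above, this gives equality.
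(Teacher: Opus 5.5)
Your committed route is the paper's proof: $\ParDim(\Roll_\halt(\cH))\le\ParDim(\cH)\le\DSdim(\cH)=2$ by \Cref{thm:parity-rollout-nonincrease} and \Cref{cor:pardim-ds}, and $\ParDim(\Roll_\halt(\cH))\ge\tfrac12\DSdim(\Roll_\halt(\cH))\ge\tfrac32$ by the outer inequalities of \Cref{cor:pardim-ds}, which becomes $2$ because a bounded supremum of integers is an integer. Steps 1--2 are unnecessary, and your first heading should say ``upper bound.'' A direct witness is also easy here: every cell $T_{r,s}$ is nonempty, so the rollouts realize $\{R_0,R_1\}\times\{C_0,C_1\}$ on $(2,\eps),(3,\eps)$, and the all-one vector is an even certificate.
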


\begin{proof}[Proof of \Cref{cor:pardim-ds-separation}]
By \Cref{cor:pardim-ds} and \Cref{thm:ds-rollout-increase},
\[
        \ParDim(\cH)\le\DSdim(\cH)=2.
\]
Hence \Cref{thm:parity-rollout-nonincrease} gives
\[
        \ParDim(\Roll_\halt(\cH))\le2.
\]
On the other hand, \(\DSdim(\Roll_\halt(\cH))\ge3\) by
\Cref{thm:ds-rollout-increase}.  The lower comparison in \Cref{cor:pardim-ds}
then gives
\[
        \ParDim(\Roll_\halt(\cH))\ge\frac32.
\]
Since \(\ParDim\) is a supremum over integer dimensions and is at most \(2\)
here, this forces
\[
        \ParDim(\Roll_\halt(\cH))=2.
\]
\end{proof}

\Cref{sec:ds-blowup} gives the finite construction and verification.

We now prove the main theorem stated in the introduction.

\begin{proof}[Proof of \Cref{thm:pac-bound}]
\Cref{thm:parity-density} applied to the rollout class and \Cref{thm:parity-rollout-nonincrease} give
\[
        \mu_{\Roll_\halt(\cH)}(m)
        \le
        \ParDim(\Roll_\halt(\cH))
        \le
        \ParDim(\cH)
        \qquad\forall m\ge1.
\]
If \(\ParDim(\cH)=\infty\), the claimed upper bound is vacuous.  Otherwise,
\(\ParDim(\cH)\) is an integer, and \(\mu_{\Roll_\halt(\cH)}(m)\le\ParDim(\cH)\)
for every \(m\ge1\), so the ceiling in \Cref{thm:one-inclusion-loo} does not
change the bound.  Apply \Cref{thm:one-inclusion-loo} with
\(K=\ParDim(\cH)\), then apply \Cref{thm:loo-to-pac}.  The DS-dimension
version follows from \Cref{cor:pardim-ds}.
\end{proof}

\begin{corollary}[Worst-case optimality {\citep{DanielyShalev2014,Pabbaraju2026}}]
\label{cor:worst-case-optimality}
The dependence on \(\DSdim(\cH)\) in \Cref{thm:pac-bound} is worst-case optimal.  Indeed, if \(\halt(x,\eps)=0\) for every \(x\in\cX\) and \(\halt(x,a)=1\) for every \(x\in\cX\) and every \(a\in\cA\), viewing \(a\) as a length-one string in \(\cA^*\), then \(\Roll_\halt(\cH)\) is isomorphic to \(\cH\) with labels written as length-one strings.
\end{corollary}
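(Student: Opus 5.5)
The plan is to exhibit an explicit isomorphism between the one-step rollout class and \(\cH\), and then transfer the standard DS-dimension lower bound across it. First check that, under the stated stopping rule, every \(h\in\cH\) is pointwise \(\halt\)-halting, so \(\Roll_\halt(\cH)\) is well defined. Trace \Cref{alg:autoregressive-rollout} on input \((h,\halt,x)\): initially \(y=\eps\), and \(\halt(x,\eps)=0\), so one iteration runs and sets \(y=h(x\cdot\eps)=h(x)\). After that, \(\halt(x,h(x))=1\), so the loop exits and \(\Roll_\halt(h)(x)=h(x)\), a length-one string. No assumption on \(\cH\) is needed, so this holds for every class \(\cH\subseteq\cA^\cX\).

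Next, let \(\iota:\cA\to\cA^*\) be the injection sending \(a\) to the length-one string \(a\). The computation above shows \(\Roll_\halt(h)=\iota\circ h\) for every \(h\), so \(\Roll_\halt(\cH)=\{\iota\circ h:h\in\cH\}\). Because \(\iota\) is injective, relabelling by \(\iota\) is a bijection between the two classes that preserves every combinatorial structure depending only on equality of labels. This covers restrictions \(\cH|_D\), pseudo-cubes, even pseudo-cubes, one-inclusion hypergraphs, and zero-one loss. Consequently:
\begin{itemize}
\item \(\DSdim(\Roll_\halt(\cH))=\DSdim(\cH)\);
\item \(\ParDim(\Roll_\halt(\cH))=\ParDim(\cH)\);
\item a learner for one class converts into a learner for the other with identical error (compose with \(\iota\) or with its left inverse on outputs, treating outputs outside the image of \(\iota\) as arbitrary mistakes).
\end{itemize}
Hence \(n_{\PAC}^{\varepsilon,\delta}(\Roll_\halt(\cH))=n_{\PAC}^{\varepsilon,\delta}(\cH)\).

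Finally, invoke the known multiclass lower bound \citep{DanielyShalev2014,Pabbaraju2026}: for any class with \(\DSdim(\cH)=d\), \(n_{\PAC}^{\varepsilon,\delta}(\cH)=\Omega((d+\log(1/\delta))/\varepsilon)\), at least for \(d\ge1\) and \(\varepsilon,\delta\) below absolute constants. Combined with the equality above, the upper bound of \Cref{thm:pac-bound} in terms of \(\DSdim(\cH)\) cannot be improved in general: for every \(d\), some \(\cH\) with \(\DSdim(\cH)=d\) and some stopping rule force the matching lower bound.

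The only real obstacle is bookkeeping in the learner transfer. The learner for \(\Roll_\halt(\cH)\) may output strings of length \(\neq 1\), and these must be mapped back to \(\cA\). Any fixed choice works, because such outputs are already wrong, since every target label has length one. The \(\log(1/\delta)/\varepsilon\) term comes from the standard lower bound whenever \(\cH\) has two functions disagreeing somewhere, which holds once \(d\ge1\).
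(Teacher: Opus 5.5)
Your proposal is correct and follows the paper's own argument. Under one-step stopping, \(\Roll_\halt(h)(x)=h(x)\) as a length-one string, so ordinary multiclass learning of \(\cH\) embeds into exact-trace learning of \(\Roll_\halt(\cH)\) and the standard DS-dimension lower bound transfers; you simply spell out the embedding and the learner conversion more explicitly than the paper does.
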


\begin{proof}[Proof of \Cref{cor:worst-case-optimality}]
Under one-step stopping, \(\Roll_\halt(h)(x)=h(x)\) as a length-one action string.  Thus ordinary realizable multiclass learning of \(\cH\) embeds into exact-trace learning of \(\Roll_\halt(\cH)\).  The multiclass lower bound in terms of \(\DSdim(\cH)\) applies to this special case.
\end{proof}

\section{Finite low-coordinate spanning and density}
\label{sec:finite-parspan-density}

This section proves the finite-dimensional linear-algebra step behind
\Cref{thm:parity-density}.  The arguments take place on finite restrictions of
an arbitrary function class \(\cH\subseteq\cY^\cX\); the global class itself
need not be finite.  The key finite statement is that if no large restriction
\(\cH|_{D'}\) is an even pseudo-cube, then low-coordinate functions span all
functions on \(\cH|_D\).  A touch-counting argument then converts this spanning
statement into the desired density bound:
\[
        \mathsf L_k^D(\cH)=\Ftwo^{\cH|_D},
        \qquad
        \cH|_D\ne\emptyset\Longrightarrow \dens_D(\cH|_D)\le k.
\]
The proof is inspired by the polynomial spanning and touch-counting argument of
\citet{Pabbaraju2026}.  The conclusion here is a refinement of the usual
DS-dimension density theorem: Pabbaraju's bound follows from
\(\ParDim(\cF)\le\DSdim(\cF)\), while
\Cref{cor:pardim-ds-separation} shows that this refinement can be strict.  By
\Cref{cor:pardim-ds}, the sharper parameter is still within a factor \(2\) of
\(\DSdim\).

The nonemptiness condition is needed only when the density expression is
evaluated, because density divides by the size of the finite restriction.  The
argument is independent of the autoregressive structure used later.

\subsection{Marginals and low-coordinate duality}

We first set up the finite linear algebra.  Marginal maps push parity vectors
from a larger finite restriction to a smaller one.  Low-coordinate functions are
the dual objects: they are functions on \(\cH|_D\) that only inspect a small
coordinate subset.

\begin{definition}[Marginal maps]
\label{def:marginal-maps}
Let \(\cH\subseteq\cY^\cX\) be a function class.  For finite nested domain sets
\[
        D''\subseteq D'\subseteq\cX,
\]
such that \(\cH|_{D'}\) is finite, the marginalization map from \(D'\) to \(D''\) is the linear map
\[
        M_{D''}^{D'}[\cH]:
        \Ftwo^{\cH|_{D'}}
        \to
        \Ftwo^{\cH|_{D''}}
\]
defined by
\[
        \bigl(M_{D''}^{D'}[\cH]\eta\bigr)(u)
        =
        \sum_{\substack{v\in\cH|_{D'}:\\ v|_{D''}=u}}\eta(v),
        \qquad u\in\cH|_{D''}.
\]
\end{definition}

\begin{lemma}[Compositionality of marginalization]
\label{lem:marginalization-compositionality}
Let \(\cH\subseteq\cY^\cX\) be a function class.  Let \(D'''\subseteq D''\subseteq D'\subseteq\cX\) be finite domain sets such that \(\cH|_{D'}\) is finite.  Then
\[
        M_{D'''}^{D'}[\cH]
        =
        M_{D'''}^{D''}[\cH]M_{D''}^{D'}[\cH].
\]
\end{lemma}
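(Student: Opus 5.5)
The plan is to prove the identity pointwise, by evaluating both sides on an arbitrary vector \(\eta\in\Ftwo^{\cH|_{D'}}\) at an arbitrary pattern \(w\in\cH|_{D'''}\).

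First we check that all three maps are well defined. Since \(\cH|_{D'}\) is finite, the restriction maps \(\cH|_{D'}\to\cH|_{D''}\) and \(\cH|_{D'}\to\cH|_{D'''}\) are surjective. Indeed, every \(u\in\cH|_{D''}\) has the form \(h|_{D''}\), and so it is the restriction of \(h|_{D'}\). Hence \(\cH|_{D''}\) and \(\cH|_{D'''}\) are finite as well, and \(M_{D'''}^{D''}[\cH]\) is defined. Every sum that appears is then a finite sum in \(\Ftwo\).

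Next we expand the right-hand side:
\[
\bigl(M_{D'''}^{D''}[\cH]M_{D''}^{D'}[\cH]\eta\bigr)(w)
=\sum_{\substack{u\in\cH|_{D''}:\\ u|_{D'''}=w}}\ \sum_{\substack{v\in\cH|_{D'}:\\ v|_{D''}=u}}\eta(v).
\]
We reindex this double sum over pairs \((u,v)\) as a single sum over \(v\). The key observation is that restriction is transitive: \(D'''\subseteq D''\) gives \((v|_{D''})|_{D'''}=v|_{D'''}\). So the map \(v\mapsto(v|_{D''},v)\) is a bijection from \(\{v\in\cH|_{D'}:v|_{D'''}=w\}\) onto the set of pairs indexed above.

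\begin{itemize}
\item Given a valid pair \((u,v)\), we have \(u=v|_{D''}\), so \(v|_{D'''}=u|_{D'''}=w\).
\item Conversely, if \(v|_{D'''}=w\), then \(u:=v|_{D''}\) lies in \(\cH|_{D''}\) and satisfies \(u|_{D'''}=w\).
\item The pair is determined by \(v\) alone, so the map is injective.
\end{itemize}

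The double sum therefore equals
\[
\sum_{\substack{v\in\cH|_{D'}:\\ v|_{D'''}=w}}\eta(v)=\bigl(M_{D'''}^{D'}[\cH]\eta\bigr)(w).
\]
Since \(w\) and \(\eta\) were arbitrary, the two linear maps coincide.

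There is no real obstacle here. The only points needing care are the finiteness and well-definedness of the intermediate restriction \(\cH|_{D''}\), and the transitivity of restriction used in the bijection.
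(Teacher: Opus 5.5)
Your proof is correct and follows essentially the same route as the paper. The paper likewise observes that the right-hand side sums $\eta(v)$ over $v\in\cH|_{D'}$ with $v|_{D'''}=w$, grouped by $v|_{D''}$, which is your reindexing via transitivity of restriction; your explicit check that $\cH|_{D''}$ is finite, so that the middle map is defined, is a detail the paper leaves implicit.
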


\begin{proof}[Proof of \Cref{lem:marginalization-compositionality}]
For \(u\in\cH|_{D'''}\) and \(\eta\in\Ftwo^{\cH|_{D'}}\), the left-hand side is
\[
        \sum_{\substack{v\in\cH|_{D'}:\\ v|_{D'''}=u}}\eta(v).
\]
The right-hand side first groups each \(v\in\cH|_{D'}\) by its restriction to \(D''\), and then sums over exactly those \(D''\)-patterns whose restriction to \(D'''\) is \(u\).  It is therefore the same sum.
\end{proof}

\begin{definition}[Low-coordinate spaces and annihilators]
\label{def:low-coordinate-spans}
Let \(\cH\subseteq\cY^\cX\) be a function class, let \(D\subseteq\cX\) be finite with \(\cH|_D\) finite, and let \(D'\subseteq D\).  Define
\[
        J_{D'}^D(\cH)=
        \{f\in\Ftwo^{\cH|_D}:
        f(u)=f(v)\text{ whenever }u|_{D'}=v|_{D'}\}.
\]
Equivalently, \(J_{D'}^D(\cH)\) is the pullback of the full function space \(\Ftwo^{\cH|_{D'}}\) along the restriction map \(\cH|_D\to\cH|_{D'}\).  Thus
\[
        \dim J_{D'}^D(\cH)=\left|\cH|_{D'}\right|.
\]
For \(k\ge0\), define the low-coordinate span
\[
        \mathsf L_k^D(\cH)=
        \sum_{\substack{D'\subseteq D\\ |D'|\le k}}J_{D'}^D(\cH)
        \subseteq\Ftwo^{\cH|_D}.
\]
This is the space of all \(\Ftwo\)-valued functions on \(\cH|_D\) that can be written as sums of functions, each of which depends on at most \(k\) coordinates from \(D\).

Define the order-\(k\) marginal annihilator
\[
        \mathsf N_k^D(\cH)=
        \bigcap_{\substack{D'\subseteq D\\ |D'|\le k}}
        \ker M_{D'}^D[\cH]
        \subseteq\Ftwo^{\cH|_D}.
\]
This is the space of parity vectors whose marginal on every coordinate subset of size at most \(k\) is zero.
\end{definition}

\begin{lemma}[Low-coordinate duality]
\label{lem:low-coordinate-duality}
Let \(\cH\subseteq\cY^\cX\) be a function class, let \(D\subseteq\cX\) be finite with \(\cH|_D\) finite, and let \(k\ge0\).  Then
\[
        \bigl(\mathsf L_k^D(\cH)\bigr)^\perp=\mathsf N_k^D(\cH),
\]
under the pairing
\[
        \langle \eta,f\rangle=
        \sum_{u\in\cH|_D}\eta(u)f(u).
\]
\end{lemma}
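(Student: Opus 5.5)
The plan is to prove the duality one subset $D'$ at a time, and then pass to the sum using the standard finite-dimensional identity $(\sum_i W_i)^\perp=\bigcap_i W_i^\perp$. The pairing $\langle\eta,f\rangle=\sum_u\eta(u)f(u)$ on $\Ftwo^{\cH|_D}$ is the standard dot product in the basis of point indicators, so it is nondegenerate. Since $\mathsf L_k^D(\cH)$ is by definition the sum of the subspaces $J_{D'}^D(\cH)$ over $|D'|\le k$, and $\mathsf N_k^D(\cH)$ is the intersection of the kernels $\ker M_{D'}^D[\cH]$ over the same index set, it suffices to show
\[
        \bigl(J_{D'}^D(\cH)\bigr)^\perp=\ker M_{D'}^D[\cH]
        \qquad\text{for every }D'\subseteq D.
\]

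For the single-subset identity, I would use the description of $J_{D'}^D(\cH)$ as a pullback. Write $\rho:\cH|_D\to\cH|_{D'}$ for the restriction map. The space $J_{D'}^D(\cH)$ is then spanned by the fiber indicators $\mathbf 1_{\rho^{-1}(w)}$ for $w\in\cH|_{D'}$. Indeed, any $f$ in $J_{D'}^D(\cH)$ is constant on fibers, so $f=\sum_w g(w)\mathbf 1_{\rho^{-1}(w)}$, where $g(w)$ is the common value of $f$ on the fiber over $w$.

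Pairing $\eta$ with a fiber indicator gives
\[
        \langle\eta,\mathbf 1_{\rho^{-1}(w)}\rangle=\sum_{v:\,v|_{D'}=w}\eta(v)=(M_{D'}^D[\cH]\eta)(w).
\]
Hence $\eta\perp J_{D'}^D(\cH)$ if and only if every coordinate of $M_{D'}^D[\cH]\eta$ vanishes, which is exactly $\eta\in\ker M_{D'}^D[\cH]$. Equivalently, the pairing satisfies the adjunction $\langle\eta,g\circ\rho\rangle=\langle M_{D'}^D[\cH]\eta,g\rangle_{\cH|_{D'}}$.

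Combining over all $D'$ with $|D'|\le k$ gives $(\mathsf L_k^D)^\perp=\bigcap_{|D'|\le k}(J_{D'}^D)^\perp=\mathsf N_k^D$. The containment $\subseteq$ holds because each $J_{D'}^D$ sits inside the sum. The containment $\supseteq$ holds by linearity of the pairing in $f$, since a vector orthogonal to each summand is orthogonal to the sum.

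I do not expect a real obstacle. The only points to check are that all spaces are finite-dimensional, which follows from $\cH|_D$ being finite, so that the marginal maps are well defined. The degenerate case $D'=\emptyset$ also needs a remark. There $\cH|_\emptyset$ is a single empty pattern when $\cH|_D\neq\emptyset$, so $J_\emptyset^D$ is the space of constant functions and $\ker M_\emptyset^D$ is the set of vectors with total sum zero, which is consistent with the general identity. If $\cH|_D=\emptyset$, both sides of the claimed equality are the zero space.
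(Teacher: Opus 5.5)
Your proposal is correct and follows essentially the same route as the paper. The paper also establishes $\bigl(J_{D'}^D(\cH)\bigr)^\perp=\ker M_{D'}^D[\cH]$ one subset at a time, via the adjunction $\langle\eta,\phi\circ\rho\rangle=\sum_{w}\phi(w)\bigl(M_{D'}^D[\cH]\eta\bigr)(w)$, and then passes to the sum; using fiber indicators is the same computation with $\phi$ ranging over point indicators.
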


\begin{proof}[Proof of \Cref{lem:low-coordinate-duality}]
For every \(D'\subseteq D\), a function \(f\in J_{D'}^D(\cH)\) has the form \(f=\phi\circ(u\mapsto u|_{D'})\) for some \(\phi:\cH|_{D'}\to\Ftwo\).  Therefore
\[
        \langle\eta,f\rangle
        =
        \sum_{w\in\cH|_{D'}}\phi(w)
        \bigl(M_{D'}^D[\cH]\eta\bigr)(w).
\]
Thus \(\eta\) annihilates \(J_{D'}^D(\cH)\) if and only if \(M_{D'}^D[\cH]\eta=0\).  Taking the orthogonal complement of the sum defining \(\mathsf L_k^D(\cH)\) gives the displayed identity.
\end{proof}

\subsection{From parity vanishing to spanning}

The next step translates absence of large even pseudo-cubes into vanishing of
high-order marginal annihilators, and then into full low-coordinate spanning.

\begin{lemma}[Top marginal annihilators are even certificates]
\label{lem:top-annihilator-even}
Let \(\cH\subseteq\cY^\cX\) be a function class, and let \(\emptyset\ne D\subseteq\cX\) be finite with \(\cH|_D\) finite.  Then \(\cH|_D\) is an even pseudo-cube on \(D\) if and only if
\[
        \mathsf N_{|D|-1}^D(\cH)\ne0.
\]
\end{lemma}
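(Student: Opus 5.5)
The plan is to unwind both sides into the same linear condition on a parity vector \(\eta\in\Ftwo^{\cH|_D}\). First we identify the even-pseudo-cube condition of \Cref{def:even-pseudocube} in the marginal notation of \Cref{def:marginal-maps}. Set \(d=|D|\) and take \(\cH'=\cH\) restricted to finitely many representatives. We pick one \(h\in\cH\) for each pattern in the finite set \(\cH|_D\), so that \(\cH'|_D=\cH|_D\). Then \(\cH'|_{D\setminus\{x\}}=\cH|_{D\setminus\{x\}}\) for every \(x\in D\). In this notation the defining condition of an even pseudo-cube says exactly that
\[
        M_{D\setminus\{x\}}^{D}[\cH]\eta=0\qquad\forall x\in D,
\]
for some nonzero \(\eta\).

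Next we show that this family of \(d\) kernel conditions equals \(\mathsf N_{d-1}^D(\cH)\) from \Cref{def:low-coordinate-spans}. One inclusion is trivial: the sets \(D\setminus\{x\}\) have size \(d-1\), so \(\mathsf N_{d-1}^D(\cH)\subseteq\bigcap_{x}\ker M_{D\setminus\{x\}}^D[\cH]\). For the converse, let \(D'\subseteq D\) have \(|D'|\le d-1\). Since \(D'\ne D\), there is some \(x\in D\setminus D'\), and hence \(D'\subseteq D\setminus\{x\}\subseteq D\). By \Cref{lem:marginalization-compositionality},
\[
        M_{D'}^D[\cH]\eta=M_{D'}^{D\setminus\{x\}}[\cH]\,M_{D\setminus\{x\}}^D[\cH]\eta=0.
\]
This is where \(D\ne\emptyset\) is used. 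It ensures that every proper subset, including \(D'=\emptyset\), sits inside some one-point deletion. The case \(d=1\) needs a separate look: there the only relevant \(D'\) is \(\emptyset\), and the single deletion is exactly \(D\setminus\{x\}=\emptyset\), so the argument still goes through.

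So \(\cH|_D\) is an even pseudo-cube iff there is a nonzero \(\eta\in\mathsf N_{d-1}^D(\cH)\). It remains to reconcile the subclass quantifier in \Cref{def:even-pseudocube}, which is stated for a finite subclass \(\cH'\), with the full restriction \(\cH|_D\). We read the lemma's phrase "\(\cH|_D\) is an even pseudo-cube" as the definition applied with \(\cH'\) chosen so that \(\cH'|_D=\cH|_D\), which is possible because \(\cH|_D\) is finite. All marginal sets then coincide with those of \(\cH\), as noted above.

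The only mild obstacle is bookkeeping. We must make sure the patterns \(u\) ranging over \(\cH'|_{D\setminus\{x\}}\) are the same as the index set of \(M_{D\setminus\{x\}}^D[\cH]\). We must also make sure that marginals evaluated at patterns with empty fibers are automatically zero. Both follow because every restriction pattern of \(\cH\) to a subset of \(D\) arises from some pattern in \(\cH|_D\).
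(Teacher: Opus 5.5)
Your proposal is correct and matches the paper's proof: the one-point deletion conditions are among those defining \(\mathsf N_{|D|-1}^D(\cH)\), and conversely each proper-subset marginal factors through some \(M_{D\setminus\{x\}}^D[\cH]\) by \Cref{lem:marginalization-compositionality}. Your extra step of choosing representatives so that \(\cH'|_D=\cH|_D\) is a harmless clarification that the paper leaves implicit.
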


\begin{proof}[Proof of \Cref{lem:top-annihilator-even}]
If \(0\ne\eta\in \mathsf N_{|D|-1}^D(\cH)\), then in particular
\[
        M_{D\setminus\{x\}}^D[\cH]\eta=0
        \qquad\forall x\in D.
\]
These are precisely the coordinate-deletion marginal conditions in \Cref{def:even-pseudocube}, so \(\cH|_D\) is an even pseudo-cube on \(D\).

Conversely, suppose \(0\ne\eta\in\Ftwo^{\cH|_D}\) satisfies all coordinate-deletion marginal conditions:
\[
        M_{D\setminus\{x\}}^D[\cH]\eta=0
        \qquad\forall x\in D.
\]
For every proper subset \(D''\subsetneq D\), choose \(x\in D\setminus D''\).  By \Cref{lem:marginalization-compositionality},
\[
        M_{D''}^D[\cH]
        =
        M_{D''}^{D\setminus\{x\}}[\cH]
        M_{D\setminus\{x\}}^D[\cH]
\]
shows that \(M_{D''}^D[\cH]\eta=0\).  Hence \(\eta\in\mathsf N_{|D|-1}^D(\cH)\), so \(\mathsf N_{|D|-1}^D(\cH)\ne0\).
\end{proof}

\begin{lemma}[High-order parity vanishing gives low-coordinate spanning]
\label{lem:parspan-vanishing}
Let \(\cH\subseteq\cY^\cX\) be a function class, let \(D\subseteq\cX\) be finite with \(\cH|_D\) finite, and let \(k\ge0\).  Suppose that
\[
        \mathsf N_{|D'|-1}^{D'}(\cH)=0
        \qquad
        \forall D'\subseteq D\text{ with }|D'|\ge k+1.
\]
Then
\[
        \mathsf L_k^D(\cH)=\Ftwo^{\cH|_D}.
\]
\end{lemma}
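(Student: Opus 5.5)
The plan is to prove the dual statement \(\mathsf N_k^D(\cH)=0\) and then read off spanning from \Cref{lem:low-coordinate-duality}.  Indeed, \(\Ftwo^{\cH|_D}\) is finite-dimensional and the pairing \(\langle\eta,f\rangle\) is the standard nondegenerate pairing.  So \(\bigl(\mathsf L_k^D(\cH)\bigr)^\perp=\mathsf N_k^D(\cH)=0\) forces \(\mathsf L_k^D(\cH)=\Ftwo^{\cH|_D}\).  Note that \(\cH|_{D'}\) is finite for every \(D'\subseteq D\), being the image of \(\cH|_D\) under restriction, so all marginal maps below are defined.

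Fix \(\eta\in\mathsf N_k^D(\cH)\).  I would prove by induction on \(m\), for \(m=k,k+1,\ldots,|D|\), the following claim:
\[
        M_{D'}^D[\cH]\eta=0
        \qquad\text{for every }D'\subseteq D\text{ with }|D'|=m.
\]
If \(|D|\le k\), there is nothing to induct.  In that case \(D\) itself is a subset of size at most \(k\), and \(M_D^D[\cH]\) is the identity, so \(\eta=0\) directly.

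In the general case, the base \(m=k\) is exactly the definition of \(\mathsf N_k^D(\cH)\).  For the inductive step, take \(D'\subseteq D\) with \(|D'|=m+1\ge k+1\) and set \(\eta'=M_{D'}^D[\cH]\eta\in\Ftwo^{\cH|_{D'}}\).  For each \(x\in D'\), \Cref{lem:marginalization-compositionality} gives
\[
        M_{D'\setminus\{x\}}^{D'}[\cH]\eta'=M_{D'\setminus\{x\}}^{D}[\cH]\eta .
\]
The right-hand side vanishes by the inductive hypothesis, since \(|D'\setminus\{x\}|=m\).  Thus \(\eta'\) satisfies all coordinate-deletion conditions on \(D'\).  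By the argument of \Cref{lem:top-annihilator-even}, this places \(\eta'\) in \(\mathsf N_{|D'|-1}^{D'}(\cH)\).  The hypothesis says this space is \(0\) because \(|D'|\ge k+1\), so \(\eta'=0\).  Taking \(m=|D|\) and \(D'=D\) gives \(\eta=M_D^D[\cH]\eta=0\).

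The only delicate points are bookkeeping, and I expect no real obstacle beyond them.  First, the induction must pass through every intermediate size, so that the vanishing of marginals on all \(m\)-subsets is available when treating \((m+1)\)-subsets.  Second, the degenerate case \(k=0\) needs a check: there \(\cH|_\emptyset\) is the single empty pattern and \(M_\emptyset^D\) is the total sum, and the same argument goes through unchanged.
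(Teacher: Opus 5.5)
Your proof is correct and is essentially the paper's argument: the paper also reduces to \(\mathsf N_k^D(\cH)=0\) via \Cref{lem:low-coordinate-duality}. Instead of inducting upward on \(|D'|\), it picks an inclusion-minimal \(D'\) with \(M_{D'}^D[\cH]\eta\ne0\) and shows, by the same compositionality and top-annihilator step, that this marginal is a nonzero element of \(\mathsf N_{|D'|-1}^{D'}(\cH)\) with \(|D'|\ge k+1\), so your size induction is the same minimal-counterexample argument run forward.
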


\begin{proof}[Proof of \Cref{lem:parspan-vanishing}]
By \Cref{lem:low-coordinate-duality}, it suffices to show \(\mathsf N_k^D(\cH)=0\).  Let \(\eta\in \mathsf N_k^D(\cH)\).  If \(\eta=0\), there is nothing to prove.  Otherwise, since \(M_D^D[\cH]\eta=\eta\ne0\), choose an inclusion-minimal set \(D'\subseteq D\) such that
\[
        \theta:=M_{D'}^D[\cH]\eta\ne0.
\]
For every \(x\in D'\), minimality gives
\[
        M_{D'\setminus\{x\}}^D[\cH]\eta=0.
\]
By \Cref{lem:marginalization-compositionality},
\[
        M_{D'\setminus\{x\}}^{D'}[\cH]\theta
        =
        M_{D'\setminus\{x\}}^D[\cH]\eta
        =0.
\]
For every proper subset \(D''\subsetneq D'\), choose \(x\in D'\setminus D''\).  By the same factorization argument as in \Cref{lem:top-annihilator-even},
\[
        M_{D''}^{D'}[\cH]\theta
        =
        M_{D''}^{D'\setminus\{x\}}[\cH]
        M_{D'\setminus\{x\}}^{D'}[\cH]\theta
        =0.
\]
Thus
\[
        0\ne\theta\in \mathsf N_{|D'|-1}^{D'}(\cH).
\]
Moreover \(|D'|\ge k+1\), because \(|D'|\le k\) would force \(M_{D'}^D[\cH]\eta=0\) from \(\eta\in \mathsf N_k^D(\cH)\).  This contradicts the assumed high-order vanishing.  Therefore every \(\eta\in \mathsf N_k^D(\cH)\) is zero.  By \Cref{lem:low-coordinate-duality},
\[
        \bigl(\mathsf L_k^D(\cH)\bigr)^\perp=0.
\]
Since \(\Ftwo^{\cH|_D}\) is finite-dimensional, \(\mathsf L_k^D(\cH)=\Ftwo^{\cH|_D}\).
\end{proof}

\subsection{From spanning to density}

The final step turns low-coordinate spanning into a one-inclusion density bound
by counting how many coordinates are touched by a basis.

\begin{lemma}[Low-coordinate bases imply low density]
\label{lem:touch-counting-density}
Let \(\cH\subseteq\cY^\cX\) be a function class, let \(D\subseteq\cX\) be finite with \(\cH|_D\) finite, and let \(k\ge0\).  Assume \(\cH|_D\ne\emptyset\).  If
\[
        \mathsf L_k^D(\cH)=\Ftwo^{\cH|_D},
\]
then
\[
        \dens_D(\cH|_D)\le k.
\]
\end{lemma}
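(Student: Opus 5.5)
The plan is a touch-counting argument: choose a basis of \(\Ftwo^{\cH|_D}\) made of low-coordinate functions, and compare, for each coordinate \(x\in D\), how many basis elements ignore \(x\) with the dimension of \(J_{D\setminus\{x\}}^D(\cH)\). Write \(V=\cH|_D\), which is finite and nonempty, so \(\dim\Ftwo^{V}=|V|\).

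\emph{Step 1 (a low-coordinate basis).} Since \(\mathsf L_k^D(\cH)=\Ftwo^{V}\), the union over all \(D'\subseteq D\) with \(|D'|\le k\) of bases of \(J_{D'}^D(\cH)\) spans \(\Ftwo^V\). I will extract from it a basis \(B\) of \(\Ftwo^V\). Then \(|B|=|V|\). For each \(b\in B\), I fix a set \(S_b\subseteq D\) with \(|S_b|\le k\) and \(b\in J_{S_b}^D(\cH)\). This gives
\[
        \sum_{b\in B}|S_b|\le k|V|.
\]

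\emph{Step 2 (per-coordinate lower bound on touches).} Fix \(x\in D\). If \(x\notin S_b\), then \(S_b\subseteq D\setminus\{x\}\). Agreement on \(D\setminus\{x\}\) implies agreement on \(S_b\), so \(J_{S_b}^D(\cH)\subseteq J_{D\setminus\{x\}}^D(\cH)\). Hence the basis elements with \(x\notin S_b\) form a linearly independent subset of \(J_{D\setminus\{x\}}^D(\cH)\). By \Cref{def:low-coordinate-spans}, that space has dimension \(|\cH|_{D\setminus\{x\}}|\). Therefore
\[
        \#\{b\in B: x\in S_b\}\ \ge\ |V|-\bigl|\cH|_{D\setminus\{x\}}\bigr|.
\]

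\emph{Step 3 (sum over coordinates).} Summing Step 2 over \(x\in D\) and exchanging the order of summation gives
\[
        \sum_{x\in D}\Bigl(|V|-\bigl|\cH|_{D\setminus\{x\}}\bigr|\Bigr)
        \le\sum_{b\in B}|S_b|\le k|V|.
\]
Dividing by \(|V|\) and using the second expression for \(\dens_D\) in \Cref{def:finite-density} yields \(\dens_D(\cH|_D)\le k\).

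The only point needing care is the containment \(J_{S}^D\subseteq J_{D\setminus\{x\}}^D\) for \(S\subseteq D\setminus\{x\}\), together with its dimension count. Both follow directly from the pullback description in \Cref{def:low-coordinate-spans}, so I expect no serious obstacle. The substantive work was already done in \Cref{lem:parspan-vanishing}.
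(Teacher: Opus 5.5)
Your proposal is correct and follows the paper's touch-counting proof: you take a basis of low-coordinate functions, bound \(|\cH|_D|-|\cH|_{D\setminus\{x\}}|\) by the number of basis vectors touching \(x\), and sum over \(x\in D\). The only cosmetic difference is that you bound the number of non-touching basis vectors, which are independent in \(J_{D\setminus\{x\}}^D(\cH)\), while the paper observes that the touching ones span the quotient by \(J_{D\setminus\{x\}}^D(\cH)\); this is the same dimension count.
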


\begin{proof}[Proof of \Cref{lem:touch-counting-density}]
This is the same touch-counting mechanism as in the polynomial proof of the
multiclass density theorem by \citet{Pabbaraju2026}.  There, one chooses a
spanning family of low-support monomials in coordinate-value indicator
variables; each basis element touches only the coordinates appearing in that
monomial.  Here the basis elements are low-coordinate functions, and \(D_b\)
records the touched coordinate set for a basis vector \(b\).

Because \(\mathsf L_k^D(\cH)=\Ftwo^{\cH|_D}\), there exists a basis
\[
        \cB\subseteq
        \bigcup_{\substack{D'\subseteq D\\ |D'|\le k}}J_{D'}^D(\cH)
\]
of \(\Ftwo^{\cH|_D}\).  For every \(b\in\cB\), choose a set \(D_b\subseteq D\) such that
\[
        |D_b|\le k,
        \qquad
        b\in J_{D_b}^D(\cH).
\]
For every coordinate \(x\in D\), let
\[
        q_x:\Ftwo^{\cH|_D}\to
        \Ftwo^{\cH|_D}/J_{D\setminus\{x\}}^D(\cH)
\]
be the quotient map.  If \(x\notin D_b\), then \(D_b\subseteq D\setminus\{x\}\), so \(b\) does not depend on \(x\).  Equivalently,
\[
        b\in J_{D\setminus\{x\}}^D(\cH)
        \quad\text{and}\quad
        q_x(b)=0.
\]
The vectors \(q_x(b)\), \(b\in\cB\), span the quotient.  Therefore only basis vectors with \(x\in D_b\) can contribute to the quotient dimension, and
\[
        \codim J_{D\setminus\{x\}}^D(\cH)
        \le
        |\{b\in\cB:x\in D_b\}|.
\]
Since \(\dim J_{D\setminus\{x\}}^D(\cH)=\left|\cH|_{D\setminus\{x\}}\right|\), the left-hand side is \(\left|\cH|_D\right|-\left|\cH|_{D\setminus\{x\}}\right|\).  Using \Cref{def:finite-density} and summing the displayed quotient bound over \(x\in D\),
\[
\begin{aligned}
        \left|\cH|_D\right|\dens_D(\cH|_D)
        &=
        \sum_{x\in D}\left(\left|\cH|_D\right|-\left|\cH|_{D\setminus\{x\}}\right|\right)
        \le
        \sum_{x\in D}|\{b\in\cB:x\in D_b\}|\\
        &=
        \sum_{b\in\cB}|D_b|
        \le k|\cB|=k\left|\cH|_D\right|.
\end{aligned}
\]
Dividing by \(\left|\cH|_D\right|\) gives \(\dens_D(\cH|_D)\le k\).
\end{proof}

\begin{proof}[Proof of \Cref{thm:parity-density}]
Let \(\cF\subseteq\cY^\cX\) be a function class.  If \(\ParDim(\cF)=\infty\), there is nothing to prove.  Otherwise put
\[
        p=\ParDim(\cF)<\infty.
\]
Fix \(n\ge1\), a tuple \(\mathbf x=(x_1,\ldots,x_n)\in\cX^n\), and a finite nonempty subclass \(\cH\subseteq\cF\).  Put
\[
        \cG=\cH|_{\mathbf x}
        =
        \{(h(x_1),\ldots,h(x_n)):h\in\cH\}
        \subseteq\cY^{[n]}.
\]
We will prove \(\dens_{[n]}(\cG)\le p\).  Since \(\mathbf x\) and \(\cH\) were arbitrary, this proves the theorem by \Cref{def:finite-density}.

The tuple \(\mathbf x\) may have repeated entries, so we first show that any
high-order even certificate must live on distinct original domain points.

For every coordinate subset \(A\subseteq[n]\) with \(|A|\ge p+1\), the following high-order vanishing holds:
\[
        \mathsf N_{|A|-1}^{A}(\cG)=0.
\]
Indeed, suppose otherwise.  By \Cref{lem:top-annihilator-even}, \(\cG|_A\) is an even pseudo-cube on the coordinate set \(A\).  Let \(0\ne\eta\in\Ftwo^{\cG|_A}\) be an even certificate.

First, the original points \(x_i\), \(i\in A\), must be distinct.  If \(i\ne j\) in \(A\) and \(x_i=x_j\), then every pattern \(v\in\cG|_A\) satisfies \(v(i)=v(j)\).  Hence the coordinate-deletion map \(\cG|_A\to\cG|_{A\setminus\{i\}}\) is injective, because \(v(i)\) is recovered from the remaining coordinate \(j\).  Its fibers are singletons, so the deletion-marginal condition at coordinate \(i\) forces \(\eta(v)=0\) for every \(v\in\cG|_A\), contradicting \(\eta\ne0\).

Thus the map \(i\mapsto x_i\) is injective on \(A\).  Let \(D_A=\{x_i:i\in A\}\).  Relabeling each coordinate \(i\in A\) by the distinct domain point \(x_i\) turns \(\cG|_A\) into \(\cH|_{D_A}\), and preserves the coordinate-deletion marginal conditions.  Therefore \(\cH|_{D_A}\) is an even pseudo-cube on the domain \(D_A\).  Since \(|D_A|=|A|>p\) and \(\cH\subseteq\cF\), this contradicts the definition of \(p=\ParDim(\cF)\).

Apply \Cref{lem:parspan-vanishing} to the finite class \(\cG\subseteq\cY^{[n]}\) on the domain \([n]\).  The high-order vanishing gives
\[
        \mathsf L_p^{[n]}(\cG)=\Ftwo^\cG.
\]
Since \(\cG\ne\emptyset\), \Cref{lem:touch-counting-density} gives
\[
        \dens_{[n]}(\cG)\le p.
\]
Taking the supremum over all tuples \(\mathbf x\in\cX^n\) and all finite nonempty \(\cH\subseteq\cF\) gives
\[
        \mu_\cF(n)\le p=\ParDim(\cF)
        \qquad\forall n\ge1.
\]
\end{proof}

\section{Parity dimension does not increase under rollout}
\label{sec:partition-peeling}
\label{sec:parity-rollout-nonincrease}

This section proves \Cref{thm:parity-rollout-nonincrease}, namely
\[
        \ParDim(\Roll_\halt(\cH))\le \ParDim(\cH)
\]
for every pointwise \(\halt\)-halting next-action class
\(\cH\subseteq\cA^\cX\) over aligned states \(\cX=\cE\times\cA^*\).  This is
the only part of the proof that uses autoregressive prefix structure.  The
argument first proves a partition-tree peeling theorem, then applies it to
finite rollout trace trees, and finally converts the resulting trace-level
certificate into an even pseudo-cube for the local next-action class.

\subsection{Pure partition-tree peeling}

\begin{definition}[Partition tree]
\label{def:partition-tree}
Let \(L\) be a finite nonempty set.  A partition tree \(\cT\) on \(L\) is a finite rooted tree with a nonempty block \(\Blk(u)\subseteq L\) assigned to each node \(u\) such that:
\begin{enumerate}[label=(\roman*),leftmargin=2em]
\item the root \(\rho\) satisfies \(\Blk(\rho)=L\);
\item leaf nodes have singleton blocks;
\item if \(u\) is internal, then its children partition its block:
\[
        \Blk(u)=\bigsqcup_{c\in\Ch(u)}\Blk(c).
\]
\end{enumerate}
Unary chains and repeated blocks along a parent-child edge are allowed.
\end{definition}

\begin{lemma}[Zero-mass branching]
\label{lem:zero-mass-branching}
Let \(\cT\) be a partition tree on a finite set \(L\), let \(\Omega\) be a finite parameter set, and let \(\theta:L\times\Omega\to\Ftwo\).
Let \(u_0\) be a node such that
\[
        \sum_{\ell\in\Blk(u_0)}\theta(\ell,\omega)=0
        \qquad\forall\omega\in\Omega,
\]
and suppose that \(\theta\) is not identically zero on \(\Blk(u_0)\times\Omega\).  Then there exists an internal descendant \(u\) of \(u_0\), possibly \(u_0\) itself, such that
\[
        \sum_{\ell\in\Blk(u)}\theta(\ell,\omega)=0
        \qquad\forall\omega\in\Omega,
\]
and the child quotient
\[
        Q_u(c,\omega)=
        \sum_{\ell\in\Blk(c)}\theta(\ell,\omega),
        \qquad c\in\Ch(u),
\]
is nonzero.  Moreover,
\[
        \sum_{c\in\Ch(u)}Q_u(c,\omega)=0
        \qquad\forall\omega\in\Omega.
\]
\end{lemma}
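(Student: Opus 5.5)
We descend the tree from \(u_0\) while keeping the zero-mass property, and stop at the first node where the quotient to the children is nonzero. Put \(S_u(\omega)=\sum_{\ell\in\Blk(u)}\theta(\ell,\omega)\) for every node \(u\), so that \(Q_u(c,\omega)=S_c(\omega)\). The relation \(\Blk(u)=\bigsqcup_{c\in\Ch(u)}\Blk(c)\) gives \(\sum_{c\in\Ch(u)}S_c(\omega)=S_u(\omega)\) at every internal node. Hence if \(u\) is zero-mass (\(S_u\equiv0\)), then \(\sum_{c}Q_u(c,\omega)=0\) for every \(\omega\). This already proves the ``moreover'' clause once a suitable \(u\) has been found.

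To find \(u\), call a node \(u\) \emph{good} if \(u\) is a descendant of \(u_0\) (possibly \(u_0\) itself), \(S_u\equiv0\), and \(\theta\) is not identically zero on \(\Blk(u)\times\Omega\). By hypothesis \(u_0\) is good. We argue that some good node has a nonzero child quotient.

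First, a good node \(u\) cannot be a leaf. A leaf has a singleton block \(\{\ell\}\), so \(S_u(\omega)=\theta(\ell,\omega)\). Then \(S_u\equiv0\) would force \(\theta\) to vanish on \(\Blk(u)\times\Omega\), contradicting goodness.

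Now let \(u\) be good, hence internal, and suppose \(Q_u\equiv0\). Then every child \(c\) satisfies \(S_c\equiv0\). Because the children partition \(\Blk(u)\), and \(\theta\) is not identically zero on \(\Blk(u)\times\Omega\), some child \(c\) has \(\theta\) not identically zero on \(\Blk(c)\times\Omega\). That child is therefore good and lies strictly deeper than \(u\).

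Starting at \(u_0\), we repeat this step as long as the current good node has \(Q_u\equiv0\). Since the tree is finite, the descent must stop. It can only stop at a good node \(u\) with \(Q_u\not\equiv0\), because good nodes are never leaves. This \(u\) is an internal descendant of \(u_0\) with \(S_u\equiv0\) and nonzero child quotient, which is what the lemma requires.

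There is no serious obstacle. The one point needing care is to carry the invariant ``\(\theta\) is nonzero on the block'' alongside the zero-mass condition. Without it, the descent could reach a leaf, or a subtree on which \(\theta\) is zero, where the quotient is trivially zero. Unary chains cause no trouble: each step still moves strictly down the finite tree.
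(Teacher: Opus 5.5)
Your proof is correct and essentially the paper's argument. The paper picks an extremal good node (inclusion-minimal block, with a deepest-node tie-breaker to handle unary chains) instead of descending step by step, but the contradiction it derives is the same step you use to descend.
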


\begin{proof}[Proof of \Cref{lem:zero-mass-branching}]
Among all descendants \(v\) of \(u_0\), including \(u_0\) itself, whose block has zero mass for every \(\omega\in\Omega\) and on whose block \(\theta\) is not identically zero, choose one with inclusion-minimal block.  Among nodes with that same block, choose \(v\) deepest.

The node \(v\) cannot be a leaf.  If \(\Blk(v)=\{\ell\}\), then zero mass gives \(\theta(\ell,\omega)=0\) for every \(\omega\in\Omega\), contrary to nonzero restriction.  Hence \(v\) is internal.

If the child quotient at \(v\) were zero, every child \(c\in\Ch(v)\) would have zero mass for every \(\omega\).  Since the children partition \(\Blk(v)\) and \(\theta\) is nonzero on \(\Blk(v)\times\Omega\), some child \(c_0\) has nonzero restriction.  If \(\Blk(c_0)\subsetneq\Blk(v)\), this contradicts inclusion-minimality.  If \(\Blk(c_0)=\Blk(v)\), this contradicts the deepest-node tie breaker.  Therefore the child quotient at \(v\) is nonzero.  Set \(u=v\).

The final displayed identity follows because the child blocks partition \(\Blk(u)\):
\[
        \sum_{c\in\Ch(u)}Q_u(c,\omega)
        =
        \sum_{\ell\in\Blk(u)}\theta(\ell,\omega)
        =
        0.
\]
\end{proof}

\begin{definition}[Block quotient tensor]
\label{def:block-quotient}
Let \(L_1,\ldots,L_d\) be finite sets and let \(\eta:L_1\times\cdots\times L_d\to\Ftwo\).
For subsets \(B_r\subseteq L_r\), define
\[
        \eta[B_1,\ldots,B_d]
        =
        \sum_{\ell_1\in B_1}\cdots\sum_{\ell_d\in B_d}
        \eta(\ell_1,\ldots,\ell_d).
\]
A block system in coordinate \(r\) is a finite family of pairwise disjoint node blocks in a partition tree on \(L_r\).  It need not partition all of \(L_r\); it partitions only its active subset.
\end{definition}

\begin{theorem}[Product partition peeling]
\label{thm:product-peeling}
For each \(r\in[d]\), let \(\cT_r\) be a partition tree on a finite set \(L_r\).  Let \(0\ne\eta:L_1\times\cdots\times L_d\to\Ftwo\) be line-even in every coordinate, meaning that for every \(r\in[d]\) and every fixing of the other coordinates,
\[
        \sum_{\ell_r\in L_r}
        \eta(\ell_1,\ldots,\ell_d)=0.
\]
Then there are internal nodes \(u_r\in\cT_r\), \(r\in[d]\), such that the child quotient
\[
        Q(c_1,\ldots,c_d)=
        \eta[\Blk(c_1),\ldots,\Blk(c_d)],
        \qquad c_r\in\Ch(u_r),
\]
is nonzero and line-even in every coordinate.
\end{theorem}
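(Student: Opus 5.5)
The plan is to peel one coordinate at a time, applying \Cref{lem:zero-mass-branching} with the remaining coordinates bundled into the parameter set \(\Omega\). We maintain the following invariant after handling coordinates \(1,\ldots,s\): there are internal nodes \(u_1,\ldots,u_s\) such that the tensor
\[
        \eta_s(c_1,\ldots,c_s,\ell_{s+1},\ldots,\ell_d)
        =
        \eta[\Blk(c_1),\ldots,\Blk(c_s),\{\ell_{s+1}\},\ldots,\{\ell_d\}],
\]
with \(c_r\in\Ch(u_r)\) and \(\ell_r\in L_r\), is nonzero and line-even in every coordinate. In coordinates \(r\le s\) the line sums run over \(c_r\in\Ch(u_r)\); in coordinates \(r>s\) they run over \(\ell_r\in L_r\). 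The base case \(s=0\) is the hypothesis on \(\eta\), and the case \(s=d\) is exactly the claimed \(Q\).

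For the inductive step from \(s-1\) to \(s\), regard \(\eta_{s-1}\) as a function \(\theta:L_s\times\Omega\to\Ftwo\), where
\[
        \Omega=\Ch(u_1)\times\cdots\times\Ch(u_{s-1})\times L_{s+1}\times\cdots\times L_d
\]
is a finite set. Line-evenness of \(\eta_{s-1}\) in coordinate \(s\) says that the root \(\rho_s\) of \(\cT_s\), whose block is \(L_s\), has zero mass for every \(\omega\in\Omega\). Nonzeroness of \(\eta_{s-1}\) says \(\theta\) is not identically zero on \(\Blk(\rho_s)\times\Omega\). Then \Cref{lem:zero-mass-branching} with \(u_0=\rho_s\) yields an internal node \(u_s\) whose child quotient \(Q_{u_s}(c,\omega)=\sum_{\ell\in\Blk(c)}\theta(\ell,\omega)\) is nonzero and sums to zero over \(c\in\Ch(u_s)\) for each \(\omega\). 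This child quotient is \(\eta_s\). By the definition of \(\eta[\cdots]\) as an iterated sum (finite sums commute), replacing \(\{\ell_s\}\) by \(\Blk(c_s)\) inside \(\eta[\Blk(c_1),\ldots,\Blk(c_{s-1}),\{\ell_s\},\ldots]\) gives the displayed form of \(\eta_s\). The lemma therefore supplies nonzeroness and line-evenness in the new coordinate \(s\).

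It remains to check that line-evenness in the other coordinates \(r\ne s\) survives. For fixed values of the coordinates other than \(r\) and \(s\), and fixed \(c_s\), the line sum of \(\eta_s\) in coordinate \(r\) is
\[
        \sum_{\ell\in\Blk(c_s)}\Bigl(\text{line sum of }\eta_{s-1}\text{ in coordinate }r\text{ with coordinate }s\text{ set to }\ell\Bigr),
\]
obtained by exchanging two finite sums. Each inner line sum vanishes by the inductive hypothesis, so the whole sum vanishes. This holds both when \(r<s\), where lines run over \(\Ch(u_r)\), and when \(r>s\), where lines run over \(L_r\). After \(d\) steps we obtain the nodes \(u_1,\ldots,u_d\) and the nonzero, line-even child quotient \(Q=\eta_d\).

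I expect no deep obstacle; the substantive work is already contained in \Cref{lem:zero-mass-branching}. The point needing the most care is the bookkeeping: coordinates already peeled must keep their fixed child index sets \(\Ch(u_r)\) and must not be re-peeled. This is why they are absorbed into \(\Omega\) rather than given their own partition trees. One must also confirm that the commuting-sums argument preserves line-evenness for the already-peeled coordinates, and not only for the untouched ones.
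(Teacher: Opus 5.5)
Your proposal is correct and follows essentially the same route as the paper's proof. Both peel one coordinate at a time, apply \Cref{lem:zero-mass-branching} at the root of \(\cT_s\) with all other current block choices bundled into \(\Omega\), and show that line-evenness in the other coordinates survives because each new line sum is a finite sum of old line sums.
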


\begin{proof}[Proof of \Cref{thm:product-peeling}]
We use block systems in the sense of \Cref{def:block-quotient}.  Start with the singleton block system in each coordinate.  The corresponding quotient tensor is \(\eta\), so it is nonzero and line-even.  Invariant: after a set \(S\subseteq[d]\) of coordinates has been processed, the quotient tensor over the child-block systems in coordinates \(S\) and singleton systems elsewhere is nonzero and line-even in every coordinate.

Process coordinates one at a time.  Suppose the current quotient tensor is nonzero and line-even, and coordinate \(r\) is still represented by singleton leaf blocks.  Treat all other current block choices as a finite parameter set \(\Omega\).  For \(\ell\in L_r\) and parameter \(\omega\), let \(\theta(\ell,\omega)\) be the current quotient entry obtained by using \(\{\ell\}\) in coordinate \(r\) and the blocks specified by \(\omega\) in all other coordinates.

Nonzeroness of the current quotient says that \(\theta\) is not identically zero.  Line-evenness in coordinate \(r\) says that
\[
        \sum_{\ell\in L_r}\theta(\ell,\omega)=0
        \qquad\forall\omega\in\Omega.
\]
Apply \Cref{lem:zero-mass-branching} to the root of \(\cT_r\).  It returns an internal node \(u_r\) whose child quotient is nonzero and child-even.  Replace the singleton block system in coordinate \(r\) by the child blocks \(\{\Blk(c):c\in\Ch(u_r)\}\).  The active subset in coordinate \(r\) becomes \(\Blk(u_r)\); later quotient tensors are taken over the product of the active block systems.

The new quotient is nonzero by the nonzero child quotient.  It is line-even in coordinate \(r\) by the child-even conclusion of \Cref{lem:zero-mass-branching}.  For a different coordinate \(q\), the new line sum is a finite sum of old line sums, one for each leaf in a child block \(\Blk(c)\).  Each old line sum is zero by the induction invariant.  Thus line-evenness in all other coordinates is preserved.

After all coordinates are processed, the quotient tensor has the displayed form and is nonzero and line-even.
\end{proof}

\subsection{From trace parity to base next-action parity}

\begin{lemma}[Prefix trace sets are prefix-free]
\label{lem:prefix-free}
Let \(\cE\) be an environment space, \(\cA\) an action space, \(\cX=\cE\times\cA^*\), and \(\halt:\cX\times\cA^*\to\{0,1\}\).  Let \(\cH\subseteq\cA^\cX\) be pointwise \(\halt\)-halting.  Fix a start state \(x\in \cX\) and a finite subset \(\cH'\subseteq \cH\).  Then the finite set \(L_x=\{\Roll_\halt(h)(x):h\in \cH'\}\subseteq\cA^*\) is prefix-free.
\end{lemma}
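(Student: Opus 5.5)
The plan is a direct argument by contradiction. The key observation is that the stopping predicate \(\halt\) depends only on the initial state \(x\) and the emitted suffix, never on the hypothesis. So \(\halt(x,\cdot)\) is a single fixed predicate on \(\cA^*\), shared by every \(h\in\cH'\).

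First, I would record what \Cref{alg:autoregressive-rollout} implies about a returned string. Suppose \(y=\Roll_\halt(h)(x)\) with \(y=a_1\cdots a_m\). The loop appends one action per iteration and tests \(\halt(x,\cdot)\) before each iteration. Hence \(\halt(x,a_1\cdots a_j)=0\) for every \(0\le j<m\), and \(\halt(x,y)=1\). In words, every proper prefix of a returned trace, including \(\eps\), is a non-halting suffix, and the trace itself is halting. This uses pointwise halting (\Cref{def:pointwise-halting}) only to guarantee that \(y\) is defined.

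Next, take two traces \(y=\Roll_\halt(h)(x)\) and \(y'=\Roll_\halt(h')(x)\) in \(L_x\), with \(h,h'\in\cH'\). Suppose \(y\) is a prefix of \(y'\) and \(y\ne y'\), so \(y\) is a proper prefix. The observation applied to \(h'\) gives \(\halt(x,y)=0\). The observation applied to \(h\) gives \(\halt(x,y)=1\). This is a contradiction, so no element of \(L_x\) is a proper prefix of another, which is exactly prefix-freeness.

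Finiteness of \(L_x\) is immediate, since \(L_x\) is the image of the finite set \(\cH'\). No step is a real obstacle. The only point needing care is to state the loop invariant precisely: the halting test is applied to every prefix, including the empty one, and uses the same \(x\) throughout. Because \(\halt\) never sees \(h\), the two rollouts face identical stopping decisions along any common prefix.
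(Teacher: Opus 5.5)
Your proof is correct and is essentially the paper's argument: if \(y\) were a proper prefix of \(y'\) in \(L_x\), then \(\halt(x,y)=1\) because the rollout returning \(y\) stops there, and \(\halt(x,y)=0\) because the rollout returning \(y'\) continues past it, which is impossible since \(\halt\) does not depend on the hypothesis. Your explicit loop invariant, stating that \(\halt(x,\cdot)=0\) on every proper prefix including \(\eps\), simply spells out what the paper asserts in one line.
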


\begin{proof}[Proof of \Cref{lem:prefix-free}]
Suppose \(u,v\in L_x\) and \(u\) is a proper prefix of \(v\).  The rollout that emits exactly \(u\) stops after emitted suffix \(u\), so \(\halt(x,u)=1\).  The rollout that emits \(v\) has the same initial state \(x\) and the same emitted prefix \(u\), but must continue beyond \(u\), so \(\halt(x,u)=0\).  This is impossible.
\end{proof}

\begin{definition}[Prefix partition tree]
\label{def:prefix-partition-tree}
Let \(L\subseteq\cA^*\) be finite and prefix-free.  Its prefix partition tree has one node for every prefix of a string in \(L\).  The block at prefix \(u\) is \(\Blk(u)=\{\ell\in L:u\text{ is a prefix of }\ell\}\).
The children of \(u\) are the nonempty strings \(ua\) with \(a\in\cA\) such that \(ua\) is a prefix of some string in \(L\).
Since \(L\) is prefix-free, these prefix blocks satisfy the partition-tree axioms of \Cref{def:partition-tree}; in particular, children partition a prefix block by the unique next action after the prefix.
\end{definition}

\begin{theorem}[Trace parity peels to base parity]
\label{thm:trace-to-base}
Let \(\cE\) be an environment space, \(\cA\) an action space, \(\cX=\cE\times\cA^*\), and \(\halt:\cX\times\cA^*\to\{0,1\}\).  Let \(\cH\subseteq\cA^\cX\) be pointwise \(\halt\)-halting.  Fix states \(x_1,\ldots,x_d\in \cX\) and a finite subset \(\cH'\subseteq \cH\).  Let
\[
        V\subseteq L_1\times\cdots\times L_d,
        \qquad
        L_r=\{\Roll_\halt(h)(x_r):h\in\cH'\},
\]
be the finite set of distinct rollout trace tuples realized by \(\cH'\) on \(x_1,\ldots,x_d\).  View \(V\) as a finite class on the indexed domain \([d]\), with \(v(r)\in L_r\).  For every
\[
        0\ne\eta\in\Ftwo^V
        \quad\text{such that}\quad
        M_{[d]\setminus\{r\}}^{[d]}[V]\eta=0
        \quad\forall r\in[d],
\]
there exist prefixes \(u_r\in\cA^*\) such that the base next-action value table of the finite subclass \(\cH'\) on the indexed states
\[
        x_r\cdot u_r,
        \qquad r=1,\ldots,d,
\]
carries a nonzero parity certificate in all \(d\) indexed coordinate directions.
\end{theorem}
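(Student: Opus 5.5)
The plan is to extend \(\eta\) by zero from \(V\) to the full product \(L_1\times\cdots\times L_d\), apply \Cref{thm:product-peeling} with the prefix partition trees of \Cref{def:prefix-partition-tree}, and then push the resulting child-quotient tensor back to a vector on the base next-action value table. Each \(L_r\) is finite and prefix-free by \Cref{lem:prefix-free}, so its prefix partition tree \(\cT_r\) is a partition tree in the sense of \Cref{def:partition-tree}.

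\textbf{Step 1: line-evenness.} Define \(\tilde\eta:L_1\times\cdots\times L_d\to\Ftwo\) by \(\tilde\eta=\eta\) on \(V\) and \(0\) off \(V\). For fixed \(r\) and a fixing \(\ell_{-r}\) of the other coordinates, the line sum \(\sum_{\ell_r}\tilde\eta(\ell)\) is either empty (equal to \(0\)) when \(\ell_{-r}\notin V|_{[d]\setminus\{r\}}\), or equals the deletion marginal \((M_{[d]\setminus\{r\}}^{[d]}[V]\eta)(\ell_{-r})=0\). Hence \(\tilde\eta\) is nonzero and line-even in every coordinate. By \Cref{thm:product-peeling}, there are internal nodes, namely prefixes \(u_r\), such that the tensor
\[
        Q(c_1,\ldots,c_d)=\tilde\eta[\Blk(c_1),\ldots,\Blk(c_d)]
\]
is nonzero and line-even. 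Here each child \(c_r=u_ra_r\) is indexed by a next action \(a_r\in\cA\).

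\textbf{Step 2: descent to base.} Let \(W\) be the base value table \(\{(h(x_1\cdot u_1),\ldots,h(x_d\cdot u_d)):h\in\cH'\}\subseteq\cA^{[d]}\). Define \(\xi\in\Ftwo^W\) by \(\xi(a_1,\ldots,a_d)=Q(u_1a_1,\ldots,u_da_d)\) when every \(u_ra_r\) is a child of \(u_r\), and \(0\) otherwise. The key compatibility fact is this. If \(\Roll(h)(x_r)\in\Blk(u_r)\), meaning the trace extends \(u_r\), then the rollout passes through state \(x_r\cdot u_r\) with \(\halt(x_r,u_r)=0\), since \(u_r\) is internal. The next emitted action is therefore \(h(x_r\cdot u_r)\), and the trace lies in \(\Blk(u_ra_r)\) exactly when \(a_r=h(x_r\cdot u_r)\).

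Two things must then be checked. First, \(\xi\) is supported on \(W\): a nonzero \(Q(u_1a_1,\ldots)\) requires some \(v\in V\) with \(v(r)\in\Blk(u_ra_r)\) for all \(r\), and a realizing \(h\) then gives \((a_r)\in W\). Second, the deletion marginals of \(\xi\) vanish. For \(w\in W|_{[d]\setminus\{r\}}\), the marginal sums \(\xi\) over the \(a_r\) completing \(w\) within \(W\). I would like to compare this with the line sum of \(Q\), which runs over all children \(a_r\) of \(u_r\).

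\textbf{Main obstacle.} The difficulty is precisely this mismatch: a child \(a_r\) may have \(Q\ne0\) for the fixed other coordinates, yet the combination \((w,a_r)\) might not lie in \(W\). The argument above only shows that nonzero entries of \(Q\) are jointly realized by some \(h\), so jointly realized tuples are exactly the tuples in \(W\). Hence every term with \(Q\ne0\) already has its full tuple in \(W\), and the \(W\)-fiber sum equals the full line sum of \(Q\), which is zero.

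The step I expect to need the most care is making the realization claim airtight. One must show that \(Q(c)\ne0\) forces some \(v\in V\) lying in the product of the child blocks, and that the \(h\) realizing \(v\) has the right next actions at every coordinate simultaneously. This holds because \(\tilde\eta\) is supported on \(V\) and each \(v\in V\) is realized by one common \(h\). Nonzeroness of \(\xi\) then follows from nonzeroness of \(Q\), which gives the required certificate on the indexed states \(x_r\cdot u_r\).
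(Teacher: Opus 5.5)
Your proposal is correct and follows the paper's proof essentially step for step. You extend $\eta$ by zero to a nonzero line-even tensor, apply product partition peeling to the prefix partition trees, and relabel children by next actions. You then use the fact that every nonzero entry of the child quotient $Q$ is jointly realized by a single $h\in\cH'$ to show that the induced vector on the base table is nonzero and that its deletion marginals equal line sums of $Q$; the ``mismatch'' you flag is resolved exactly as the paper resolves it.
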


\begin{proof}[Proof of \Cref{thm:trace-to-base}]
Extend \(\eta\) by zero to a tensor on \(L_1\times\cdots\times L_d\).  This extended tensor is nonzero and line-even in every coordinate because each coordinate-deletion marginal of \(\eta\) is zero: for a fixed coordinate direction \(r\), each full line in \(L_1\times\cdots\times L_d\) intersects \(V\) in one fiber of the restriction map \(V\to V|_{[d]\setminus\{r\}}\), and \(\eta\) has zero sum on every such fiber.

By \Cref{lem:prefix-free}, each \(L_r\) is prefix-free, so it has a prefix partition tree.  Apply \Cref{thm:product-peeling}.  We get internal prefix nodes \(u_r\) such that
\[
        Q(c_1,\ldots,c_d)
        =
        \eta[\Blk(c_1),\ldots,\Blk(c_d)],
        \qquad c_r\in\Ch(u_r),
\]
is nonzero and line-even.

A child \(c_r\in\Ch(u_r)\) has the form \(u_ra_r\) for a unique action \(a_r\in\cA\).  Relabel the child quotient by these actions.  Since different children of a prefix correspond to different next actions, this relabeling preserves nonzeroness and line-evenness.

If a relabeled entry \((a_1,\ldots,a_d)\) has coefficient \(1\), then
\[
        \sum_{v\in V\cap(\Blk(u_1a_1)\times\cdots\times\Blk(u_da_d))}
        \eta(v)
        =
        1 .
\]
In particular, at least one such tuple has \(\eta(v)=1\), and hence at least one representative \(h\in \cH'\) realizes it.  For that \(h\), the rollout from \(x_r\) passes through prefix \(u_r\) and then appends \(a_r\), so
\[
        h(x_r\cdot u_r)=a_r.
\]
Let
\[
        V_{\mathrm{base}}
        =
        \{(h(x_1\cdot u_1),\ldots,h(x_d\cdot u_d)):h\in\cH'\}
        \subseteq \cA^{[d]} .
\]
Define a vector \(\xi\in\Ftwo^{V_{\mathrm{base}}}\) by setting
\(\xi(a_1,\ldots,a_d)=Q(a_1,\ldots,a_d)\) on base patterns arising from the relabeled child quotient, and \(\xi=0\) on all other base patterns.  The preceding paragraph shows that every support point of \(Q\) is realized in \(V_{\mathrm{base}}\), so \(\xi\) is nonzero.  Moreover, the line-evenness of \(Q\) implies that every coordinate-deletion marginal of \(\xi\) is zero: if the fixed \((d-1)\)-pattern uses an action outside the corresponding child sets, all coefficients in that fiber are zero; otherwise the marginal is exactly one of the line sums of \(Q\).  Hence \(\xi\) is a nonzero even certificate for the indexed base value table.
\end{proof}

\subsection{Parity dimension does not increase under rollout}

\begin{proof}[Proof of \Cref{thm:parity-rollout-nonincrease}]
Take a finite witness to \(\ParDim(\Roll_\halt(\cH))\).  Thus there are a finite state set \(D=\{x_1,\ldots,x_d\}\subseteq \cX\) with \(|D|=d\), and a finite nonempty subclass \(U\subseteq\Roll_\halt(\cH)\), such that \(U|_D\subseteq(\cA^*)^D\) is a \(d\)-dimensional even pseudo-cube.  Equivalently, after enumerating \(D\) as \(x_1,\ldots,x_d\), the trace table \(V=U|_D\), viewed as a finite class on the indexed domain \([d]\), carries a nonzero vector \(0\ne\eta\in\Ftwo^V\) whose coordinate-deletion marginals vanish:
\[
        M_{[d]\setminus\{r\}}^{[d]}[V]\eta=0
        \qquad\forall r\in[d].
\]
For every rollout function in \(U\), choose one representative hypothesis in \(\cH\) that realizes it; these representatives form a finite set \(\cH'\subseteq\cH\).

Apply \Cref{thm:trace-to-base} to \(\cH'\), the states \(x_1,\ldots,x_d\), and the certificate \(\eta\).  We obtain prefixes \(u_1,\ldots,u_d\in\cA^*\) and a nonzero parity certificate for the base value table of \(\cH'\) on the indexed states
\[
        \widetilde x_r=x_r\cdot u_r,
        \qquad r=1,\ldots,d.
\]
The states \(\widetilde x_1,\ldots,\widetilde x_d\) are distinct.  Indeed, if \(\widetilde x_r=\widetilde x_s\) for some \(r\ne s\), then every base value pattern has identical coordinates \(r\) and \(s\).  The restriction map deleting coordinate \(r\) is then injective on the indexed base table, so \(M_{[d]\setminus\{r\}}^{[d]}\) has zero kernel on that table.  This contradicts the existence of a nonzero certificate in all \(d\) indexed coordinate directions.

Thus \(\widetilde D=\{\widetilde x_1,\ldots,\widetilde x_d\}\) is a finite domain set of size \(d\), and \(\cH'|_{\widetilde D}\) is a \(d\)-dimensional even pseudo-cube.  Hence \(\ParDim(\cH)\ge d\).  Taking the supremum over all rollout witnesses proves the theorem.
\end{proof}

\section{Application to full-label multi-instance learning}
\label{sec:full-label-mil-application}

We now spell out the full-label multi-instance learning consequence in its
original sequence-lift language.  The result follows from
\Cref{thm:pac-bound} by encoding the entire local-instance list in the
environment and letting the rollout emit the local labels one by one.

\paragraph{We use the following notation.}
Let \(\cU\) be a local instance space and let \(\cY\) be a nonempty local label
space.  Write
\[
        \cU^\star=\bigsqcup_{T\ge1}\cU^T,
        \qquad
        \cY^\star=\bigsqcup_{T\ge1}\cY^T
\]
for nonempty finite lists of local instances and labels.  For
\(U=(u_1,\ldots,u_T)\in\cU^\star\), write \(|U|=T\).  A local rule
\(g:\cU\to\cY\) induces the full-label list rule
\[
        g^\star:\cU^\star\to\cY^\star,
        \qquad
        g^\star(u_1,\ldots,u_T)=(g(u_1),\ldots,g(u_T)).
\]
For a local class \(\cG\subseteq\cY^\cU\), define its full-label lift
\[
        \cG^\star=\{g^\star:g\in\cG\}\subseteq(\cY^\star)^{\cU^\star}.
\]
Learning \(\cG^\star\) is ordinary realizable multiclass learning whose
examples are finite lists and whose labels are full local-label lists.  The
loss is all-or-nothing: a prediction on \(U=(u_1,\ldots,u_T)\) is correct only
if all \(T\) local labels are correct.
The data distribution in this PAC problem is an arbitrary distribution on
\(\cU^\star\).  In particular, the entries inside a list need not be
independent, identically distributed, exchangeable, or of fixed length.

\paragraph{Relation to bag-label MIL.}
Classical multiple-instance learning studies bags with weak or aggregate labels
rather than the full local-label list; see \citet{MaronLozanoPerez1998} and
\citet{SabatoTishby2012}.  The present application is different.  The input is
an ordered finite list, positions need not be exchangeable, and supervision
reveals the complete label vector \((g(u_1),\ldots,g(u_T))\).  Thus the
question is not how to infer instance labels from a bag label, but whether
all-or-nothing correctness of the full label list costs more samples than
ordinary local multiclass learning.

\begin{theorem}[Optimal PAC sample complexity of full-label multi-instance learning]
\label{thm:full-label-mil}
For every local multiclass class \(\cG\subseteq\cY^\cU\) over a nonempty label
space \(\cY\), the following class-by-class statement holds.  If
\(\DSdim(\cG)\ge1\), then for every
\(\varepsilon,\delta\in(0,1)\),
\[
        n_{\PAC}^{\varepsilon,\delta}(\cG^\star)
        =
        \Theta\!\left(n_{\PAC}^{\varepsilon,\delta}(\cG)\right)
        =
        \Theta\!\left(
        \frac{\DSdim(\cG)+\log(1/\delta)}{\varepsilon}
        \right).
\]
If \(\DSdim(\cG)=0\), then \(\cG^\star\) has a single labeling on every finite
list sample, and \(n_{\PAC}^{\varepsilon,\delta}(\cG^\star)=0\).
\end{theorem}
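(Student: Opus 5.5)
The plan has three parts: an upper bound by embedding into the rollout framework, a lower bound by embedding $\cG$ into $\cG^\star$, and a separate treatment of the degenerate case $\DSdim(\cG)=0$.

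\textbf{Upper bound.} Encode full-label learning as autoregressive exact-trace learning and apply \Cref{thm:pac-bound}. Take $\cE=\cU^\star$, $\cA=\cY$ and $\cX=\cE\times\cY^*$. Given $g\in\cG$, define the local rule $h_g(U,w)=g(u_{|w|+1})$ when $|w|<|U|$, and $h_g(U,w)=y_0$ otherwise, where $y_0\in\cY$ is a fixed label. Take the stopping rule $\halt((U,w),y)=1$ if and only if $|wy|\ge|U|$. Every rollout terminates, so $\cH=\{h_g:g\in\cG\}$ is pointwise $\halt$-halting. From the initial state $(U,\eps)$ the rollout emits exactly $g^\star(U)$. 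Since the PAC distribution lives on $\cU^\star$, we identify $U$ with $(U,\eps)$. A learner for $\Roll_\halt(\cH)$ then yields a learner for $\cG^\star$ with the same sample complexity, because $\cG^\star$ is the restriction of $\Roll_\halt(\cH)$ to the states $(U,\eps)$.

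It remains to show $\DSdim(\cH)\le\DSdim(\cG)$. Consider a pseudo-cube of $\cH$ on states $(U_1,w_1),\dots,(U_d,w_d)$. In any coordinate with $|w_r|\ge|U_r|$ the value is the constant $y_0$, so no disagreeing neighbor exists there; hence every coordinate reads $g$ at a local point $p_r=u^{(r)}_{|w_r|+1}$. If $p_r=p_s$ for some $r\ne s$, coordinates $r$ and $s$ always agree, so no neighbor can differ in coordinate $r$ alone. Hence the $p_r$ are distinct. Each pattern is determined by a pattern on $\{p_r\}$, so the pseudo-cube transfers to a pseudo-cube of $\cG$ on $\{p_1,\dots,p_d\}$. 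This gives $\DSdim(\cH)\le\DSdim(\cG)$, and with it the upper bound $O((\DSdim(\cG)+\log(1/\delta))/\varepsilon)$.

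\textbf{Lower bound.} Learning $\cG$ embeds into learning $\cG^\star$ via length-one lists $u\mapsto(u)$. Any learner for $\cG^\star$, restricted to distributions on length-one lists, is a learner for $\cG$. Hence $n_{\PAC}^{\varepsilon,\delta}(\cG^\star)\ge n_{\PAC}^{\varepsilon,\delta}(\cG)$. The multiclass DS lower bound $\Omega((\DSdim(\cG)+\log(1/\delta))/\varepsilon)$ of \citet{DanielyShalev2014,Pabbaraju2026} then applies when $\DSdim(\cG)\ge1$. The $\log(1/\delta)/\varepsilon$ term needs a class with two functions disagreeing on a point, which $\DSdim(\cG)\ge1$ provides. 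Combined with the upper bound, both $\Theta$ relations follow.

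\textbf{The case $\DSdim(\cG)=0$.} This means no $1$-pseudo-cube exists, i.e.\ every $g\in\cG$ agrees at every point of $\cU$. Then all of $\cG$ induces one function, so $\cG^\star$ is a single function and needs zero samples. I read $\cG$ as nonempty here; if $\cG$ is empty the statement is vacuous.

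\textbf{Main obstacle.} The delicate step is checking that padding states, and repeated local points across different lists or positions, cannot create extra DS dimension in $\cH$. The argument above handles this by observing that constant coordinates and duplicated coordinates both kill pseudo-cube neighbors.
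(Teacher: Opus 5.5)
Your proposal is correct and follows the paper's proof: the same rollout encoding, the same singleton-list lower bound, and the same degenerate-case argument. Your stopping rule $|wy|\ge|U|$ agrees with the paper's $|y|\ge|U|$ on the initial states $(U,\eps)$, which are the only ones that matter. The one cosmetic difference is in the dimension comparison: the paper runs the constant-coordinate and duplicate-coordinate argument on even certificates to get $\ParDim(\cH_\cG)=\ParDim(\cG)$, which gives a slightly sharper parity-dimension bound, whereas your pseudo-cube version gives $\DSdim(\cH)\le\DSdim(\cG)$, which is all the stated theorem needs.
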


\begin{proof}[Proof of \Cref{thm:full-label-mil}]
Fix \(y_0\in\cY\).  Rephrase the full-label problem as an autoregressive
rollout.  Let
\[
        \cE=\cU^\star,
        \qquad
        \cA=\cY,
        \qquad
        \cX=\cE\times\cY^*.
\]
For \(g\in\cG\), define \(h_g:\cX\to\cY\) by
\[
        h_g((u_1,\ldots,u_T),w)
        =
        \begin{cases}
        g(u_{|w|+1}), & |w|<T,\\
        y_0, & |w|\ge T.
        \end{cases}
\]
Let \(\cH_\cG=\{h_g:g\in\cG\}\).  Use the stopping predicate
\[
        \halt((U,w),y)=1
        \quad\Longleftrightarrow\quad
        |y|\ge |U|.
\]
The class \(\cH_\cG\) is pointwise \(\halt\)-halting, and the rollout from
\((U,\eps)\) emits exactly \(g^\star(U)\).  Hence \(\cG^\star\) is the
restriction of \(\Roll_\halt(\cH_\cG)\) to the initial-state slice
\(\cU^\star\times\{\eps\}\).

It remains to compare the local dimensions.  We claim that
\(\ParDim(\cH_\cG)=\ParDim(\cG)\).  The inequality
\(\ParDim(\cH_\cG)\ge\ParDim(\cG)\) follows from singleton lists: for every
local instance \(u\in\cU\),
\[
        h_g((u),\eps)=g(u).
\]
For the reverse inequality, take any finite domain set
\(D\subseteq\cU^\star\times\cY^*\).  A state \(((u_1,\ldots,u_T),w)\in D\)
with \(|w|<T\) queries the local coordinate \(u_{|w|+1}\); a state with
\(|w|\ge T\) is constant equal to \(y_0\).  If an even certificate on
\(\cH_\cG|_D\) used a constant coordinate, then the marginal deleting that
coordinate would be injective and would have zero kernel.  The certificate
would be zero, a contradiction.  Likewise, if two coordinates in \(D\) queried
the same local instance \(u\), then deleting one of them would leave the other
one, so the deletion marginal would again be injective.  Therefore every
nonzero even certificate on \(\cH_\cG|_D\) uses distinct nonconstant local
coordinates, and the same vector is an even certificate for the restriction of
\(\cG\) to those local coordinates.  Thus \(\ParDim(\cH_\cG)\le\ParDim(\cG)\).

Since \(\cG^\star\) is the restriction of \(\Roll_\halt(\cH_\cG)\) to the
initial-state slice \(\cU^\star\times\{\eps\}\), any learner for
\(\Roll_\halt(\cH_\cG)\) gives a learner for \(\cG^\star\).  By
\Cref{thm:pac-bound} and \(\ParDim(\cG)\le\DSdim(\cG)\),
\[
        n_{\PAC}^{\varepsilon,\delta}(\cG^\star)
        \le
        O\!\left(
        \frac{\DSdim(\cG)+\log(1/\delta)}{\varepsilon}
        \right).
\]
The lower bound is the singleton-list embedding.  Any local example
\((u,g(u))\) is the full-label example \(((u),(g(u)))\), so learning
\(\cG^\star\) is at least as hard as learning this fixed local class
\(\cG\).  The lower-bound half of the standard sharp realizable multiclass
PAC theorem, due to \citet{DanielyShalev2014}, \citet{Brukhim2022}, and
\citet{Pabbaraju2026}, gives
\[
        n_{\PAC}^{\varepsilon,\delta}(\cG)
        \ge
        \Omega\!\left(
        \frac{\DSdim(\cG)+\log(1/\delta)}{\varepsilon}
        \right)
\]
whenever \(\DSdim(\cG)\ge1\).  Together with the preceding upper bound, this
gives the displayed \(\Theta\)-rate.

If \(\DSdim(\cG)=0\), then no one-point domain is DS-shattered.  Hence all
functions in \(\cG\) agree on every \(u\in\cU\), so all functions in
\(\cG^\star\) agree on every finite list \(U\in\cU^\star\).  The realizable
problem has no label uncertainty, and \(n_{\PAC}^{\varepsilon,\delta}
(\cG^\star)=0\).
\end{proof}

\paragraph{A random ERM baseline.}
The preceding theorem is an optimal sample-complexity statement; it should not
be read as saying that every natural proper ERM tie-breaking rule is optimal.
For a finite local class \(\cG\) and a full-label sample
\[
        S=((U_i,Y_i))_{i=1}^n,
\]
let
\[
        V_\cG(S)=
        \{g\in\cG:g^\star(U_i)=Y_i\text{ for all }i\in[n]\}
\]
be the proper version space.  The uniform random proper ERM draws \(g\)
uniformly from \(V_\cG(S)\) whenever this set is finite and nonempty, and
outputs \(g^\star\).

\begin{theorem}[Uniform random ERM can be logarithmically suboptimal]
\label{thm:random-erm-mil-lower}
There is a universal constant \(c>0\) such that the following holds.  For
every \(d\ge1\), every \(T_{\max}\ge2\), and every
\(\varepsilon\in(0,1/16]\), set
\[
        q=\lfloor\log_2 T_{\max}\rfloor .
\]
There exist a finite instance space \(\cU\), a finite binary class
\(\cG\subseteq\{0,1\}^{\cU}\) with
\[
        \VCdim(\cG)=\DSdim(\cG)=d,
\]
and a realizable distribution \(\mathcal D\) over full-label list examples, supported
on lists of length at most \(T_{\max}\), such that the uniform random proper
ERM trained on \(n\) iid examples satisfies
\[
        n\le c\,\frac{dq}{\varepsilon}
        \quad\Longrightarrow\quad
        \Pr\!\left[
        \operatorname{err}_{\mathcal D}(\widehat g^\star)>\varepsilon
        \right]\ge \frac18 .
\]
Thus this random ERM rule can require
\[
        \Omega\!\left(
        \frac{d\log T_{\max}}{\varepsilon}
        \right)
\]
samples at constant confidence, even though
\Cref{thm:full-label-mil} gives an optimal learner with no dependence on
\(T_{\max}\).
\end{theorem}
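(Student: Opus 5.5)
The plan is to build the hard instance as a disjoint union of \(d\) identical blocks. In each block the local class is the ``zero-or-one-singleton'' class, and the data are lists that are dyadic intervals of that block. Random proper ERM then keeps a large version space of singletons, and each surviving singleton is costly because every long list containing it is mislabeled. Concretely, take \(N=2^q\le T_{\max}\) and \(\cU=[d]\times[N]\). Let \(\cG\) consist of functions that, on each block \(\{j\}\times[N]\), equal \(1\) on at most one point. This class has VC dimension exactly \(d\): one point per block is shattered, and two points in one block cannot both be \(1\). For binary classes \(\VCdim=\DSdim\) by \Cref{cor:pardim-ds}. The target is \(g\equiv0\), so every revealed full label is the all-zero list. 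The only information in a sample is therefore which points have been \emph{covered} by some observed list. A covered point \(z\) cannot be the singleton of \(g\) in its block, while an uncovered point can.

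For the distribution, fix a block \(j\) chosen uniformly at random and a level \(i\in\{0,\ldots,q\}\). Let \(\mathcal D\) emit a uniformly random dyadic interval of length \(2^i\) inside block \(j\), written as a list of length at most \(T_{\max}\). Put mass roughly \(\varepsilon/q\) per level for \(i\ge1\), with the remaining mass on a dummy point that carries no information. Suppose the ERM outputs, in block \(j\), the singleton at an uncovered point \(z\). Then the error contributed is the total mass of intervals containing \(z\). This is \(\Theta(\varepsilon/(dq))\) per level, hence \(\Theta(\varepsilon/d)\) summed over all \(q\) levels, for every uncovered \(z\). The key quantitative step is to show that with \(n\le c\,dq/\varepsilon\) samples a constant fraction of blocks retain many uncovered points. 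In such a block the uniformly drawn consistent function picks a nonzero singleton with probability close to \(1\), since \(\#\text{uncovered}/(\#\text{uncovered}+1)\) is near \(1\). The error then exceeds \(\varepsilon\) once a constant fraction of the \(d\) blocks err. Block choices in the uniform ERM are independent across blocks, because \(V_\cG(S)\) is a product of per-block version spaces. This gives the \(1/8\) bound by Markov or Chernoff.

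The main obstacle is the covering count. A level-\(i\) interval covers \(2^i\) points, so a single top-level sample would already cover a whole block. The masses across levels must therefore be tuned so that covering all of a block requires about \(q\) ``independent'' events, each of probability about \(\varepsilon/(dq)\). I would first try a hierarchical choice in which the level-\(i\) intervals inside an interval of length \(2^{i+1}\) are weighted so that the uncovered set shrinks by only a constant factor per observed sample. The intended effect is that \(\log_2 N=q\) observations per block are needed to exhaust it, while the error of any survivor stays about \(\varepsilon/d\). If this balancing fails, my fallback is to replace intervals by nested halving sets along a random root-to-leaf path. This is the coupon-collector shape in which a survivor remains until about \(q\) samples land in the block, which is exactly what produces the extra \(\log T_{\max}\) factor over the optimal rate of \Cref{thm:full-label-mil}.
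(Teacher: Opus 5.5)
You have the right hypothesis class (at most one $1$ per block, all-zero target, product version space), but the list distribution does not give the bound. That is where the whole difficulty of the theorem lies.

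\textbf{The survivor-cost estimate is off by a factor of $q$.} You conflate the mass $\varepsilon/(dq)$ of level $i$ inside block $j$ with the mass of the \emph{single} level-$i$ interval containing $z$. That interval has mass $\frac{\varepsilon}{q}\cdot\frac1d\cdot 2^{i-q}$. Summing over $i\le q$, the lists containing a survivor $z$ have total mass below $2\varepsilon/(dq)$. So even if every block errs, the error is $O(\varepsilon/q)$, not above $\varepsilon$.

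The structural reason is that a uniformly random point lies in only one interval per level, which is only a $2^{i-q}$ fraction of the level-$i$ mass. Shifting weight to higher levels does not obviously help. It concentrates the mass on the $O(1)$ large intervals of each block, which are sampled within $O(1)$ draws from that block and then cover the expensive points.

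Your fallback, nested halving sets $v_1\supset\cdots\supset v_q$ along a path, has the same defect. A uniform survivor lies in $v_t\setminus v_{t+1}$ with probability about $2^{-t-1}$, and then it is contained in only $t$ lists. Its expected cost is therefore about the mass of a single list, again $\Theta(\varepsilon/(dq))$ per block. Since both repairs are left unspecified, the proposal contains no working construction.

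\textbf{The missing idea is to make the rare lists cross rather than nest.} The paper identifies each block with $\{0,1\}^q$ and uses the $q$ coordinate halves $U_{j,r}=((j,\sigma):\sigma_r=1)$. Each has length $2^{q-1}\le T_{\max}$ and mass $8\varepsilon/(dq)$, and the remaining mass goes to a dummy list $(\star)$.

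Seeing $U_{j,s}$ labeled all-zero only forces bit $s$ of the block-$j$ spike to be $0$. So if $u\ge1$ bits of block $j$ are still unobserved, there are $2^u$ consistent spikes plus the empty choice. Exactly $2^{u-1}$ of them put a $1$ inside any fixed unobserved $U_{j,r}$. Hence every unobserved rare list is mislabeled with probability at least $2^{u-1}/(2^u+1)\ge 1/3$, regardless of what else was seen. Linearity over the $dq$ rare lists then gives
\[
\mathbb{E}\bigl[\operatorname{err}_{\mathcal D}(\widehat g^\star)\bigr]\ge\frac{8\varepsilon}{3}\Bigl(1-\frac{8\varepsilon}{dq}\Bigr)^{n}\ge 2\varepsilon
\qquad\text{for } n\le\frac{dq}{32\varepsilon}.
\]
The error never exceeds the total rare mass $8\varepsilon$, so a reverse Markov step gives $\Pr[\operatorname{err}_{\mathcal D}(\widehat g^\star)>\varepsilon]\ge 1/7$.

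Your coupon-collector intuition (about $q$ events per block) is correct. But the coupons must be independent halvings of the candidate spikes, so that each unobserved coupon is still mislabeled by a constant fraction of the survivors. With that structure, you need no per-block count of uncovered points and no concentration across blocks.
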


\begin{proof}[Proof of \Cref{thm:random-erm-mil-lower}]
See \Cref{app:random-erm-mil-lower}.
\end{proof}

\section{Discussion}
\label{sec:discussion}

\paragraph{Expressiveness versus learnability.}
The empirical success of scratchpads, Chain-of-Thought prompting, zero-shot
Chain-of-Thought, self-consistency, and reasoning-action agents motivates the
study of complete intermediate traces
\citep{NyeEtAl2021Scratchpads,WeiEtAl2022CoT,KojimaEtAl2022ZeroShotCoT,
WangEtAl2022SelfConsistency,Yao2023ReAct}.  A complementary theoretical
line studies why intermediate generation can increase the expressive or
computational power of models.  For transformers, Merrill and Sabharwal show
that the amount of intermediate generation changes the computational class that
can be recognized, while Li, Liu, Zhou, and Ma show that Chain-of-Thought
enables constant-depth transformers to perform inherently sequential
computation
\citep{MerrillSabharwal2023CoTExpressivePower,LiLiuZhouMa2024CoTSequential}.
Recent autoregressive or recursive reasoning models, such as PENCIL and
recursive models, further explore ways of organizing long reasoning traces
through reduction, memory reuse, or recursive subproblem calls
\citep{YangSrebroMcAllesterLi2025PENCIL,YangSrebroLi2026RecursiveModels}.
These works ask why intermediate tokens, actions, reductions, or recursive
calls can make a model more powerful.  Our question is orthogonal: assuming the
full trace is supervised and generated by a shared local rule, how many
examples are needed to learn the trace under all-or-nothing exact-trace loss?
\Cref{thm:pac-bound} shows that, in the realizable setting, the answer is
governed by the same local PAC complexity as ordinary next-action learning.  In
particular, the theorem is not tied to text CoT: any deterministic
autoregressive trace formalism that can be represented as a local rule together
with a stopping rule falls under the same sample-complexity bound.

\paragraph{A stronger rule-valued viewpoint.}
\Cref{sec:full-label-mil-application} treats the label-valued full-sequence
lift \(\cG^\star\), where the learner predicts only the realized label sequence
\((g(u_1),\ldots,g(u_T))\).  One can ask for a stronger representation-level
version in which the learner outputs local predictors, or a local rule whose
evaluation produces the full label sequence.  The compression route of
\citet{HannekeMehalelMoran2026} has this flavor: its reconstruction builds a
next-token rule from inflated trace samples and then rolls out this rule to
obtain the full trace predictor.  Thus their method can be viewed as an
improper rule-valued route to full-trace learning, rather than a purely
trace-label argument.

This perspective helps explain the stronger complexity parameters appearing in
that route.  It also suggests a natural open problem: can such rule-valued or
function-valued sequence lifts be learned at the optimal local rate governed
only by \(\DSdim(\cG)\)?  A positive answer, under an appropriate
finite-alphabet or multiclass formalization, would imply the label-valued
full-sequence theorem by evaluation, and would similarly imply the
autoregressive exact-trace theorem by evaluating the predicted local rule along
the rollout states.

\section{Conclusion}
\label{sec:conclusion}

We have shown that, under realizable full-trace supervision, autoregressive
exact-trace learning has no intrinsic sample-complexity penalty for trace
length: the optimal PAC rate is governed by the local next-action class.  The
proof identifies parity dimension as the rollout-stable quantity mediating
between one-inclusion density and DS dimension, while the counterexample in
\Cref{sec:ds-blowup} shows that DS dimension itself is not the invariant
preserved by rollout.  This leaves several
natural extensions beyond the present deterministic realizable setting,
including noisy or agnostic trace supervision, partial-trace feedback,
randomized policies, and computationally efficient learning algorithms.

\section*{Acknowledgments}
The author thanks ChatGPT for assistance in finding the finite example in
\Cref{sec:ds-blowup}; the construction and verification are the author's
responsibility.
This work is supported by DARPA AIQ under Agreement No. HR00112520023, NSF CAREER Award 2544658 and OpenAI Superalignment Fast Grant.

\appendix
\section{Dimension comparisons}
\label{app:dimension-comparisons}

For completeness, we prove the dimension comparisons used in
\Cref{cor:pardim-ds}.  Keeping the proof here lets the main text use the
comparison chain without interrupting the proof of the autoregressive PAC
bound.

\begin{proof}[Proof of \Cref{cor:pardim-ds}]
The upper density bound
\[
        \sup_{n\ge1}\mu_\cH(n)
        \le
        \ParDim(\cH)
\]
is exactly \Cref{thm:parity-density}.

Next, \(\ParDim(\cH)\le\DSdim(\cH)\) follows directly from the definitions.  Suppose \(D\subseteq\cX\) is finite, \(\cH'\subseteq\cH\) is finite and nonempty, and \(\cH'|_D\) is an even pseudo-cube on \(D\).  Let \(0\ne\eta\in\Ftwo^{\cH'|_D}\) be an even certificate, and let
\[
        Q=\supp(\eta)\subseteq\cH'|_D.
\]
For every \(q\in Q\) and every \(x\in D\), the coordinate-deletion fiber of \(\cH'|_D\to\cH'|_{D\setminus\{x\}}\) containing \(q\) has zero \(\eta\)-sum.  Since \(\eta(q)=1\), that fiber contains another point \(q'\in Q\).  Then \(q'\) agrees with \(q\) on \(D\setminus\{x\}\) and differs at \(x\).  Thus \(Q\) is a pseudo-cube on \(D\).

It remains to prove the lower density comparison
\[
        \frac12\DSdim(\cH)\le \sup_{n\ge1}\mu_\cH(n).
\]
This comparison is recalled in \cite[Remarks~1 and~3]{Pabbaraju2026}; we give the elementary proof in the present normalization.  Let \(D\subseteq\cX\) have size \(d\ge1\), and let \(Q\subseteq\cH|_D\) be a finite pseudo-cube on \(D\).  Choose a finite subclass \(\cH'\subseteq\cH\) with \(\cH'|_D=Q\).  Order \(D\) as \(D=\{x_1,\ldots,x_d\}\) and put \(\mathbf x=(x_1,\ldots,x_d)\).  Then \(\cH'|_{\mathbf x}\subseteq\cY^{[d]}\) is obtained from \(Q\subseteq\cY^D\) by relabeling the coordinate \(x_i\) as \(i\), so
\[
        \mu_\cH(d)\ge \dens_{[d]}(\cH'|_{\mathbf x})=\dens_D(Q).
\]
It suffices to prove
\[
        \dens_D(Q)\ge \frac d2.
\]
Fix \(x\in D\).  Every fiber of the restriction map
\[
        Q\to Q|_{D\setminus\{x\}}
\]
has size at least \(2\), because the pseudo-cube property supplies, for every point in the fiber, another point with the same \(D\setminus\{x\}\)-restriction and a different value at \(x\).  Hence
\[
        \left|Q|_{D\setminus\{x\}}\right|
        \le
        \frac{|Q|}{2}.
\]
Using \Cref{def:finite-density},
\[
        \dens_D(Q)
        =
        \frac1{|Q|}\sum_{x\in D}
        \left(|Q|-\left|Q|_{D\setminus\{x\}}\right|\right)
        \ge
        \frac d2.
\]
Taking the supremum over all positive-dimensional DS witnesses gives the lower density comparison.

For binary classes, \(\DSdim(\cH)=\VCdim(\cH)\) by the usual equivalence between binary pseudo-cubes and VC-shattered sets.  Conversely, if \(D\) is VC-shattered by a binary class, then the full binary cube \(\{0,1\}^D\) has the all-one vector as an even certificate: every coordinate-deletion fiber has exactly two points, so its \(\Ftwo\)-sum is zero.  Hence \(\ParDim(\cH)\ge\VCdim(\cH)\), while the already proved inequality \(\ParDim(\cH)\le\DSdim(\cH)=\VCdim(\cH)\) gives equality.
\end{proof}

\section{Uniform random ERM lower bound}
\label{app:random-erm-mil-lower}

This appendix proves \Cref{thm:random-erm-mil-lower}.  The construction is a
full-label multi-instance version of a missing-mass lower bound for uniform
random proper ERM.  It does not contradict \Cref{thm:full-label-mil}, which is
an existence theorem for an optimal learner rather than a guarantee for every
proper ERM tie-breaking rule.

\begin{proof}[Proof of \Cref{thm:random-erm-mil-lower}]
Let \(q=\lfloor\log_2 T_{\max}\rfloor\), so \(q\ge1\), and let
\[
        \cU=\{\star\}\sqcup\{(j,\sigma):j\in[d],\ \sigma\in\{0,1\}^q\}.
\]
For
\[
        a=(a_1,\ldots,a_d)\in(\{0,1\}^q\cup\{\bot\})^d,
\]
define \(g_a:\cU\to\{0,1\}\) by
\[
        g_a(j,\sigma)=1
        \quad\Longleftrightarrow\quad
        a_j=\sigma,
        \qquad
        g_a(\star)=0.
\]
Let
\[
        \cG=\{g_a:a\in(\{0,1\}^q\cup\{\bot\})^d\}.
\]
The all-zero target is \(g_\bot=g_{(\bot,\ldots,\bot)}\).

We first check the dimension.  The class \(\cG\) shatters the \(d\)-point set
\[
        \{(1,0^q),\ldots,(d,0^q)\},
\]
because coordinate \(j\) is labeled one by setting \(a_j=0^q\) and labeled
zero by setting \(a_j=\bot\).  Conversely, no set of size \(d+1\) is
VC-shattered.  If the set contains \(\star\), then \(\star\) is always labeled
zero.  Otherwise, by the pigeonhole principle, two selected points have the
same block index \(j\).  For a fixed \(j\), every \(g_a\) labels at most one
point \((j,\sigma)\) by one, so the all-ones labeling on those two points is
impossible.  Thus \(\VCdim(\cG)=d\), and
\(\DSdim(\cG)=d\) because the class is binary.

For every \(j\in[d]\) and \(r\in[q]\), define the rare list
\[
        U_{j,r}=((j,\sigma):\sigma_r=1),
\]
with an arbitrary fixed ordering.  Its length is \(2^{q-1}\le T_{\max}\).
Also define the dummy list \(U_0=(\star)\).  All labels below are generated by
the target \(g_\bot\), so every observed full-label list is all zero.

Let \(N=dq\) and put \(\rho=8\varepsilon\).  Define a realizable distribution
\(\mathcal D\) by
\[
        \Pr_{\mathcal D}[U=U_{j,r}]=\frac{\rho}{N}
        \quad (j\in[d],\ r\in[q]),
        \qquad
        \Pr_{\mathcal D}[U=U_0]=1-\rho,
\]
with labels \(g_\bot^\star(U)\).  Since
\(\varepsilon\le1/16\), this is a probability distribution.

Let \(S\sim\mathcal D^n\), and let \(\widehat g\) be drawn uniformly from the proper
version space \(V_\cG(S)\).  The target \(g_\bot\) is always in the version
space, so the version space is nonempty.  Conditional on \(S\), consider a rare
list \(U_{j,r}\) that did not appear in the sample.  Let \(B_j(S)\subseteq[q]\)
be the set of rare-list indices \(s\) for which \(U_{j,s}\) did appear in
\(S\).  A spike \(a_j\in\{0,1\}^q\) is consistent with the observed all-zero
lists in block \(j\) if and only if
\[
        (a_j)_s=0
        \qquad\forall s\in B_j(S).
\]
Among the consistent choices for the \(j\)-th component, there is also the
zero choice \(a_j=\bot\).  Since \(r\notin B_j(S)\), if
\[
        u=q-|B_j(S)|\ge1
\]
is the number of unobserved bit coordinates in block \(j\), then the number of
consistent spike choices is \(2^u\), and exactly \(2^{u-1}\) of them have
\((a_j)_r=1\).  Such choices put a one somewhere in \(U_{j,r}\), and therefore
make a full-label mistake on \(U_{j,r}\).  Hence, conditional on \(S\),
\[
        \Pr_{\widehat g\sim V_\cG(S)}
        [\widehat g^\star(U_{j,r})\ne g_\bot^\star(U_{j,r})]
        =
        \frac{2^{u-1}}{1+2^u}
        \ge
        \frac13 .
\]

Taking expectation over the sample and the random ERM draw gives
\[
        \mathbb E[
        \operatorname{err}_{\mathcal D}(\widehat g^\star)]
        \ge
        \frac{\rho}{3}
        \left(1-\frac{\rho}{N}\right)^n.
\]
Indeed, each rare list has mass \(\rho/N\), and the probability that a fixed
rare list is absent from \(n\) iid samples is
\((1-\rho/N)^n\).

If
\[
        n\le \frac{N}{4\rho},
\]
then
\[
        \left(1-\frac{\rho}{N}\right)^n
        \ge
        1-\frac{\rho n}{N}
        \ge
        \frac34,
\]
and therefore
\[
        \mathbb E[
        \operatorname{err}_{\mathcal D}(\widehat g^\star)]
        \ge
        \frac{\rho}{4}
        =
        2\varepsilon.
\]
The error of \(\widehat g^\star\) is supported only on the rare lists, whose
total mass is \(\rho=8\varepsilon\).  Hence
\[
        2\varepsilon
        \le
        \mathbb E[
        \operatorname{err}_{\mathcal D}(\widehat g^\star)]
        \le
        \varepsilon\,
        \Pr[\operatorname{err}_{\mathcal D}(\widehat g^\star)\le\varepsilon]
        +
        8\varepsilon\,
        \Pr[\operatorname{err}_{\mathcal D}(\widehat g^\star)>\varepsilon].
\]
Thus
\[
        \Pr[\operatorname{err}_{\mathcal D}(\widehat g^\star)>\varepsilon]
        \ge
        \frac17
        \ge
        \frac18.
\]
Since \(N=dq\), the implication holds with \(c=1/32\).  This proves the
theorem.
\end{proof}

\section{Online comparison}
\label{app:online-comparison}

This appendix records, in the notation of this paper, the online comparison
implicit in the autoregressive online-learning results of
\citet{DoronAradMehalelMossel2026} and
\citet{BalcanBlumFragkiaLiSharma2026}.  It is not used in the proof of the PAC
upper bound; it is included for completeness and to explain the online row of
\Cref{tab:cot-rates}.  In this setup, a complete-trace online mistake can be
charged to one local next-action mistake.  The simulation below is
oracle-assisted: it uses a halting oracle to decide whether the current local
prediction map terminates from the queried start state.  Thus this appendix is
only an information-theoretic online comparison and is not used in the PAC
proof in the main text.

\begin{definition}[Online mistake bound]
\label{def:online-mistake-bound}
For a class \(\cF\subseteq\cY^\cX\), let \(\Monline(\cF)\) be the smallest
integer \(M\) such that some deterministic full-information online learning
rule makes at most \(M\) mistakes on every finite realizable online stream for
\(\cF\).  If no such \(M\) exists, set \(\Monline(\cF)=\infty\).
\end{definition}

\begin{theorem}[Halting-oracle online rollout simulation]
\label{thm:online-rollout-simulation}
Let \(\cE\) be an environment space, \(\cA\) an action space,
\(\cX=\cE\times\cA^*\), and let
\(\halt:\cX\times\cA^*\to\{0,1\}\) be a stopping predicate.  If
\(\cH\subseteq\cA^\cX\) is pointwise \(\halt\)-halting and the local class
\(\cH\) has a deterministic full-information online learner with mistake bound
\(M\), then there is a halting-oracle-assisted full-information online learner
for \(\Roll_\halt(\cH)\) with mistake bound at most \(M\).
\end{theorem}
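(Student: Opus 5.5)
The plan is to build the trace learner \(G\) by running the local online learner \(\mathcal L\) (mistake bound \(M\)) on a simulated stream of local examples, using the full revealed trace to generate local feedback. At each round, a start state \(x_t\in\cX\) arrives. Let \(f_t:\cX\to\cA\) be the current prediction map of \(\mathcal L\), meaning the map \(z\mapsto\) the label \(\mathcal L\) would predict at \(z\) in its current internal state. We commit to the following prediction rule.
\begin{itemize}[leftmargin=2.2em,itemsep=2pt,topsep=2pt]
\item Query the halting oracle on \((f_t,\halt,x_t)\), i.e.\ ask whether \Cref{alg:autoregressive-rollout} terminates.
\item If it terminates, predict \(\Roll_\halt(f_t)(x_t)\).
\item Otherwise predict an arbitrary string, for instance \(\eps\).
\end{itemize}
Because \(\mathcal L\) is deterministic, \(f_t\) is a well-defined function. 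It need not lie in \(\cH\), which is exactly why the oracle is needed.

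After the prediction, the true trace \(y_t=\Roll_\halt(h^\star)(x_t)=a_1\cdots a_m\) is revealed. If the prediction was correct, we feed nothing to \(\mathcal L\) and keep its state. The paper's class may forbid "no-update" steps, but a deterministic mistake-bounded learner can be made conservative: only mistake rounds are fed to it, and the bound still holds.

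If the prediction was wrong, we locate the first divergence. We walk \(j=0,1,\ldots,m-1\) along the true trace, and at each state \(x_t\cdot a_1\cdots a_j\) we compare \(f_t(x_t\cdot a_1\cdots a_j)\) with \(a_{j+1}\). We claim some \(j<m\) has a disagreement. To see this, suppose \(f_t\) agreed with \(a_{j+1}\) at every such \(j\). Because the rollout of \(f_t\) uses the same \(\halt\) evaluated on the same prefixes, the stopping decisions along \(a_1\cdots a_j\) coincide with those of the \(h^\star\) rollout. So the \(f_t\) rollout would follow exactly the same path and halt after \(y_t\), hence terminate and output \(y_t\). That contradicts the mistake, including the case where the oracle said it does not terminate.

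At the first disagreement index \(j\), we present \(\mathcal L\) with the single local example \((x_t\cdot a_1\cdots a_j,\,a_{j+1})\). This example is labeled by \(h^\star\in\cH\), and \(\mathcal L\) predicted it wrongly, since its prediction is \(f_t\) at that state. We then let \(\mathcal L\) update.

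The simulated local stream consists only of such examples, each realizable by the same \(h^\star\), so it is a realizable stream for \(\cH\). Each trace mistake produces exactly one local mistake of \(\mathcal L\), and so the number of trace mistakes is at most \(M\).

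The main obstacle is the bookkeeping about \(\mathcal L\)'s state. We need to justify querying \(f_t\) at many states within one round without advancing \(\mathcal L\), and feeding only mistake rounds. We handle this by treating \(\mathcal L\) as a deterministic function from histories to predictors: queries are evaluations at a fixed history, and the history is extended only by the chosen mistaken example. The standard conservative-learner argument then shows that the mistake bound \(M\) applies to the extended history sequence.
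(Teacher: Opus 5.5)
Your proposal is correct and follows the paper's proof. You simulate the rollout of the current local prediction map using the halting oracle, and on each trace mistake you feed the local learner the single example at the first state along the true trace where that map disagrees with \(h^\star\), so that every trace mistake becomes a genuine local mistake on a realizable stream. The only difference is cosmetic: your one walk along the true trace covers both the terminating and nonterminating cases, including an empty true trace, whereas the paper treats the terminating case separately via the first divergence of two prefix-incomparable terminal traces.
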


\begin{proof}[Proof of \Cref{thm:online-rollout-simulation}]
Let \(\mathsf{Alg}\) be an online learning rule for the local class \(\cH\)
with mistake bound \(M\).  After the current local history, let
\(\widehat h:\cX\to\cA\) denote the current prediction map induced by
\(\mathsf{Alg}\): on a local instance \(v\in\cX\), \(\widehat h(v)\) is the
label that \(\mathsf{Alg}\) would currently predict at \(v\).

On a trace-level round with start state \(x\in\cX\), use a halting oracle to
decide whether the deterministic rollout of \(\widehat h\) from \(x\) under
\(\halt\) terminates.  If it terminates, simulate it until termination and
predict the finite trace
\[
        \widehat y=\Roll_\halt(\widehat h)(x).
\]
If it does not terminate, predict \(\widehat y=\eps\).

After the true trace \(y=\Roll_\halt(h^\star)(x)\) is revealed, do nothing if
\(\widehat y=y\).  Suppose first that the oracle said that \(\widehat h\)
terminates and the finite prediction \(\widehat y\) is wrong.  Then
\(\widehat y\) and \(y\) are both terminal traces from the same start state
\(x\).  They cannot be strict prefixes of each other: if the shorter string has
already halted, then the same deterministic stopping predicate would also stop
the other rollout at the same emitted suffix.  Hence there is a first position
at which the two action strings differ.  Let \(u\in\cA^*\) be their common
prefix before that first disagreement.  The current local predictor makes a
local mistake at the aligned state \(x\cdot u\):
\[
        \widehat h(x\cdot u)\ne h^\star(x\cdot u).
\]
Feed the labeled local example \((x\cdot u,h^\star(x\cdot u))\) to
\(\mathsf{Alg}\).

It remains to handle the case where the oracle said that \(\widehat h\) does
not terminate, so the trace learner predicted \(\eps\), and this prediction is
wrong.  Then \(y\ne\eps\): if \(y=\eps\), then \(\halt(x,\eps)=1\), so every
rollout from \(x\), including the rollout of \(\widehat h\), would terminate
immediately.  Write the nonempty true trace as \(y=a_1\cdots a_T\).  For
\(t\in[T]\), let \(u_t=a_1\cdots a_{t-1}\) be the prefix before the \(t\)-th
true action.  There must be a first \(t\) such that
\[
        \widehat h(x\cdot u_t)\ne h^\star(x\cdot u_t).
\]
Indeed, if \(\widehat h\) agreed with \(h^\star\) at all prefix states
\(x\cdot u_t\) along the true trace, then \(\widehat h\) would emit the entire
string \(y\).  Since \(\halt(x,y)=1\), it would then terminate after \(y\),
contradicting the halting oracle's nontermination answer.  Feed the labeled
local example \((x\cdot u_t,h^\star(x\cdot u_t))\) to \(\mathsf{Alg}\).

Every complete-trace online mistake causes exactly one genuine local online
mistake update by the current local prediction map.  Therefore the
halting-oracle-assisted complete-trace online learner makes at most \(M\)
mistakes on every realizable online stream, proving the claim.
\end{proof}

\begin{theorem}[Multiclass Littlestone theorem {\citep{Littlestone1988,DanielySabatoBenDavidShalevShwartz2015}}]
\label{thm:multiclass-littlestone}
For every finite-label class \(\cF\subseteq\cY^\cX\), the optimal
full-information online mistake bound equals the multiclass Littlestone
dimension:
\[
        \Monline(\cF)=\Ldim(\cF).
\]
\end{theorem}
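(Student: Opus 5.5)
The plan is to prove the two inequalities \(\Monline(\cF)\ge\Ldim(\cF)\) and \(\Monline(\cF)\le\Ldim(\cF)\) separately. I use the standard multiclass Littlestone trees of \citet{DanielySabatoBenDavidShalevShwartz2015}. Such a tree of depth \(d\) has each internal node labeled by an instance \(x\in\cX\). Each internal node has two outgoing edges labeled by two \emph{distinct} labels in \(\cY\). Every root-to-leaf path must be realized by some \(f\in\cF\), meaning \(f\) gives every node's instance the label of the edge taken. \(\Ldim(\cF)\) is the supremum of the depths of such trees, with the usual convention for the empty class.

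\emph{Lower bound.} Fix any deterministic online learner and a shattered tree of depth \(d\). The adversary starts at the root. At the current node with instance \(x\), it presents \(x\) and sees the learner's prediction \(\widehat y\). The two edge labels are distinct, so at least one of them differs from \(\widehat y\). The adversary reveals that label and moves along the corresponding edge. After \(d\) rounds it reaches a leaf. The stream is realizable, because the function realizing that root-to-leaf path is consistent with every revealed label. The learner has erred in every round, so it makes \(d\) mistakes. If \(\Ldim(\cF)=\infty\), this gives arbitrarily many mistakes, hence \(\Monline(\cF)=\infty\).

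\emph{Upper bound.} Assume \(\Ldim(\cF)<\infty\) and run the standard optimal algorithm (SOA). It maintains the version space \(V\) of hypotheses consistent with the stream so far. For \(x\in\cX\) and \(y\in\cY\), write \(V_{x,y}=\{f\in V:f(x)=y\}\). On input \(x\), SOA predicts a label maximizing \(\Ldim(V_{x,y})\). Since \(\cY\) is finite and the values are integers bounded by \(\Ldim(V)\), a maximizer exists. The key claim is that at most one label \(y\) can satisfy \(\Ldim(V_{x,y})\ge\Ldim(V)\). Suppose two distinct labels \(y\ne y'\) both did, with \(\Ldim(V)=k\). Place \(x\) at a new root with edges labeled \(y\) and \(y'\), and hang depth-\(k\) trees for \(V_{x,y}\) and \(V_{x,y'}\) below it. This gives a tree of depth \(k+1\) shattered by \(V\), a contradiction. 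Now suppose SOA errs, with true label \(y^\star\ne\widehat y\). The prediction \(\widehat y\) maximizes \(\Ldim(V_{x,\cdot})\), so if \(\Ldim(V_{x,y^\star})\ge\Ldim(V)\) held, then \(\widehat y\) would satisfy it too, and we would have two labels violating the claim. Hence the new version space \(V_{x,y^\star}\) has strictly smaller Ldim. By realizability the version space stays nonempty, so its Ldim stays at least \(0\). Therefore at most \(\Ldim(\cF)\) mistakes occur.

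The main obstacle is the tree-gluing step in the upper bound. It needs care with depth conventions, in particular allowing a subtree to have depth exactly \(k\) when \(V_{x,y}\) has Ldim \(k\), and with the empty-class and Ldim \(0\) conventions. I would check this by a short induction on \(k\). Finiteness of \(\cY\) is used only to guarantee that the argmax in SOA exists, and it is harmless here.
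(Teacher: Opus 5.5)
Your proposal is correct and is the standard argument from the cited sources: a lower bound from an adversary walking down a shattered tree, and an upper bound from the Standard Optimal Algorithm. The paper itself gives no proof of this theorem; it only invokes the cited references. The convention you flagged should be fixed as \(\Ldim(\emptyset)=-1\), or equivalently you can restrict the SOA argmax to labels with \(V_{x,y}\ne\emptyset\). With \(\Ldim(\emptyset)=0\) your key claim already fails for a singleton version space, and no induction on \(k\) repairs that. With \(-1\), any label attaining \(\Ldim(V_{x,y})\ge\Ldim(V)\ge0\) has a nonempty fiber that genuinely shatters a depth-\(\Ldim(V)\) tree, so the gluing is immediate. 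Finally, the argmax exists even for infinite \(\cY\), because the values lie in the finite set \(\{-1,\ldots,\Ldim(V)\}\).
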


Combining \Cref{thm:multiclass-littlestone} with
\Cref{thm:online-rollout-simulation} gives an information-theoretic online
comparison by allowing a halting oracle.  In ordinary algorithmic settings the
same simulation can be implemented under additional assumptions, such as a
bounded horizon or proper predictors whose rollouts are guaranteed to halt.  In
finite-label or bounded-horizon settings where the standard multiclass online
mistake-bound theorem and online-to-batch conversion are applied, this gives
the online route summarized in \Cref{tab:cot-rates}.

\section{DS dimension is not rollout-stable}
\label{sec:ds-blowup}

This appendix gives the finite example promised in
\Cref{thm:ds-rollout-increase}: an aligned next-action class satisfying
\[
        \DSdim(\cH)=2
        \quad\text{but}\quad
        \DSdim(\Roll_\halt(\cH))\ge3.
\]
The example is not used in the proof of the PAC upper bound; it explains why
the proof uses parity dimension rather than DS dimension itself.
As recorded in \Cref{cor:pardim-ds-separation}, the same rollout class also
separates the two dimensions: its parity dimension is \(2\), while its
DS dimension is at least \(3\).

\begin{lemma}[Elementary pseudo-cube tests]
\label{lem:pseudocube-tests}
The following facts hold.
\begin{enumerate}[label=(\roman*)]
\item If \(Q\subseteq \cY^D\) is a pseudo-cube on \(D\) and \(B\subseteq D\), then the projected set
\[
        Q|_B=\{q|_B:q\in Q\}
\]
is a pseudo-cube on \(B\).
\item Let \(Q\subseteq L\times R\times C\) be a \(3\)-pseudo-cube.  For every \(\alpha\in L\), if
\[
        Q_\alpha=\{(r,c)\in R\times C:(\alpha,r,c)\in Q\}
\]
is nonempty, then \(Q_\alpha\) is a \(2\)-pseudo-cube on the row--column coordinates.
\item Let \(S\subseteq R\times C\) be finite.  Form the bipartite row--column graph with left vertex set \(R\), right vertex set \(C\), and edge set \(S\).  If this graph is a forest, then \(S\) contains no \(2\)-pseudo-cube.
\end{enumerate}
\end{lemma}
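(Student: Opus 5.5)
The plan is to prove the three parts separately, each directly from \Cref{def:ds-dimension}.

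For (i), fix \(q|_B\in Q|_B\) with representative \(q\in Q\), and a coordinate \(x\in B\). The pseudo-cube property of \(Q\) on \(D\) gives \(q'\in Q\) that differs from \(q\) exactly at \(x\). Then \(q'|_B\) differs from \(q|_B\) exactly at \(x\), and \(Q|_B\) is finite and nonempty. This settles (i).

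For (ii), take \((r,c)\in Q_\alpha\), so \((\alpha,r,c)\in Q\). Applying the pseudo-cube property of \(Q\) in the row direction gives \((\alpha,r',c)\in Q\) with \(r'\ne r\), and this point lies in \(Q_\alpha\). The column direction is handled the same way. The point is that neighbors in the row and column directions keep the first coordinate \(\alpha\), so they remain inside the slice.

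For (iii), argue by contradiction. Suppose \(P\subseteq S\) is a \(2\)-pseudo-cube, and view \(P\) as a nonempty edge set in the row--column bipartite graph. An edge \((r,c)\in P\) needs a row-neighbor \((r',c)\in P\) with \(r'\ne r\), so the column vertex \(c\) has degree at least \(2\) in \(P\). Symmetrically, the column-neighbor shows that \(r\) has degree at least \(2\). Hence every vertex incident to \(P\) has degree at least \(2\) in the finite subgraph \(P\). A finite nonempty graph with minimum degree at least \(2\) contains a cycle: walk along edges without immediately backtracking until a vertex repeats. That cycle lies in \(S\), contradicting the assumption that the graph is a forest. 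Equivalently, a forest with an edge has a leaf, and the edge at that leaf has no neighbor in one of the two directions.

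None of these steps is a real obstacle. The only point needing care is in (iii): the degree condition must be checked on the sub-pseudo-cube \(P\), not on all of \(S\), because a forest \(S\) may have vertices of high degree.
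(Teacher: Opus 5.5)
Your proposal is correct and follows the paper's proof step for step. Parts (i) and (ii) restrict the flip neighbor to \(B\), or keep it inside the \(\alpha\)-slice. Part (iii) shows that every vertex touched by a hypothetical \(2\)-pseudo-cube has degree at least two within it, which forces a cycle in the forest; your caveat about checking degrees in \(P\) rather than \(S\) matches the paper's ``degree in the subgraph induced by \(Q\).''
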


\begin{proof}[Proof of \Cref{lem:pseudocube-tests}]
For (i), fix \(u=q|_B\in Q|_B\) and a coordinate \(x\in B\).  Since \(Q\) is a pseudo-cube, there exists \(q'\in Q\) such that \(q'(x)\ne q(x)\) and \(q'\) agrees with \(q\) on \(D\setminus\{x\}\).  Then \(q'|_B\) differs from \(u\) at \(x\) and agrees with \(u\) on \(B\setminus\{x\}\).

For (ii), fix \((r,c)\in Q_\alpha\).  Since \(Q\) is a \(3\)-pseudo-cube, the point \((\alpha,r,c)\) has a flip in the \(R\)-coordinate and a flip in the \(C\)-coordinate while the \(L\)-coordinate remains \(\alpha\).  Thus \(Q_\alpha\) has a row flip and a column flip at \((r,c)\).  This is exactly the \(2\)-pseudo-cube condition on \(R\times C\).

For (iii), a \(2\)-pseudo-cube \(Q\subseteq S\) would be a nonempty edge set such that every edge has another edge sharing its row and another edge sharing its column.  Hence every vertex incident to an edge of \(Q\) has degree at least two in the subgraph induced by \(Q\).  A finite graph with minimum degree at least two contains a cycle, contradicting that the ambient row--column graph is a forest.
\end{proof}

\begin{proof}[Proof of \Cref{thm:ds-rollout-increase}]
\medskip\noindent\emph{Construction.}
Let
\[
\begin{aligned}
\cA_{\mathrm{sgn}}&=\{\mathsf P,\mathsf N\},&
\cA_{\mathsf P}&=\{\mathsf P_1,\mathsf P_2\},&
\cA_{\mathsf N}&=\{\mathsf N_1,\mathsf N_2\},\\
\cA_{\mathrm{row}}&=\{R_0,R_1,R_2\},&
\cA_{\mathrm{col}}&=\{C_0,C_1,C_2,C_3\},&
\cA_{\mathrm{off}}&=\{\bot\}.
\end{aligned}
\]
The action alphabet is the disjoint union
\[
        \cA=
        \cA_{\mathrm{sgn}}\sqcup\cA_{\mathsf P}\sqcup\cA_{\mathsf N}
        \sqcup\cA_{\mathrm{row}}\sqcup\cA_{\mathrm{col}}
        \sqcup\cA_{\mathrm{off}}.
\]
The action \(\bot\) is an ordinary off-branch action.  It is not a stopping token, and the root rollouts used below never emit it.
Let \(\cE=\{1,2,3\}\), let \(\cX=\cE\times\cA^*\), and use the stopping predicate
\[
\begin{aligned}
        \halt((e,w),y)=0
        \quad\Longleftrightarrow\quad&
        \bigl(w=\eps,\ e=1,\ |y|<2\bigr)\\
        &\text{or }
        \bigl(w=\eps,\ e\in\{2,3\},\ |y|<1\bigr).
\end{aligned}
\]
Thus only rollouts started from root states \(w=\eps\) emit new actions; non-root start states halt immediately.
Define a sign map
\[
        \beta(\mathsf P_1)=\beta(\mathsf P_2)=\mathsf P,
        \qquad
        \beta(\mathsf N_1)=\beta(\mathsf N_2)=\mathsf N.
\]

We first define the index set of hypotheses.  For \(r\in\{0,1,2\}\) and \(s\in\{0,1,2,3\}\), let \(T_{r,s}\subseteq\{\mathsf P_1,\mathsf P_2,\mathsf N_1,\mathsf N_2\}\) be given by
\[
\begin{array}{c|cccc}
 & s=0&s=1&s=2&s=3\\ \hline
r=0&\mathsf N_1\mathsf N_2&
      \mathsf P_1\mathsf P_2\mathsf N_1\mathsf N_2&
      \mathsf P_1\mathsf P_2&
      \mathsf P_2\mathsf N_2\\
r=1&\mathsf P_1\mathsf P_2&
      \mathsf N_1\mathsf N_2&
      \mathsf P_1\mathsf N_2&
      \mathsf P_2\mathsf N_1\\
r=2&\mathsf P_1\mathsf P_2\mathsf N_1\mathsf N_2&
      \mathsf P_1\mathsf P_2&
      \mathsf P_2\mathsf N_2&
      \mathsf N_1\mathsf N_2.
\end{array}
\]
For every triple \((\ell,r,s)\) with \(\ell\in T_{r,s}\), define a next-action rule
\[
        h_{\ell,r,s}:\cX\to\cA.
\]
The class is
\[
        \cH=\{h_{\ell,r,s}:\ell\in T_{r,s}\}.
\]
Thus \(\cH\) has \(28\) hypotheses.

The rule \(h_{\ell,r,s}\) is defined as follows.  In environment \(1\), it first emits the block \(\beta(\ell)\), then emits \(\ell\), and then the stopping predicate halts:
\[
        h_{\ell,r,s}(1,\eps)=\beta(\ell),
\]
\[
        h_{\ell,r,s}(1,\mathsf P)=
        \begin{cases}
        \ell,&\beta(\ell)=\mathsf P,\\
        \bot,&\beta(\ell)=\mathsf N,
        \end{cases}
        \qquad
        h_{\ell,r,s}(1,\mathsf N)=
        \begin{cases}
        \bot,&\beta(\ell)=\mathsf P,\\
        \ell,&\beta(\ell)=\mathsf N.
        \end{cases}
\]
In environments \(2\) and \(3\), it emits the row and column labels:
\[
        h_{\ell,r,s}(2,\eps)=R_r,
\]
\[
        h_{\ell,r,s}(3,\eps)=C_s.
\]
All unspecified states output \(\bot\).  The stopping rule above ensures pointwise halting, and the root rollouts used below never query a state whose returned action is \(\bot\).  Therefore
\[
        \Roll_\halt(h_{\ell,r,s})(1,\eps)=\beta(\ell)\ell,
        \qquad
        \Roll_\halt(h_{\ell,r,s})(2,\eps)=R_r,
        \qquad
        \Roll_\halt(h_{\ell,r,s})(3,\eps)=C_s.
\]
The construction separates what a rollout can see from what a single base query can see.  In environment \(1\), the rollout sees both the sign \(\beta(\ell)\) and then the refined token \(\ell\); a base query can ask only one of the states \((1,\eps),(1,\mathsf P),(1,\mathsf N)\) at a time.  Environments \(2\) and \(3\) reveal the row \(r\) and column \(s\).

The intended reading is a two-level sign code.  The symbols \(\mathsf P,\mathsf N\) are coarse signs:
\[
        \mathsf P\text{-block}=\{\mathsf P_1,\mathsf P_2\},
        \qquad
        \mathsf N\text{-block}=\{\mathsf N_1,\mathsf N_2\}.
\]
The state \((1,\eps)\) reveals only the coarse sign.  The state \((1,\mathsf P)\) reveals the refined token only inside the \(\mathsf P\)-block and returns the off-branch action \(\bot\) on the \(\mathsf N\)-block; the state \((1,\mathsf N)\) does the symmetric thing.  Thus a single base query sees only one layer of this two-level code, while a rollout from environment \(1\) first sees the sign and then follows the matching branch to recover the refined token.

The table \(T_{r,s}\) is chosen to have two opposite behaviors.  For the rollout lower bound, the full table
\[
        P=\{(\ell,r,s):\ell\in T_{r,s}\}
\]
has enough local redundancy: every cell contains at least two refined tokens, and for each fixed refined token \(\ell\), its row-column support has no row or column leaf.  This gives flips in the refined-token, row, and column coordinates.  For the base upper bound, every possible single-layer view is acyclic.  More precisely, the mixed-sign cells, the cells containing both \(\mathsf P_1,\mathsf P_2\), and the cells containing both \(\mathsf N_1,\mathsf N_2\) are all forests as row-column graphs.  Any hypothetical base \(3\)-pseudo-cube must place one first-coordinate slice inside one of these forests, where a row-column \(2\)-pseudo-cube cannot live.  The table is therefore a masking gadget: rollout unmasks a \(3\)-dimensional pseudo-cube, but every base view remains only \(2\)-dimensional.

\medskip\noindent\emph{The rollout class has a \(3\)-pseudo-cube.}
We prove first that \(\DSdim(\Roll_\halt(\cH))\ge3\).  Let
\[
        x_1=(1,\eps),
        \qquad
        x_2=(2,\eps),
        \qquad
        x_3=(3,\eps).
\]
The restriction of the rollout class to \(\{x_1,x_2,x_3\}\) contains, after injectively relabeling the three coordinates, the set
\[
        P=\{(\ell,r,s):\ell\in T_{r,s}\}
        \subseteq \{\mathsf P_1,\mathsf P_2,\mathsf N_1,\mathsf N_2\}\times\{0,1,2\}\times\{0,1,2,3\}.
\]
We claim that \(P\) is a \(3\)-pseudo-cube.  Every cell \(T_{r,s}\) has size at least two, so every point \((\ell,r,s)\in P\) has a flip in the refined-token coordinate.  For flips in the other two coordinates, write
\[
        G_\ell=\{(r,s):\ell\in T_{r,s}\}.
\]
From the displayed table,
\[
\begin{aligned}
G_{\mathsf P_1}&=\{(0,1),(0,2),(1,0),(1,2),(2,0),(2,1)\},\\
G_{\mathsf P_2}&=\{(0,1),(0,2),(0,3),(1,0),(1,3),(2,0),(2,1),(2,2)\},\\
G_{\mathsf N_1}&=\{(0,0),(0,1),(1,1),(1,3),(2,0),(2,3)\},\\
G_{\mathsf N_2}&=\{(0,0),(0,1),(0,3),(1,1),(1,2),(2,0),(2,2),(2,3)\}.
\end{aligned}
\]
In each \(G_\ell\), every row that appears has degree at least two and every column that appears has degree at least two.  Hence each \((\ell,r,s)\in P\) has an \(r\)-coordinate flip and an \(s\)-coordinate flip.  Thus \(P\) is a \(3\)-pseudo-cube, and
\[
        \DSdim(\Roll_\halt(\cH))\ge3.
\]

\medskip\noindent\emph{The base class has a \(2\)-pseudo-cube.}
Next we show \(\DSdim(\cH)=2\).  For the lower bound, restrict \(\cH\) to the two base states
\[
        x_0=(1,\eps),
        \qquad
        x_R=(2,\eps).
\]
The four hypotheses
\[
        h_{\mathsf P_1,0,1},\quad h_{\mathsf P_1,1,0},\quad
        h_{\mathsf N_1,0,0},\quad h_{\mathsf N_1,1,1}
\]
realize the four value pairs
\[
        (\mathsf P,R_0),\quad (\mathsf P,R_1),\quad
        (\mathsf N,R_0),\quad (\mathsf N,R_1).
\]
These form a \(2\)-pseudo-cube, so \(\DSdim(\cH)\ge2\).

\medskip\noindent\emph{No base \(3\)-pseudo-cube.}
It remains to prove \(\DSdim(\cH)\le2\).  The only nonconstant base states are
\[
        x_0=(1,\eps),
        \qquad
        x_{\mathsf P}=(1,\mathsf P),
        \qquad
        x_{\mathsf N}=(1,\mathsf N),
        \qquad
        x_R=(2,\eps),
        \qquad
        x_C=(3,\eps).
\]
A positive-dimensional pseudo-cube cannot use a constant coordinate, so every three-dimensional base witness would have to use three states among these five.  Moreover, by \Cref{lem:pseudocube-tests}(i), any base pseudo-cube of dimension at least \(3\) restricts to a \(3\)-pseudo-cube on any three of its coordinates, so ruling out three-state witnesses also rules out every witness of dimension greater than three.  It therefore suffices to show that no three of these five states carry a \(3\)-pseudo-cube.

The row-column obstruction used below is the following.  After relabeling the two row-column coordinates \(x_R,x_C\) by \((r,s)\), let \(E\) be the mixed-sign cells, let \(F_{\mathsf P}\) be the cells containing both \(\mathsf P_1\) and \(\mathsf P_2\), and let \(F_{\mathsf N}\) be the cells containing both \(\mathsf N_1\) and \(\mathsf N_2\).  Explicitly,
\[
\begin{aligned}
E&=\{(0,1),(0,3),(1,2),(1,3),(2,0),(2,2)\},\\
F_{\mathsf P}&=\{(0,1),(0,2),(1,0),(2,0),(2,1)\},\\
F_{\mathsf N}&=\{(0,0),(0,1),(1,1),(2,0),(2,3)\}.
\end{aligned}
\]
Equivalently, the three projections have the following row-column tables, where a bullet marks an included cell:
\begin{center}
\captionsetup{type=table,font=scriptsize}
\begin{minipage}[t]{0.31\linewidth}
\centering
{\scriptsize
\[
\setlength{\arraycolsep}{2.4pt}
\begin{array}{c|cccc}
E & s=0&s=1&s=2&s=3\\ \hline
r=0& &\bullet& &\bullet\\
r=1& & &\bullet&\bullet\\
r=2&\bullet& &\bullet&
\end{array}
\]
\captionof{table}{Mixed-sign cells \(E\): cells supporting a sign flip or a \(\bot\)-slice flip with \((r,s)\) fixed.}
\label{tab:counterexample-mixed-sign}
}
\end{minipage}
\hfill
\begin{minipage}[t]{0.31\linewidth}
\centering
{\scriptsize
\[
\setlength{\arraycolsep}{2.4pt}
\begin{array}{c|cccc}
F_{\mathsf P} & s=0&s=1&s=2&s=3\\ \hline
r=0& &\bullet&\bullet&\\
r=1&\bullet& & &\\
r=2&\bullet&\bullet& &
\end{array}
\]
\captionof{table}{Within-\(\mathsf P\) cells \(F_{\mathsf P}\): cells supporting a non-\(\bot\) \(x_{\mathsf P}\)-flip.}
\label{tab:counterexample-p-block}
}
\end{minipage}
\hfill
\begin{minipage}[t]{0.31\linewidth}
\centering
{\scriptsize
\[
\setlength{\arraycolsep}{2.4pt}
\begin{array}{c|cccc}
F_{\mathsf N} & s=0&s=1&s=2&s=3\\ \hline
r=0&\bullet&\bullet& &\\
r=1& &\bullet& &\\
r=2&\bullet& & &\bullet
\end{array}
\]
\captionof{table}{Within-\(\mathsf N\) cells \(F_{\mathsf N}\): cells supporting a non-\(\bot\) \(x_{\mathsf N}\)-flip.}
\label{tab:counterexample-n-block}
}
\end{minipage}
\end{center}
The row-column graphs in \Cref{tab:counterexample-mixed-sign,tab:counterexample-p-block,tab:counterexample-n-block} are forests: \(E\) is the path
\[
        s=1\;--\;r=0\;--\;s=3\;--\;r=1\;--\;s=2\;--\;r=2\;--\;s=0,
\]
\(F_{\mathsf P}\) is the path
\[
        s=2\;--\;r=0\;--\;s=1\;--\;r=2\;--\;s=0\;--\;r=1,
\]
and \(F_{\mathsf N}\) is the path
\[
        r=1\;--\;s=1\;--\;r=0\;--\;s=0\;--\;r=2\;--\;s=3.
\]
By \Cref{lem:pseudocube-tests}, none of the three obstruction graphs in \Cref{tab:counterexample-mixed-sign,tab:counterexample-p-block,tab:counterexample-n-block} contains a \(2\)-pseudo-cube.  The remaining proof only uses this fact.

First, no such witness can use two of \(x_0,x_{\mathsf P},x_{\mathsf N}\).  Indeed, the projections to the three pairs
\[
        (x_0,x_{\mathsf P}),\qquad (x_0,x_{\mathsf N}),\qquad (x_{\mathsf P},x_{\mathsf N})
\]
are respectively
\[
        \{(\mathsf P,\mathsf P_1),(\mathsf P,\mathsf P_2),(\mathsf N,\bot)\},
\]
\[
        \{(\mathsf P,\bot),(\mathsf N,\mathsf N_1),(\mathsf N,\mathsf N_2)\},
\]
and
\[
        \{(\mathsf P_1,\bot),(\mathsf P_2,\bot),(\bot,\mathsf N_1),(\bot,\mathsf N_2)\}.
\]
Each of these three row--column graphs is a forest, so by \Cref{lem:pseudocube-tests} none contains a \(2\)-pseudo-cube.  Since projections of pseudo-cubes are pseudo-cubes by \Cref{lem:pseudocube-tests}, no \(3\)-pseudo-cube can use two of \(x_0,x_{\mathsf P},x_{\mathsf N}\).  Hence a three-dimensional base witness must use one of
\[
        \{x_0,x_R,x_C\},
        \qquad
        \{x_{\mathsf P},x_R,x_C\},
        \qquad
        \{x_{\mathsf N},x_R,x_C\}.
\]
We rule out these three cases.

For \(\{x_0,x_R,x_C\}\), the first coordinate is the sign value \(\mathsf P\) or \(\mathsf N\).  Let \(Q\) be a hypothetical \(3\)-pseudo-cube on these three states, and take a first-coordinate value \(\alpha\in\{\mathsf P,\mathsf N\}\) that appears in \(Q\).  By \Cref{lem:pseudocube-tests}, the \(\alpha\)-slice of \(Q\), viewed on the row-column coordinates \(x_R,x_C\), is a \(2\)-pseudo-cube.  But every point in this slice must also be able to flip the \(x_0\)-coordinate while keeping \((r,s)\) fixed.  Thus each row-column pair in the slice must be a mixed-sign cell, so the slice is contained in the forest in \Cref{tab:counterexample-mixed-sign}, impossible.

For \(\{x_{\mathsf P},x_R,x_C\}\), the first coordinate takes values \(\mathsf P_1,\mathsf P_2,\bot\).  Let \(Q\) be a hypothetical \(3\)-pseudo-cube.  If the \(\bot\)-slice is nonempty, then it is a row-column \(2\)-pseudo-cube by \Cref{lem:pseudocube-tests}.  A \(\bot\)-value at \(x_{\mathsf P}\) comes from an \(\mathsf N\)-token, and flipping the \(x_{\mathsf P}\)-coordinate requires a \(\mathsf P\)-token in the same cell.  Hence the \(\bot\)-slice is contained in the mixed-sign forest in \Cref{tab:counterexample-mixed-sign}, impossible.

Therefore \(Q\) has no \(\bot\)-points.  Choose a surviving value \(v\in\{\mathsf P_1,\mathsf P_2\}\).  The \(v\)-slice is again a row-column \(2\)-pseudo-cube.  To see where this slice lives, take any point in it, say with row-column pair \((r,s)\).  This point has the form \((v,R_r,C_s)\).  Since \(Q\) is a \(3\)-pseudo-cube, this point must have an \(x_{\mathsf P}\)-coordinate flip while the \(x_R\)- and \(x_C\)-coordinates stay equal to \(R_r\) and \(C_s\).  The flipped value cannot be \(\bot\), because \(Q\) has no \(\bot\)-points.  Thus the same cell \(T_{r,s}\) must contain both \(\mathsf P_1\) and \(\mathsf P_2\).  Hence the \(v\)-slice is contained in the within-\(\mathsf P\) forest in \Cref{tab:counterexample-p-block}, impossible.

The case \(\{x_{\mathsf N},x_R,x_C\}\) is symmetric.  A nonempty \(\bot\)-slice would be a row-column \(2\)-pseudo-cube contained in the mixed-sign forest in \Cref{tab:counterexample-mixed-sign}: here the \(\bot\)-value at \(x_{\mathsf N}\) comes from a \(\mathsf P\)-token, and the \(x_{\mathsf N}\)-coordinate flip requires an \(\mathsf N\)-token in the same cell.  Thus no \(\bot\)-point remains.  For any surviving value \(v\in\{\mathsf N_1,\mathsf N_2\}\), each point \((v,R_r,C_s)\) in the \(v\)-slice must flip its \(x_{\mathsf N}\)-coordinate without changing \(R_r,C_s\).  Since \(\bot\)-points are absent, that flip must use the other \(\mathsf N\)-token in the same cell.  Hence the \(v\)-slice is contained in the within-\(\mathsf N\) forest in \Cref{tab:counterexample-n-block}, again impossible.

We have ruled out every possible three-state base witness.  Therefore \(\DSdim(\cH)\le2\).  Together with the lower bound, \(\DSdim(\cH)=2\), while \(\DSdim(\Roll_\halt(\cH))\ge3\).
\end{proof}

\end{document}